\documentclass{article}

\usepackage[final]{arxiv}

\usepackage[utf8]{inputenc} % allow utf-8 input
\usepackage[T1]{fontenc}    % use 8-bit T1 fonts
\usepackage{hyperref}       % hyperlinks
\usepackage{url}            % simple URL typesetting
\usepackage{booktabs}       % professional-quality tables
\usepackage{amssymb}
\usepackage{algpseudocode}
\usepackage{algorithm}
\usepackage{amsfonts}       % blackboard math symbols
\usepackage{nicefrac}       % compact symbols for 1/2, etc.
\usepackage{microtype}      % microtypography
\usepackage{xcolor}         % colors
\usepackage[table]{xcolor}
\usepackage{graphicx}
\usepackage{booktabs}
\usepackage{multirow}
\usepackage{diagbox}
\usepackage{makecell}
\usepackage{amsmath, amssymb}

\usepackage{amsthm}
\newtheorem{theorem}{Theorem}[section]
\newtheorem{corollary}{Corollary}[theorem]
\newtheorem{lemma}[theorem]{Lemma}
\newtheorem{proposition}[theorem]{Proposition}
\newtheorem{remark}[theorem]{Remark}

\theoremstyle{remark}

\title{A Principled Self-Referenced Early Stopping Approach for Deep Image Prior}

\author{Chaoyan Huang\textsuperscript{1,2}, Cheng-Han Huang\textsuperscript{1}, Ismail R. Alkhouri\textsuperscript{3,4}, ~Rongrong Wang\textsuperscript{1,5} \\ 
  \textsuperscript{1}Department of Computational Mathematics, Science, \& Engineering, Michigan State University \\
  \textsuperscript{2}Department of Electrical Engineering and Computer Science, University of Michigan\\
  \textsuperscript{3}X Computational Physics Division, Los Alamos National Laboratory \\
 \textsuperscript{4}Michigan Institute for Computational Discovery \& Engineering, University of Michigan\\
 \textsuperscript{5}Mathematical Sciences, Michigan State University \\
}

\begin{document}
\maketitle

\begin{abstract}

Recently, Deep Image Prior (DIP) has demonstrated strong capabilities for solving inverse imaging problems (IIPs) by optimizing a randomly initialized convolutional neural network in a training-data-free regime. However, DIP suffers from overfitting to noisy measurements due to network over-parameterization, making early stopping (ES) essential. The most successful ES method tracks fluctuations in the running variance of the network output to detect overfitting. However, in many applications, these fluctuations may appear prematurely, leading to unstable reconstructions. In this paper, we first show that nearly optimal DIP early stopping can be achieved when two independent noisy copies of the degraded image are available. Motivated by this observation, and since obtaining two fully independent copies is infeasible, we propose an overfitting detection framework based on constructing pseudo self-referenced images, resulting in three IIP-specific algorithms. Our approach is further supported by theoretical results on single-reference validation, pseudo-validation estimation, and the impact of shared noise. Across different IIPs, ranging from natural image restoration to medical image reconstruction, and under varying noise levels and noise types, our methods consistently outperform existing DIP early stopping approaches, all without requiring an accurate estimate of the noise level. %without requiring knowledge of the measurement noise characteristics. %\textcolor{red}{broad applicability...}

%based on channel similarity reference, mask reference, and augmented-channel reference. 
%, demonstrating the effectiveness of the proposed framework.

  % Recently, Deep Image Prior (DIP) has demonstrated strong capabilities for imaging inverse problems (IIPs) by optimizing a randomly initialized convolutional neural network in a training-data-free regime. However, DIP suffers from overfitting to noisy measurements due to network over-parameterization. One approach to mitigate this issue is early stopping (ES), which relies on the running variance of the network output to define an ``inexact'' stopping criterion. This approach has two limitations: (i) the stopping rule is inexact, and (ii) compared to regularization methods, ES may limit exploration of the search space. To address these limitations, we propose the Normalized Residual Spectral Moment (NRSM), a metric that can exactly detect overfitting. Leveraging NRSM, computed at each iteration, we introduce the Fourier-Aware Resetting Algorithm (FARA-DIP), which employs a simple yet effective input resetting strategy to improve exploration. FARA-DIP is computationally efficient, requiring neither network reparameterization nor explicit regularization, introduces fewer hyper-parameters (as resets are automatically determined via spectral detection), and only requires a frequency transform of the network output per iteration. Across two linear IIPs (denoising and MRI) and one non-linear IIP (non-linear deblurring), FARA-DIP achieves competitive performance in both reconstruction quality and runtime.
\end{abstract}

%%%%%%%%%%%%%%%%%%%%%%%%%%%%%%%%%%%%%%%%%%%%%%%%%%%%%%%%%%%%%%%%%%%%%%%%%%%%%%%%%%%%%%%%%%%%%%%%%%%%%%%%%%%%%%%%%%%%%%%%
\section{Introduction}

Inverse Imaging Problems (IIPs) appear in several science and engineering applications, including medical imaging \citep{chung2024decomposed,arridge2019solving}, remote sensing \citep{levis2022gravitationally,zhu2017deep}, and computer vision \cite{wang2024dmplug,zhang2017beyond}. IIPs are often ill-posed \cite{engl1996regularization,mccann2017convolutional,bertero1998introduction}, making exact recovery challenging and requiring advanced reconstruction techniques.

In recent years, several neural network (NN)-based approaches have been developed for solving IIPs \cite{mccann2017convolutional}. Depending on the availability and type of training data, these methods can be broadly categorized into (\textit{i}) training-data-intensive methods, including supervised \cite{Yiasemis_2022_CVPR}, self-supervised \cite{liu2020rare}, and generative approaches \cite{chung2025diffusion}; and (\textit{ii}) training-free methods such as Implicit Neural Representations (INRs) \cite{essakinewe} and the Deep Image Prior (DIP) \cite{ulyanov2018deep}. Training-data-free methods operate without pre-trained models, making them particularly useful in scenarios where data is scarce, severely degraded, or unlabeled, preventing supervised or generative approaches from providing reliable priors.

INRs provide continuous neural representations of signals and have been successfully applied to several reconstruction tasks \cite{shen2022nerp, vyas2025learning,de2023deep,tewari2022advances,molaei2023implicit}. In this paper, however, we focus on DIP methods, which have demonstrated strong performance across a wide range of IIPs \cite{heckel2019deep,sapienza2022deep,liang2025watermark}, in some cases even outperforming training-data-intensive approaches \cite{li2023deep,alkhouriNeuIPS24}. 

\begin{figure}[h]
\begin{minipage}{0.45\linewidth}
\centerline{\includegraphics[width=2.8in]{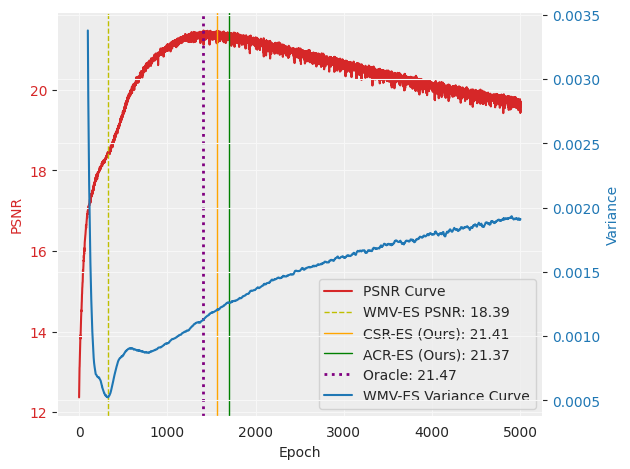}}\vspace{-0.1in}
\centerline{\scriptsize (a) Natural image}
\end{minipage}\hspace{0.35in}
\begin{minipage}{0.45\linewidth}
\centerline{\includegraphics[width=2.8in]{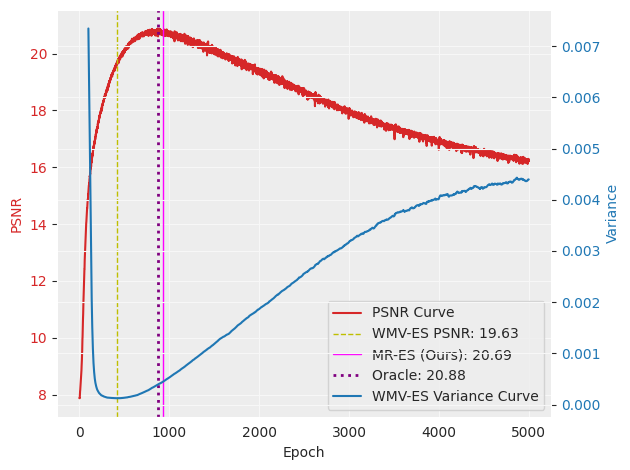}}\vspace{-0.1in}
\centerline{\scriptsize (b) CT image}
\end{minipage}
\label{fig:limit_variance}\vspace{-0.08in}
\caption{Denoising result on Gaussian noise with noise level $0.26$. The oracle is defined as the iteration that achieves the maximum PSNR along the DIP trajectory. 
The variance-based WMV-ES \cite{wang2021early} reaches its global minimum too early and therefore selects a stopping point far from the oracle. 
In contrast, our CSR-ES, ACR-ES, and MR-ES select iterations much closer to the oracle.}
\end{figure} 

In the original DIP study, the authors in \cite{ulyanov2018deep} demonstrated that an untrained convolutional neural network (CNN) is able to capture low-frequency image statistics even before optimization. Despite its strong empirical performance \cite{alkhouri2025DIPSPM}, DIP remains vulnerable to overfitting due to network over-parameterization, often fitting measurement noise or null-space artifacts of the forward model, as shown for MRI in \cite{liang2024analysis}. Over the past few years, several approaches have been proposed to mitigate this issue which, as surveyed in \cite{alkhouri2025DIPSPM}, can be broadly categorized into regularization, network re-parameterization, and early stopping.

Regularization methods, ranging from total variation DIP (TV-DIP) \cite{liu2019image} to auto-encoding sequential DIP (aSeqDIP) \cite{alkhouriNeuIPS24}, aim to reduce overfitting. However, these methods can be computationally expensive due to additional optimization terms and variables, and may also be sensitive to regularization hyperparameters.

Early Stopping (ES) \cite{wang2021early}, which is the focus of this paper, defines stopping criteria that attempt to detect when overfitting begins. Existing DIP ES approaches typically estimate overfitting from either reconstruction trajectory statistics or unbiased risk surrogates. Variance-based methods \cite{wang2021early} monitor fluctuations in the DIP output and can be effective in many settings. However, the resulting signal depends not only on overfitting, but also on optimization dynamics and intrinsic image structure. For high-resolution images or images containing substantial high-frequency details, such as textured natural images or medical images with fine anatomical structures, oscillations may appear even before severe overfitting occurs, making the variance curve ambiguous or unstable (see Fig.~\ref{fig:limit_variance}). 

Stein’s Unbiased Risk Estimator (SURE) \cite{fourdrinier2018shrinkage} can be used for ES, providing a more principled estimate of reconstruction risk. However, SURE-based methods require knowledge of the measurement noise level and are typically more reliable in expectation over multiple realizations. In the per-instance DIP setting, where stopping decisions must be made from a single reconstruction trajectory, these estimates can exhibit high variance and lead to unreliable stopping decisions. These limitations motivate validation-based criteria that directly compare the DIP trajectory against a reference signal not used during fitting. Toward addressing these limitations, we make the following contributions:

\begin{itemize}
    \item We demonstrate that if two fully independent noisy copies of the degraded measurements are available, then near-optimal overfitting detection becomes achievable. This is supported by theoretical results on single-reference validation, pseudo-validation estimation, and the effect of shared noise.

    \item Since obtaining \textit{fully} independent degraded measurements is generally infeasible, we propose three task-specific ES algorithms based on pseudo-reference construction: channel-similarity reference ES (CSR-ES) for RGB images, mask-reference ES (MR-ES) for grayscale images such as MRI and CT, and augmented-channel reference ES (ACR-ES) for handling different noise types, including Poisson noise. 

    %\item \textcolor{red}{broad applicability...}

    \item Extensive evaluation across natural image restoration and medical image reconstruction tasks, under varying noise levels and noise types, demonstrates that the proposed methods more reliably detect overfitting compared to existing variance-based and SURE-based DIP ES methods, all without requiring an accurate estimate of the noise level. %without requiring knowledge of the measurement noise characteristics.
\end{itemize}

\subsection{Preliminaries}
\paragraph{DIP setup and notation.}
We consider a noisy measurement \(y=A(x)+\eta\), where \(x\) is the unknown clean
image and \(A\) is the forward operator. DIP reconstructs \(x\) by optimizing an
untrained network \(f_\theta(z)\) to fit the measurement,
\begin{equation}\nonumber
    \theta_t \approx \arg\min_\theta \mathcal L(A(f_\theta(z)),y),
\end{equation}
and we denote the resulting reconstruction at iteration \(t\) by
\(\widehat x_t(y)=f_{\theta_t}(z)\). Since continued optimization eventually fits
measurement noise or null-space artifacts, early stopping chooses a time
\(\widehat t\) and returns \(\widehat x_{\widehat t}(y)\).

% The inverse problem: In IIPs, the main task is to estimate an unknown image or signal $\mathbf{x}\in \mathbb{R}^n$ from vector $\mathbf{y}\in \mathbb{R}^m$ that represents sub-sampled measurements or a degraded image. The process is often corrupted by noise. Mathematically, the unknown signal and the measurements are related as 
% %
% \begin{equation}\nonumber\label{eqn: forward model}\nonumber
%     \mathbf{y} = \mathcal{A}(\mathbf{x}) + \mathbf{n}\:,
% \end{equation}
% %
% where $\mathcal{A}(\cdot) : \mathbb{R}^n\rightarrow \mathbb{R}^m$ (with $m\leq n$) is the linear or non-linear forward operator that represents the measurement process, and $\mathbf{n}\in \mathbb{R}^m$ denotes the noise in the measurement domain, e.g., it can be assumed to be sampled from a Gaussian distribution $\mathcal{N}(\mathbf{0},\sigma^2_{\mathbf{y}}\mathbf{I})$, where $\sigma_{\mathbf{y}}>0$ denotes the measurement noise level. NOTE ABOUT MRI...

% The original DIP... 

% The main example of noise over-fitting with cat images to show the problem... Lets produce a similar plot or re-use the one from SPM with accreditation ...

% This is also important to show that ES-DIP is not exact...

%%%%%%%%%%%%%%%%%%%%%%%%%%%%%%%%%%%%%%%%%%%%%%%%%%%%%%%%%%%%%%%%%%%%%%%%%%%%%%%%%%%%%%%%%%%%%%%%%%%%%%%%%%%%%%%%%%%%%%%%

\section{Reference-Based Early Stopping}
\label{sec:reference_principle}

DIP early stopping can be viewed as a validation problem. Let
$y_1=x+\eta_1$ be the noisy image used for fitting, and suppose for the moment
that an independent noisy reference $y_2=x+\eta_2$ were available. If
$\{\hat x_t(y_1)\}_{t=1}^T$ denotes the DIP trajectory trained only on $y_1$, then
\begin{equation}\nonumber
\mathbb{E}_{\eta_2}\!\left[
    \frac1n\|\hat x_t(y_1)-y_2\|^2 \,\middle|\, y_1
\right]
=
\frac1n\|\hat x_t(y_1)-x\|^2+\sigma^2 .
\end{equation}
Thus, the reference loss has the same minimizer as the clean-image MSE, up to an
additive constant. Moreover, because the loss averages over all pixels, a single
independent reference is already sufficient to obtain a concentrated validation
signal.

\begin{proposition}[Single-reference validation, informal]
\label{prop:single_reference_informal}
Assume that the reference noise is independent, zero-mean, and sub-Gaussian. Then,
with probability at least $1-\delta$, uniformly over $t\le T$,
\begin{equation}\nonumber
\frac1n\|\hat x_t(y_1)-y_2\|^2
=
\frac1n\|\hat x_t(y_1)-x\|^2+\sigma^2
+
O\!\left(\sqrt{\frac{\log(T/\delta)}{n}}\right).
\end{equation}
% Consequently, the stopping time minimizing the reference loss attains near-oracle
% MSE along the DIP trajectory.
\end{proposition}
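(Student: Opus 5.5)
Conditioning on $y_1$ makes the whole trajectory $\{\hat x_t(y_1)\}_{t\le T}$ deterministic. We then expand the reference loss exactly and control the two random terms with a union bound over $t\le T$. Write $\eta_2=y_2-x$, with independent coordinates that are zero-mean, of variance $\sigma^2$, and sub-Gaussian with parameter $K$ (so $K\gtrsim\sigma$). Set $d_t=\hat x_t(y_1)-x$. Then
\begin{equation}\nonumber
\frac1n\|\hat x_t(y_1)-y_2\|^2=\frac1n\|d_t\|^2-\frac2n\langle d_t,\eta_2\rangle+\frac1n\|\eta_2\|^2 ,
\end{equation}
so it suffices to show that both $\frac1n\|\eta_2\|^2-\sigma^2$ and $\frac2n\langle d_t,\eta_2\rangle$ are $O(\sqrt{\log(T/\delta)/n})$ simultaneously for all $t\le T$. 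Independence of $\eta_2$ from $y_1$, and hence from every $d_t$ (including the DIP initialization and noise), is what lets us treat $d_t$ as fixed vectors. This is the only place where the assumption "trained only on $y_1$" is used.

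\textbf{Noise-energy term.} This term does not depend on $t$, so no union bound is needed. The variables $\eta_{2,i}^2-\sigma^2$ are independent, centered, and sub-exponential with norm $O(K^2)$. Bernstein's inequality gives, with probability at least $1-\delta/2$,
\begin{equation}\nonumber
\left|\tfrac1n\|\eta_2\|^2-\sigma^2\right|\lesssim K^2\Big(\sqrt{\log(4/\delta)/n}+\log(4/\delta)/n\Big).
\end{equation}
In the regime $\log(T/\delta)\le n$ this is $O(\sqrt{\log(T/\delta)/n})$. Outside that regime the statement is vacuous anyway, because all quantities are bounded.

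\textbf{Cross term.} For fixed $t$, the inner product $\langle d_t,\eta_2\rangle$ is sub-Gaussian with variance proxy $K^2\|d_t\|^2$. Hoeffding's inequality and a union bound over the $T$ iterates give, with probability at least $1-\delta/2$,
\begin{equation}\nonumber
\max_{t\le T}\Big|\tfrac2n\langle d_t,\eta_2\rangle\Big|\le \frac{2K}{\sqrt n}\max_{t\le T}\frac{\|d_t\|}{\sqrt n}\sqrt{2\log(4T/\delta)} .
\end{equation}

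The main obstacle is that this bound is only $O(\sqrt{\log(T/\delta)/n})$ if the per-pixel error $\|d_t\|/\sqrt n$ is bounded uniformly in $t$, and I must make that explicit. I would assume that pixel intensities and network outputs lie in a bounded range $[0,B]$, which holds in practice because of a sigmoid output or clipping. Then $\|d_t\|/\sqrt n\le B$, and the constant hidden in the $O(\cdot)$ is of order $K\max(B,K)$. An alternative is to keep the sharper relative form $O\big(\sqrt{\mathrm{MSE}_t\log(T/\delta)/n}\big)$, which shows that the fluctuation shrinks exactly where the MSE is small.

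Combining the two events by a union bound gives the statement with probability at least $1-\delta$, uniformly over $t\le T$. As a remark after the proof, the minimizer $\hat t$ of the reference loss then satisfies $\mathrm{MSE}_{\hat t}\le\min_t\mathrm{MSE}_t+O(\sqrt{\log(T/\delta)/n})$, since the $\sigma^2$ offset is common to all $t$. This is the near-oracle consequence announced before the proposition.
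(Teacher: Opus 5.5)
Your proposal is correct and follows essentially the same route as the paper's proof of the formal single-reference theorem in the appendix: condition on $y_1$, expand the loss exactly into three terms, control the cross term by sub-Gaussian concentration plus a union bound over $t$, and control the noise-energy term by Bernstein. Your bounded-range assumption plays the role of the paper's constant $M_T=\sigma\sup_t\sqrt{R(t)}+\sigma^2$. One small correction: the regime $\log(T/\delta)>n$ is not vacuous, because $\frac1n\|\eta_2\|^2$ is unbounded, so its deviation there is genuinely of order $\log(T/\delta)/n$; the paper handles this by keeping the extra term $\sigma^2\log(T/\delta)/n$ in the formal statement, which is exactly what the informal $O(\cdot)$ suppresses.
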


The formal statement, constants, proof, and the corresponding oracle inequality
are given in Appendix~\ref{app:reference_validation}.
In standard DIP, however, only one noisy observation is available. We therefore
construct a pseudo-reference $\tilde y$ from the observed image $y$. Since
$\tilde y$ is not fully independent of the image used for fitting, we decompose
$
y=x+s+\eta, 
\tilde y=x+s+\tilde\eta,
$
where $s$ denotes the component shared by fitting and validation, while
$\eta$ and $\tilde\eta$ denote non-shared components. For the pseudo-validation
loss
\begin{equation}\nonumber
\tilde V_t := \frac1n\|\hat x_t(y)-\tilde y\|^2 ,
\end{equation}
we have
\begin{equation}\nonumber
\mathbb{E}_{\tilde\eta}\!\left[
    \tilde V_t \,\middle|\, y,s
\right]
=
\frac1n\|\hat x_t(y)-(x+s)\|^2+\sigma_{\tilde\eta}^2 .
\end{equation}
Thus pseudo-validation is exact for the effective target $x+s$. Its usefulness
for clean-image early stopping depends on whether the shared component $s$ is
small, harmless, or not preferentially fitted before the remaining noise.

This gives the main design principle of the paper: a pseudo-reference need not be
perfectly independent, but it should reduce the overlap between the component
fitted by DIP and the component used for validation. In the noise-fitting
model
\begin{equation}\nonumber
\hat x_t(y) \approx x+\alpha_t(s+\eta),
\qquad \alpha_t\in[0,1],
\end{equation}
which is consistent with DIP's spectral bias when $s$ and $\eta$ have similar
statistics, the expected pseudo-validation loss is minimized near
\begin{equation}\nonumber
\alpha^\star
=
\frac{\|s\|^2/n}{\|s\|^2/n+\sigma_\eta^2}.
\end{equation}
If the construction makes the shared energy a $1/r$ fraction of the
non-shared variance, the excess MSE relative to the oracle is
$O(1/r^2)$, quadratically tighter than the worst-case
$O(1/r)$ bound implied by tracking the effective target $x+s$
above. Modest decorrelation of the pseudo-reference therefore suffices
for near-oracle stopping. The full derivation of the noise-fitting model is in Appendix~\ref{app:pseudo_reference}. The
algorithms in Section~\ref{sec:algorithms} instantiate this principle using
channel separation, held-out masks, and auxiliary corrupted channels.

\section{Pseudo-Reference Algorithms}
\label{sec:algorithms}

Section~\ref{sec:reference_principle} suggests that ES can be guided by a target that is not fully independent, but whose validation component is sufficiently decorrelated from the component fitted by DIP. We instantiate this principle through three pseudo-reference constructions, chosen according to the available image structure: cross-channel references for RGB images, masks for grayscale images, and auxiliary corrupted images for more challenging noise models. Full algorithmic pseudocodes are given in Appendix~\ref{app:algorithm_details}.

\subsection{Channel-similarity reference for RGB images}
\label{sec:csr}

For RGB images, different color channels often share the same image structure while containing partially decorrelated noise. CSR-ES trains DIP on the original RGB image using the standard fitting loss, and uses cross-channel consistency as a post-hoc stopping criterion.
% CSR-ES therefore uses one channel as the DIP fitting target and another structurally similar channel as the pseudo-reference. 
Writing
\begin{equation}\nonumber
    y_c=x_c+N_c,\quad c\in\{R,G,B\}, \quad x_c \ \text{the clean channel and} \ N_c \ \text{the channel noise},
\end{equation}
we select the closest channel pair from the noisy image,
\begin{equation}\nonumber
    (i^\star,j^\star)
    =
    \arg\min_{i\ne j}
    \frac1n\|y_i-y_j\|^2,
\end{equation}
and use the validation curve
\begin{equation}\nonumber
    V_t^{\rm CSR}
    =
    \frac1n\|\hat x_{t,i^\star}-y_{j^\star}\|^2,
    \qquad
    \hat t_{\rm CSR}
    =
    \arg\min_t V_t^{\rm CSR}.
\end{equation}
The intuition is that selecting a structurally similar reference channel reduces clean cross-channel bias, while the channel-specific noise remains less correlated with the fitted channel. %A controlled-bias decomposition and an idealized tracking result are given in Appendix~\ref{app:csr_theory}.

\subsection{Mask-reference early stopping for grayscale images}
\label{sec:mr}

CSR-ES requires multiple channels and is therefore not directly applicable to grayscale images. MR-ES instead constructs a pseudo-reference by holding out a small subset of pixels. Let $M\in\{0,1\}^n$ be a high-retention mask, for example $\mathbb P(M_i=1)=0.98$, and let $H=1-M$ be the held-out mask. DIP is trained with the masked loss
$
    \|M\odot(f_\theta(z)-y)\|^2,
$ 
so the held-out pixels are excluded from the training loss rather than treated as zero-valued targets. The stopping curve is then
\begin{equation}\nonumber
    V_t^{\rm MR}
    =
    \frac1{\sum_i H_i}
    \|H\odot(\hat x_t-y)\|^2,
    \qquad
    \hat t_{\rm MR}
    =
    \arg\min_t V_t^{\rm MR}.
\end{equation}
This construction trades a small loss of training information for a validation signal that is not directly fitted by the network. The held-out validation theorem, its transfer condition to full-image risk, and the implementation requirement are given in Appendix~\ref{app:mask_reference}.

\subsection{Augmented-channel reference for flexible pseudo-validation}
\label{sec:acr}

Mask references can be less reliable when a representative mask is difficult to choose, for example under signal-dependent noise. ACR-ES instead augments the output space of the DIP network. Given the noisy observation $y$, generate two pseudo-copies
\begin{equation}\nonumber
    y^{(1)}=y+\zeta_1,\qquad
    y^{(2)}=y+\zeta_2,
    %\qquad
    %\zeta_1,\zeta_2\sim\mathcal N(0,\tau^2I),
\end{equation}
where $\zeta_1$ and $\zeta_2$ are random noise, 
and train a $2C$-channel DIP network to fit
$
    Y=\operatorname{concat}(y,y^{(1)}).
$ 
The original output channels provide the reconstruction, while the auxiliary channels are monitored against the independent pseudo-copy $y^{(2)}$, whose added
perturbation \(\zeta_2\) is independent of the perturbation \(\zeta_1\) used to
construct \(y^{(1)}\).

In practice we use a frequency-bias score rather than raw MSE on the auxiliary channel:
\begin{equation}\nonumber
B_t^{\rm ACR}
=
\frac{
\sum_\omega \|\omega\|^2
\left|
\mathcal F(\hat X_{t,{\rm aux}}-y^{(2)})(\omega)
\right|^2
}{
\sum_\omega
\left|
\mathcal F(\hat X_{t,{\rm aux}}-y^{(2)})(\omega)
\right|^2
},
\quad
\hat t_{\rm ACR}
=
\arg\max_{t\ge t_0} B_t^{\rm ACR} \ \text{after a brief burn-in}\ t_0.
\end{equation}
This score measures the second moment of the residual spectrum and is more sensitive to the frequency shift that appears when DIP begins fitting noise. 
For the spectral-bias variant of ACR-ES, we select the stopping point by the maximum of the average-frequency score rather than its minimum.
This choice is motivated by the typical spectral evolution of DIP training.
The detailed discussion is given in Appendix~\ref{app:algorithm_details}. 
Additional discussion of the augmented-channel construction, the role of $\tau$, and the score choice is given in Appendix~\ref{app:acr_theory}.

\iffalse
\paragraph{Applicability.}
CSR-ES is lightweight and post-hoc but relies on useful cross-channel similarity; MR-ES applies to grayscale images but requires the held-out region to be representative; ACR-ES is more flexible but introduces an auxiliary perturbation scale. These are predictable trade-offs of the corresponding pseudo-reference constructions. The same pseudo-reference principle extends to regularization-parameter selection beyond DIP; see Appendix~\ref{appen: beyond DIP}.
\fi

\section{Experimental Results}
\label{sec:experiments}

We evaluate whether the proposed reference-based early-stopping criteria can identify stopping points close to the oracle iteration of DIP optimization.
Our experiments are conducted on both natural and medical images.
For natural images, we use Set3, Set14, Set18, CBSD68, and Urban100.
For medical images, we use MRI15, CT10, PD20, and T2-20.
We consider three representative corruption models: Gaussian with noise level $0.26$, Poisson with noise level $10$, and impulse with noise level $0.1$. 
For Poisson and impulse corruptions, we use their zero-mean versions so that the perturbations do not introduce an additional intensity bias.

The baseline methods include WMV-ES~\cite{wang2021early} and SURE-based early stopping.
For WMV-ES, we follow the setting in~\cite{wang2021early} and select the stopping iteration according to the local minimum of the windowed variance curve. 
Specifically, for a DIP trajectory of $5{,}000$ iterations, we use a window size of $1{,}000$ iterations to compute the variance statistic.
For SURE, we provide the true noise level used to generate the corrupted observation. 
This gives SURE a favorable setting, since the noise variance is assumed to be known exactly.

We report two quantities: the PSNR at the selected stopping iteration and the corresponding PSNR Gap.
The PSNR Gap is defined as
$
\mathrm{Gap}
=
\mathrm{PSNR}_{\mathrm{oracle}}
-
\mathrm{PSNR}_{\mathrm{ES}},
$ 
where $\mathrm{PSNR}_{\mathrm{ES}}$ is the PSNR obtained at the iteration selected by an early-stopping criterion, and $\mathrm{PSNR}_{\mathrm{oracle}}$ is the maximum PSNR along the corresponding DIP trajectory.
A smaller gap indicates that the stopping criterion selects an iteration closer to the oracle stopping point.
Unless otherwise specified, results are reported as mean $\pm$ standard deviation across images.

Our experiments are organized according to the applicability of the proposed criteria.
CSR-ES is a post-hoc criterion based on multi-channel consistency, and is therefore most naturally suited for color images.
MR-ES constructs a mask-based reference signal, making the reference-based principle applicable to single-channel grayscale images.
ACR-ES uses auxiliary corrupted references and provides a more flexible stopping signal for challenging noise models, especially when the reference signal is less reliable.
Accordingly, we first evaluate CSR-ES and ACR-ES on natural color images, then evaluate MR-ES on grayscale medical images, then analyze method-specific oracle behavior, and finally test whether the proposed stopping principle generalizes beyond denoising.

For stopping criteria evaluated on the same DIP trajectory, such as WMV-ES and CSR-ES, the oracle PSNR is shared.
For criteria that modify the optimization trajectory, such as MR-ES and ACR-ES, we additionally report method-specific oracle values.
This distinction is important because different trajectories may have different attainable oracle PSNR values.

Table~\ref{tab:average_all} compares the full practical pipelines: baseline ES methods on the original DIP trajectory versus our ES method on the modified trajectory.
Although the proposed modification may slightly reduce the oracle peak PSNR, it greatly reduces the gap between the selected stopping point and the oracle best point.
The reduction in stopping gap compensates for the small loss in peak PSNR, leading
to a higher final reconstruction PSNR.

Due to space limitations, the main paper focuses on four representative quantitative studies and two trajectory-level visualizations.
Additional results in Appendix~\ref{app:add_exp} provide per-image oracle comparisons, all RGB channel-reference pairs used by CSR-ES, ablations of MSE-based and bias-based reference scores, sensitivity to auxiliary noise levels, and additional qualitative examples.

\begin{table}[t]%\vspace{-0.1in}
\centering
\caption{Average early-stopping performance on natural image datasets under different noise types. 
All stopping criteria are evaluated on the same ACR trajectory, so the oracle PSNR is shared across WMV-ES, CSR-ES, and ACR-ES. 
PSNR is reported as mean $\pm$ standard deviation, and PSNR Gap denotes the difference from the shared oracle. \textbf{This table isolates the stopping rule rather than comparing the original full pipelines. For the latter, please refer to Table \ref{tab:average_all}}}
\label{tab:natural_es_results}
\small
\setlength{\tabcolsep}{6pt}
\renewcommand{\arraystretch}{1.18}
\resizebox{\linewidth}{!}{
\begin{tabular}{llcccccccc}
\toprule
\multirow{2}{*}{Noise} 
& \multirow{2}{*}{Dataset}
& \multicolumn{2}{c}{WMV-ES}
& \multicolumn{2}{c}{CSR-ES}
& \multicolumn{2}{c}{ACR-ES}
& \multicolumn{1}{c}{\cellcolor{gray!10}Oracle} \\
\cmidrule(lr){3-4}
\cmidrule(lr){5-6}
\cmidrule(lr){7-8}
\cmidrule(lr){9-9}
&
& PSNR $\uparrow$ 
& Gap $\downarrow$
& PSNR $\uparrow$ 
& Gap $\downarrow$
& PSNR $\uparrow$ 
& Gap $\downarrow$
& \cellcolor{gray!10}PSNR $\uparrow$ \\
\midrule

\multirow{4}{*}{Gaussian}
& Set14
& 21.86($\pm$2.18)
& 0.64($\pm$0.56)
& \textbf{22.27($\pm$1.71)}
& 0.24($\pm$0.21)
& 22.26($\pm$1.71)
& 0.25($\pm$0.13)
& \cellcolor{gray!10}22.50($\pm$1.79) \\
& Set18
& 21.37($\pm$2.57)
& 1.27($\pm$1.85)
& 22.38($\pm$1.59)
& 0.26($\pm$0.16)
&\textbf{ 22.41($\pm$1.58)}
& 0.23($\pm$0.12)
& \cellcolor{gray!10}22.64($\pm$1.65) \\
& CBSD68
& 21.40($\pm$2.79)
& 1.24($\pm$1.04)
& \textbf{22.48($\pm$2.17)}
& \textbf{0.16($\pm$0.12)}
& 22.35($\pm$1.97)
& 0.30($\pm$0.28)
& \cellcolor{gray!10}22.64($\pm$2.18) \\
& Urban100
& 18.82($\pm$2.99)
& 2.75($\pm$2.40)
& \textbf{21.38($\pm$1.33)}
& \textbf{0.18($\pm$0.14)}
& 21.36($\pm$1.31)
& 0.20($\pm$0.10)
& \cellcolor{gray!10}21.56($\pm$1.35) \\
\addlinespace[2pt]
\midrule

\multirow{4}{*}{Poisson}
& Set14
& 23.81($\pm$2.93)
& 0.83($\pm$0.93)
& 24.30($\pm$2.12)
& 0.33($\pm$0.41)
& \textbf{24.52($\pm$2.35)}
& \textbf{0.11($\pm$0.05)}
& \cellcolor{gray!10}24.63($\pm$2.34) \\
& Set18
& 25.62($\pm$3.41)
& 1.08($\pm$1.95)
& 25.99($\pm$2.02)
& 0.71($\pm$1.33)
& \textbf{26.55($\pm$2.33)}
& \textbf{0.15($\pm$0.10)}
& \cellcolor{gray!10}26.70($\pm$2.32) \\
& CBSD68
& 23.06($\pm$3.33)
& 1.74($\pm$1.79)
& 24.59($\pm$2.29)
& 0.20($\pm$0.15)
& \textbf{24.66($\pm$2.33)}
& \textbf{0.14($\pm$0.09)}
& \cellcolor{gray!10}24.80($\pm$2.33) \\
& Urban100
& 21.32($\pm$4.00)
& 2.94($\pm$3.40)
& 24.07($\pm$1.62)
& 0.19($\pm$0.13)
& \textbf{24.08($\pm$1.66)}
& \textbf{0.18($\pm$0.12)}
& \cellcolor{gray!10}24.26($\pm$1.67) \\
\addlinespace[2pt]
\midrule

\multirow{4}{*}{Impulse}
& Set14
& 23.33($\pm$2.53)
& 0.82($\pm$0.83)
& \textbf{23.77($\pm$1.79)}
& \textbf{0.22($\pm$0.16)}
& 23.65($\pm$1.74)
& 0.49($\pm$0.23)
& \cellcolor{gray!10}24.14($\pm$1.88) \\
& Set18
& 23.30($\pm$2.64)
& 1.31($\pm$2.13)
& \textbf{24.20($\pm$1.55)}
& \textbf{0.51($\pm$0.34)}
& 24.09($\pm$1.52)
& 0.53($\pm$0.21)
& \cellcolor{gray!10}24.61($\pm$1.62) \\
& CBSD68
& 22.67($\pm$3.01)
& 1.39($\pm$1.29)
& \textbf{23.84($\pm$2.17)}
& \textbf{0.23($\pm$0.19)}
& 23.63($\pm$2.02)
& 0.44($\pm$0.25)
& \cellcolor{gray!10}24.07($\pm$2.19) \\
& Urban100
& 20.05($\pm$3.68)
& 3.15($\pm$3.13)
& \textbf{22.96($\pm$1.46)}
& \textbf{0.24($\pm$0.19)}
& 22.86($\pm$1.42)
& 0.34($\pm$0.17)
& \cellcolor{gray!10}23.21($\pm$1.51) \\

\bottomrule
\end{tabular}}
\end{table}

\subsection{Early Stopping on Natural Color Images}

We first evaluate the proposed criteria on natural color images. 
%In this experiment, WMV-ES, CSR-ES, and ACR-ES are all evaluated on the same ACR-generated DIP trajectory. 
Table \ref{tab:natural_es_results} is not intended as a full-pipeline comparison. Instead, it compares the stopping methods by evaluating WMV-ES, CSR-ES, and ACR-ES on the same ACR-induced trajectory. Likewise, Table \ref{tab:medical_es_results} compares WMV-ES and MR-ES by evaluating them on the same MR-induced trajectory. Full practical pipeline comparisons, where each method is evaluated on its own natural trajectory, are reported in Table \ref{tab:average_all}.
% Therefore, the three stopping criteria share the same oracle PSNR for each image. 
This setting isolates the effect of the stopping criterion itself, rather than differences caused by different training trajectories.

Table~\ref{tab:natural_es_results} reports the average early-stopping performance on Set14, Set18, CBSD68, and Urban100 under Gaussian, Poisson, and impulse noise. Compared with WMV-ES, both CSR-ES and ACR-ES substantially reduce the gap to the shared oracle stopping point across datasets and noise types. 
The improvement is especially pronounced on structured images, such as Urban100, where WMV-ES often stops too early and leaves a large gap to the oracle.
In contrast, the proposed criteria select iterations much closer to the oracle region.
CSR-ES is particularly effective under Gaussian and impulse noise, while ACR-ES is more favorable under several Poisson settings, suggesting that auxiliary corrupted references provide a useful stopping signal under signal-dependent corruptions.

Figure~\ref{fig:closeACR} provides a trajectory-level view of this behavior.
The variance-based WMV score often reaches its minimum before the PSNR peak, leading to premature stopping.
By contrast, CSR-ES and ACR-ES select iterations near the oracle region, showing better alignment with the actual overfitting dynamics of DIP.

\begin{figure}
\begin{minipage}{0.45\linewidth}
\centerline{\includegraphics[width=2.8in]{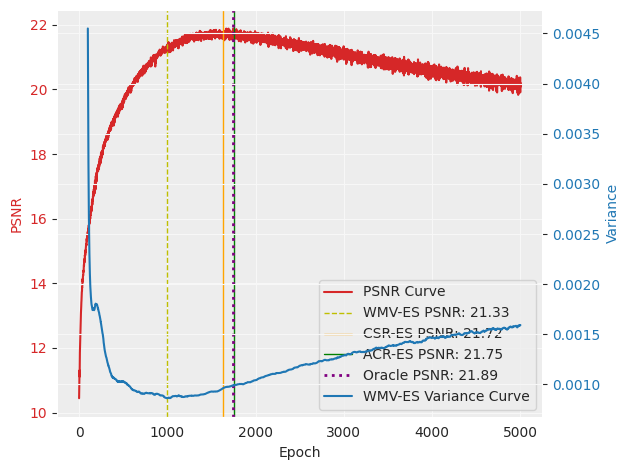}}
\centerline{\scriptsize (a) Gaussian iteration: %WMV-ES (995),
CSR-ES (1625), ACR-ES (1759), Oracle (1748)
}
\end{minipage}\hspace{0.35in}
\begin{minipage}{0.45\linewidth}
\centerline{\includegraphics[width=2.8in]{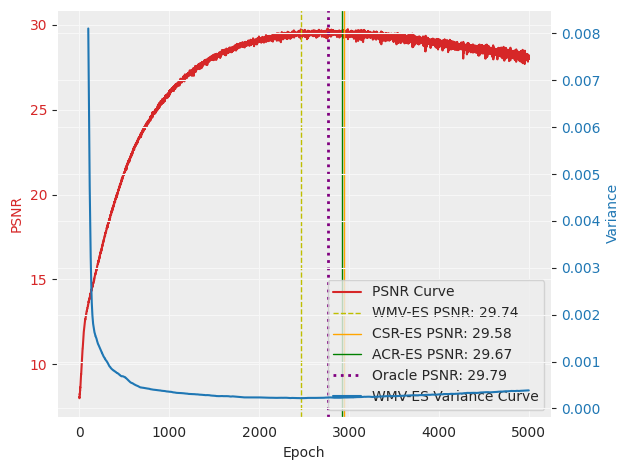}}
\centerline{\scriptsize (b) Poisson iteration: %WMV-ES (2463), 
CSR-ES (2940), ACR-ES (2926), Oracle (2761)}
\end{minipage}
\caption{Color image denoising results. Image corrupted by Gaussian noise with noise level $0.26$; Poisson noise with noise level $10$.  Optimizer: Adam with LR $10^{-4}$. The number in parentheses after each method indicates the selected iteration. The variance-based
criterion often reaches its global minimum too early, leading to premature
stopping.}\label{fig:closeACR}
\end{figure}

Appendix~\ref{app:csr_grid} further analyzes all RGB channel-reference pairs used by CSR-ES.
The automatically selected pair, computed from the noisy observation only, yields near-oracle stopping across Gaussian, Poisson, and impulse noise.
This supports the use of channel similarity as a pseudo-validation signal without access to the clean image.
Additional results are provided in Appendix~\ref{app:additional_curves}.

\begin{table}[t]
\centering
\caption{Average early-stopping performance on medical datasets under different noise types. 
PSNR is reported as mean $\pm$ standard deviation across images, and PSNR Gap denotes the difference from the oracle best stopping point.}
\label{tab:medical_es_results}
\small
\setlength{\tabcolsep}{6pt}
\renewcommand{\arraystretch}{1.18}
% \resizebox{\linewidth}{!}
{
\begin{tabular}{llcccccc}
\toprule
\multirow{2}{*}{Noise} 
& \multirow{2}{*}{Dataset}
& \multicolumn{2}{c}{WMV-ES}
& \multicolumn{2}{c}{MR-ES}
& \multicolumn{1}{c}{\cellcolor{gray!10}Oracle} \\
\cmidrule(lr){3-4}
\cmidrule(lr){5-6}
\cmidrule(lr){7-7}
& 
& PSNR $\uparrow$ 
& Gap $\downarrow$
& PSNR $\uparrow$ 
& Gap $\downarrow$
& \cellcolor{gray!10}PSNR $\uparrow$ \\
\midrule

\multirow{4}{*}{Gaussian}
& MRI15
& 21.41($\pm$0.54) & 0.60($\pm$0.38)
& \textbf{21.60($\pm$0.69)} & \textbf{0.41($\pm$0.29)}
& \cellcolor{gray!10}22.01($\pm$0.58) \\
& CT10
& 19.63($\pm$0.60) & 0.94($\pm$0.34)
& \textbf{20.22($\pm$0.47)} & \textbf{0.35($\pm$0.28)}
& \cellcolor{gray!10}20.57($\pm$0.43) \\
& PD20 
& 18.12($\pm$0.92) & 2.15($\pm$0.86)
& \textbf{19.94($\pm$0.36)} & \textbf{0.33($\pm$0.24)}
& \cellcolor{gray!10}20.27($\pm$0.15) \\
& T2-20
& 17.11($\pm$0.55) & 2.67($\pm$0.51)
& \textbf{19.62($\pm$0.19)} & \textbf{0.16($\pm$0.09)}
& \cellcolor{gray!10}19.78($\pm$0.12) \\
\addlinespace[2pt]
\midrule

\multirow{4}{*}{Poisson}
& MRI15
& \textbf{25.37($\pm$0.88)} & \textbf{0.76($\pm$0.34)}
& 24.78($\pm$1.27) & 1.35($\pm$1.04)
& \cellcolor{gray!10}26.13($\pm$0.71) \\
& CT10
& 23.45($\pm$0.10) & 3.15($\pm$1.09)
& \textbf{24.56($\pm$1.30)} & \textbf{2.03($\pm$0.85)}
& \cellcolor{gray!10}26.60($\pm$1.14) \\
& PD20 
& \textbf{22.71($\pm$2.35)} & \textbf{2.93($\pm$2.41)}
& 21.15($\pm$0.37) & 4.49($\pm$0.37)
& \cellcolor{gray!10}25.63($\pm$0.43) \\
& T2-20
& 18.75($\pm$1.22) & 7.56($\pm$1.40)
& \textbf{23.26($\pm$0.47)} & \textbf{3.05($\pm$0.36)}
& \cellcolor{gray!10}26.32($\pm$0.55) \\
\addlinespace[2pt]
\midrule

\multirow{4}{*}{Impulse}
& MRI15
& 23.29($\pm$0.76) & 0.65($\pm$0.48)
& \textbf{23.49($\pm$0.52)} & \textbf{0.46($\pm$0.47)}
& \cellcolor{gray!10}23.94($\pm$0.45) \\
& CT10
& 21.55($\pm$0.81) & 1.58($\pm$0.36)
& \textbf{22.82($\pm$0.70)} & \textbf{0.31($\pm$0.17)}
& \cellcolor{gray!10}23.13($\pm$0.68) \\
& PD20 
& 21.55($\pm$1.78) & 1.83($\pm$1.81)
& \textbf{23.12($\pm$0.28)} & \textbf{0.26($\pm$0.19)}
& \cellcolor{gray!10}23.38($\pm$0.21) \\
& T2-20
& 17.61($\pm$0.72) & 4.59($\pm$0.78)
& \textbf{21.89($\pm$0.26)} & \textbf{0.31($\pm$0.16)}
& \cellcolor{gray!10}22.20($\pm$0.21) \\
\bottomrule
\end{tabular}}
\end{table}

\subsection{Early Stopping on Grayscale Medical Images}%\vspace{-0.in}
\label{subsec:medical_results}

CSR-ES relies on RGB channel consistency and is therefore not directly applicable to single-channel grayscale images.
To test whether the proposed reference-based principle can go beyond color images, we first evaluate MR-ES on four grayscale medical datasets.
MR-ES constructs a small held-out mask as a reference signal, allowing early stopping without requiring multi-channel information.

Table~\ref{tab:medical_es_results} shows that MR-ES substantially reduces the PSNR Gap under Gaussian and impulse noise across all medical datasets.
For example, on the more challenging PD and T2 images, WMV-ES can stop far from the oracle, whereas MR-ES selects iterations much closer to the best attainable point along the trajectory.
This demonstrates that the proposed reference-based framework is not limited to RGB images and can be adapted to grayscale medical reconstruction tasks.

However, MR-ES can be less stable under some Poisson settings.
This is expected because Poisson noise is signal-dependent, making a simple held-out mask reference less reliable.
To address this limitation, we further evaluate ACR-ES, which uses auxiliary corrupted references to construct a more flexible stopping signal.
Table~\ref{tab:ct_t2_acr_es_results} reports ACR-ES results on CT10 and T2-20 under Gaussian, Poisson, and impulse noise.
Across all settings, ACR-ES substantially improves over WMV-ES and selects stopping points much closer to the oracle.
The improvement is especially pronounced under Poisson noise: on CT10, ACR-ES reduces the average gap from $4.46$ dB to $0.80$ dB, and on T2-20, from $6.41$ dB to $1.42$ dB.
Under Gaussian and impulse noise, ACR-ES also consistently reduces the gap, showing that auxiliary corrupted references provide a reliable stopping signal for grayscale medical images.

\begin{table}[t]
\centering
\caption{
Average early-stopping performance of ACR-ES on CT10 and T2-20 under different noise types.
PSNR is reported as mean $\pm$ standard deviation across images, and PSNR Gap denotes the difference from the oracle best stopping point.
}
\label{tab:ct_t2_acr_es_results}
\small
\setlength{\tabcolsep}{5.5pt}
\renewcommand{\arraystretch}{1.18}
{
\begin{tabular}{llcccccc}
\toprule
\multirow{2}{*}{Noise} 
& \multirow{2}{*}{Dataset}
& \multicolumn{2}{c}{WMV-ES}
& \multicolumn{2}{c}{ACR-ES}
& \multicolumn{1}{c}{\cellcolor{gray!10}Oracle} \\
\cmidrule(lr){3-4}
\cmidrule(lr){5-6}
\cmidrule(lr){7-7}
&
& PSNR $\uparrow$
& Gap $\downarrow$
& PSNR $\uparrow$
& Gap $\downarrow$
& \cellcolor{gray!10}PSNR $\uparrow$ \\
\midrule

\multirow{2}{*}{Gaussian}
& CT10
& 19.61($\pm$0.68) & 1.00($\pm$0.43)
& \textbf{20.38($\pm$0.51)} & \textbf{0.23($\pm$0.09)}
& \cellcolor{gray!10}20.61($\pm$0.48) \\
& T2-20
& 16.61($\pm$0.72) & 3.11($\pm$0.71)
& \textbf{19.50($\pm$0.23)} & \textbf{0.21($\pm$0.15)}
& \cellcolor{gray!10}19.71($\pm$0.16) \\

\midrule

\multirow{2}{*}{Poisson}
& CT10
& 23.81($\pm$1.24) & 4.46($\pm$1.51)
& \textbf{27.47($\pm$1.28)} & \textbf{0.80($\pm$0.54)}
& \cellcolor{gray!10}28.27($\pm$1.58) \\
& T2-20
& 21.13($\pm$3.73) & 6.41($\pm$3.62)
& \textbf{26.12($\pm$0.80)} & \textbf{1.42($\pm$0.64)}
& \cellcolor{gray!10}27.54($\pm$0.67) \\

\midrule

\multirow{2}{*}{Impulse}
& CT10
& 20.82($\pm$1.00) & 2.07($\pm$0.47)
& \textbf{22.28($\pm$0.85)} & \textbf{0.60($\pm$0.37)}
& \cellcolor{gray!10}22.88($\pm$0.69) \\
& T2-20
& 17.35($\pm$1.30) & 4.62($\pm$1.19)
& \textbf{21.39($\pm$0.34)} & \textbf{0.59($\pm$0.37)}
& \cellcolor{gray!10}21.98($\pm$0.29) \\

\bottomrule
\end{tabular}}
\end{table}
\begin{table}[t]
\centering
\caption{Average PSNR and oracle gap across image groups. 
For each method, the value in parentheses is the average peak PSNR. 
The $\pm$ values denote standard deviations across images.}
\label{tab:average_all}
\resizebox{\textwidth}{!}{
\begin{tabular}{llcc cc cc cc cc}
\toprule
\multirow{2}{*}{Noise}
& \multirow{2}{*}{Image group}
& \multicolumn{2}{c}{SURE-ES}
& \multicolumn{2}{c}{WMV-ES}
& \multicolumn{2}{c}{MR-ES}
& \multicolumn{2}{c}{CSR-ES}
& \multicolumn{2}{c}{ACR-ES} \\
\cmidrule(lr){3-4}
\cmidrule(lr){5-6}
\cmidrule(lr){7-8}
\cmidrule(lr){9-10}
\cmidrule(lr){11-12}
&
& \makecell{ES PSNR\\(Oracle) $\uparrow$}
& Gap $\downarrow$
& \makecell{ES PSNR\\(Oracle) $\uparrow$}
& Gap $\downarrow$
& \makecell{ES PSNR\\(Oracle) $\uparrow$}
& Gap $\downarrow$
& \makecell{ES PSNR\\(Oracle) $\uparrow$}
& Gap $\downarrow$
& \makecell{ES PSNR\\(Oracle) $\uparrow$}
& Gap $\downarrow$ \\
\midrule

\multirow{2}{*}{Gaussian}
& Set3/Set14
& 21.09(22.24)$\pm$1.43 & 1.16$\pm$1.05
& 20.84(21.91)$\pm$1.86 & 1.07$\pm$0.83
& 21.46(21.79)$\pm$1.26 & 0.33$\pm$0.19
& \textbf{21.62(21.78)$\pm$1.39} & \textbf{0.17$\pm$0.08}
& 21.61(21.78)$\pm$1.29 & 0.18$\pm$0.11 \\
& CT10/T2-20
& 17.37(20.39)$\pm$2.03 & 3.02$\pm$2.08
& 18.39(20.31)$\pm$1.90 & 1.92$\pm$1.37
& 19.81(20.13)$\pm$0.41 & 0.32$\pm$0.29
& \multicolumn{2}{c}{--}
& \textbf{20.01(20.18)$\pm$0.57} & \textbf{0.17$\pm$0.05} \\

\midrule

\multirow{2}{*}{Poisson}
& Set3/Set14
& 22.30(23.31)$\pm$2.34 & 1.00$\pm$0.93
& 22.06(22.86)$\pm$2.42 & 0.80$\pm$0.82
& 22.62(22.86)$\pm$1.92 & 0.24$\pm$0.15
& 23.52(23.66)$\pm$1.99 & 0.14$\pm$0.08
& \textbf{23.53(23.66)$\pm$2.03} & \textbf{0.12$\pm$0.05} \\
& CT10/T2-20
& 21.68(26.02)$\pm$2.40 & 4.34$\pm$2.47
& 21.00(26.39)$\pm$2.76 & 5.38$\pm$2.62
& 23.77(26.01)$\pm$0.77 & 2.24$\pm$0.88
& \multicolumn{2}{c}{--}
& \textbf{26.55(27.39)$\pm$1.17} & \textbf{0.84$\pm$0.42} \\

\midrule

\multirow{2}{*}{Impulse}
& Set3/Set14
& 20.35(23.91)$\pm$2.12 & 3.56$\pm$2.55
& 22.45(23.42)$\pm$2.39 & 0.97$\pm$0.97
& 22.94(23.26)$\pm$1.64 & 0.32$\pm$0.17
& \textbf{23.16(23.31)$\pm$1.68} & \textbf{0.15$\pm$0.09}
& 22.89(23.31)$\pm$1.50 & 0.42$\pm$0.28 \\
& CT10/T2-20
& 14.59(22.83)$\pm$0.47 & 8.24$\pm$0.44
& 20.68(22.83)$\pm$1.86 & 2.16$\pm$1.29
& \textbf{22.27(22.60)$\pm$0.70} & \textbf{0.33$\pm$0.17}
& \multicolumn{2}{c}{--}
& 21.92(22.35)$\pm$0.82 & 0.42$\pm$0.40 \\

\bottomrule
\end{tabular}}
\end{table}

% 
% Together, Tables~\ref{tab:medical_es_results} and~\ref{tab:ct_t2_acr_es_results} show complementary strengths of the proposed medical-image criteria.
% MR-ES provides a simple mask-reference stopping rule that works well for Gaussian and impulse noise, while ACR-ES offers a more robust auxiliary-reference criterion for challenging signal-dependent noise.
Appendix~\ref{app:oracle_full} provides full per-image method-specific oracle comparisons, where the same trends hold across representative CT and T2 images.
\begin{table}[t]
\centering
\caption{
Extension to super-resolution and inpainting on Urban10. 
WMV-ES and CSR-ES are evaluated on the same standard DIP trajectory; therefore, they share the same oracle PSNR. 
PSNR and Gap are averaged over 10 images, where Gap denotes Oracle PSNR minus ES PSNR.
}
\label{tab:urban10_extension_wmv_csr}
\small
\setlength{\tabcolsep}{5.5pt}
\renewcommand{\arraystretch}{1.15}
\resizebox{0.95\linewidth}{!}{
\begin{tabular}{lllccccc}
\toprule
\multirow{2}{*}{Task}
& \multirow{2}{*}{Setting}
& \multirow{2}{*}{Noise}
& \multicolumn{2}{c}{WMV-ES}
& \multicolumn{2}{c}{CSR-ES}
& \cellcolor{gray!10}\multirow{2}{*}{Oracle} \\
\cmidrule(lr){4-5}
\cmidrule(lr){6-7}
&
&
& PSNR $\uparrow$ & Gap $\downarrow$
& PSNR $\uparrow$ & Gap $\downarrow$
& \cellcolor{gray!10}PSNR $\uparrow$ \\
\midrule

\multirow{6}{*}{SR}
& \multirow{3}{*}{$\times4$}
& Gaussian 
& 16.10($\pm$2.79) 
& 1.20($\pm$1.59) 
& \textbf{17.24($\pm$1.32)} 
& \textbf{0.07($\pm$0.08)} 
& \cellcolor{gray!10}17.31($\pm$1.38) \\
&
& Poisson  
& 16.29($\pm$2.85) 
& 1.60($\pm$1.70) 
& \textbf{17.84($\pm$1.25)} 
& \textbf{0.05($\pm$0.05)} 
& \cellcolor{gray!10}17.89($\pm$1.29) \\
&
& Impulse  
& 16.47($\pm$3.05) 
& 1.69($\pm$1.84) 
& \textbf{18.11($\pm$1.37)} 
& \textbf{0.05($\pm$0.04)} 
& \cellcolor{gray!10}18.16($\pm$1.38) \\
\cmidrule(lr){2-8}
& \multirow{3}{*}{$\times8$}
& Gaussian 
& 15.29($\pm$2.45) 
& 0.36($\pm$0.39) 
& \textbf{15.60($\pm$2.11)} 
& \textbf{0.05($\pm$0.05)} 
& \cellcolor{gray!10}15.65($\pm$2.15) \\
&
& Poisson  
& 15.71($\pm$2.60) 
& 0.39($\pm$0.53) 
& \textbf{16.03($\pm$2.15)} 
& \textbf{0.07($\pm$0.06)} 
& \cellcolor{gray!10}16.09($\pm$2.17) \\
&
& Impulse  
& 15.81($\pm$2.67) 
& 0.45($\pm$0.61) 
& \textbf{16.17($\pm$2.18)} 
& \textbf{0.09($\pm$0.08)} 
& \cellcolor{gray!10}16.26($\pm$2.21) \\

\midrule

\multirow{6}{*}{Inpainting}
& \multirow{3}{*}{50\%}
& Gaussian 
& 17.07($\pm$3.41) 
& 3.63($\pm$3.61) 
& \textbf{19.53($\pm$0.70)} 
& \textbf{1.16($\pm$0.92)} 
& \cellcolor{gray!10}20.69($\pm$1.03) \\
&
& Poisson  
& 17.29($\pm$3.52) 
& 4.30($\pm$4.00) 
& \textbf{21.16($\pm$0.91)} 
& \textbf{0.43($\pm$0.22)} 
& \cellcolor{gray!10}21.59($\pm$0.92) \\
&
& Impulse  
& 17.44($\pm$3.79) 
& 4.61($\pm$4.11) 
& \textbf{21.11($\pm$1.16)} 
& \textbf{0.94($\pm$0.57)} 
& \cellcolor{gray!10}22.05($\pm$1.27) \\
\cmidrule(lr){2-8}
& \multirow{3}{*}{70\%}
& Gaussian 
& 16.57($\pm$3.08) 
& 3.21($\pm$3.11) 
& \textbf{17.56($\pm$0.72)} 
& \textbf{2.22($\pm$1.17)} 
& \cellcolor{gray!10}19.78($\pm$0.90) \\
&
& Poisson  
& 16.79($\pm$3.21) 
& 3.74($\pm$3.41) 
& \textbf{19.78($\pm$0.87)} 
& \textbf{0.75($\pm$0.25)} 
& \cellcolor{gray!10}20.53($\pm$0.81) \\
&
& Impulse  
& 17.22($\pm$3.58) 
& 3.65($\pm$3.58) 
& \textbf{18.85($\pm$0.88)} 
& \textbf{2.03($\pm$0.68)} 
& \cellcolor{gray!10}20.88($\pm$1.09) \\
\bottomrule
\end{tabular}}
\end{table}

\subsection{Method-specific Oracle Analysis}
\label{subsec:oracle_analysis}

For criteria that modify the reference construction or induce a different optimization trajectory, a small PSNR Gap alone is not sufficient.
The gap could become small simply because the attainable oracle PSNR of that trajectory is degraded.
We therefore perform a method-specific oracle analysis, where each method is evaluated against the oracle PSNR of its own trajectory.

Table~\ref{tab:average_all} summarizes the average ES PSNR, method-specific oracle PSNR, and oracle gap on representative natural and medical image groups.
Each PSNR entry is reported as ES PSNR (Oracle PSNR), so the selected reconstruction quality and the best attainable quality along the corresponding trajectory can be compared simultaneously.
On natural images, the proposed CSR-ES and ACR-ES achieve much smaller gaps than SURE-ES and WMV-ES while maintaining oracle PSNR values comparable to the competing trajectories.
For example, under Gaussian noise, CSR-ES reduces the average gap on randomly selected images from Set3 and Set14 to $0.17$ dB, compared with $1.16$ dB for SURE-ES and $1.07$ dB for WMV-ES.
Under Poisson noise, ACR-ES achieves the smallest average gap, and under impulse noise CSR-ES gives the best stopping accuracy.

On medical images, CSR-ES is not applicable because it relies on RGB channel consistency.
MR-ES and ACR-ES instead provide reference-based stopping signals for grayscale images.
The results show that MR-ES is highly effective under impulse noise, while ACR-ES is particularly strong under Gaussian and Poisson noise.
For example, on randomly selected images from CT10 and T2-20 under Poisson noise, ACR-ES reduces the average gap to $0.84$ dB, compared with $4.34$ dB for SURE-ES, $5.38$ dB for WMV-ES, and $2.24$ dB for MR-ES.

These results show that the proposed criteria do not obtain smaller gaps by sacrificing the attainable oracle quality.
Instead, they select stopping points closer to the best attainable point along their corresponding trajectories.
The full per-image method-specific oracle comparisons for natural and medical images are provided in Appendix~\ref{app:oracle_full}, where the same conclusion holds across individual images and noise types.

\begin{figure}
\begin{minipage}{0.45\linewidth}
\centerline{\includegraphics[width=2.8in]{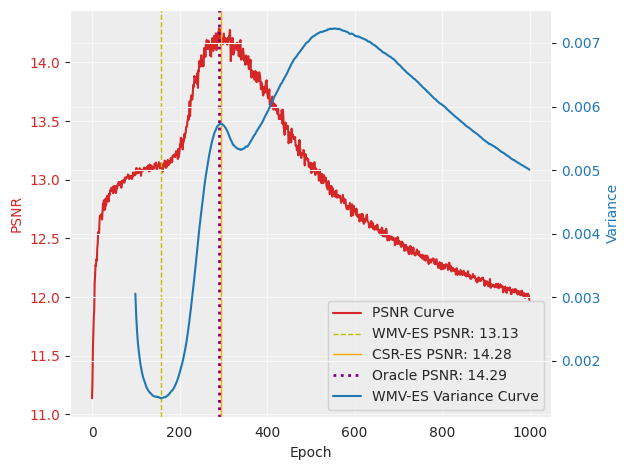}}%\vspace{-0.05in}
\centerline{\scriptsize (a) Super-resolution $\times 8$ with Impulse noise}
\end{minipage}
\hspace{0.35in}
\begin{minipage}{0.45\linewidth}
\centerline{\includegraphics[width=2.8in]{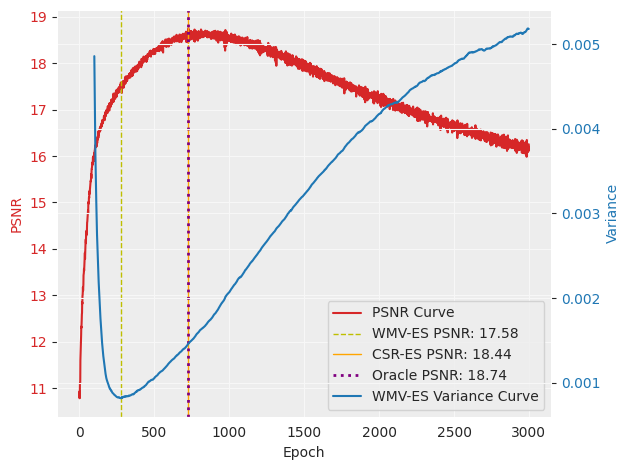}}\vspace{-0.05in}
\centerline{\scriptsize (b) Inpainting $70\%$ with Gaussian noise}
\end{minipage}
\caption{Image restoration results of the WMV and our CSR method. %\cy{maybe we can just show (a) and (b) in here, I will find another better case for our MR and ACR. Or should I put (b) to figure 1? } 
}\label{fig:sr_inp}
\end{figure}

\subsection{Extension to Super-resolution and Inpainting}

Finally, we evaluate whether the proposed stopping principle is specific to denoising or can transfer to other DIP-based inverse problems.
We consider super-resolution and inpainting on Urban10.
For this extension study, we focus on CSR-ES because it is the most directly transferable variant: it is post-hoc, requires no modification of the DIP optimization objective, and can be applied to a standard DIP trajectory.
WMV-ES is included as a trajectory-based baseline.

Since WMV-ES and CSR-ES are evaluated on the same standard DIP trajectory, they share the same oracle PSNR.
Table~\ref{tab:urban10_extension_wmv_csr} reports results for SR $\times4$, SR $\times8$, and inpainting under Gaussian, Poisson, and impulse noise.
CSR-ES consistently achieves smaller gaps to the shared oracle than WMV-ES across both tasks.
For super-resolution, CSR-ES almost reaches the oracle stopping point, with very small gaps across noise types.
For inpainting, the problem is more challenging and the oracle gap is larger, but CSR-ES still substantially improves over WMV-ES.

Figure~\ref{fig:sr_inp} shows representative stopping curves for SR $\times 8$ and inpainting with random $70\%$ missing.
The SR example reveals a failure mode of WMV-ES: the maximum PSNR occurs near a local maximum, rather than a local minimum, of their variance curve, contradicting the variance-minimum intuition used by WMV-ES.
Although this is a single representative case, it shows that variance minima are not always reliable indicators of the oracle stopping region.
In contrast, CSR-ES selects an iteration very close to the maximum-PSNR point.
For inpainting, WMV-ES fails in a different way: its global minimum occurs far before the PSNR peak, leading to premature stopping and a suboptimal reconstruction.
CSR-ES remains better aligned with the oracle region in both examples, suggesting that the proposed consistency-based signal is more robust than variance-based stopping across different inverse problems.

% Together, the quantitative results in Table~\ref{tab:urban10_extension_wmv_csr} and the visual examples in Figure~\ref{fig:sr_inp} suggest that consistency-based early stopping is not tied to denoising, but can serve as a lightweight and transferable criterion for broader DIP-based inverse problems.

\paragraph{Ablations and robustness.}
The appendix provides additional evidence supporting the design choices of the proposed criteria.
Appendix~\ref{app:score_ablation} compares MSE-based and bias-based reference scores.
The results justify using bias-based ACR-ES for natural color images, where cross-reference bias provides useful information beyond direct MSE consistency, and MSE-based MR-ES for grayscale medical images, where direct mask-reference consistency is more stable.
Appendix~\ref{app:aux_sensitivity} studies the auxiliary corrupted references used by ACR-ES.
The results show that ACR-ES remains stable under moderate changes of the auxiliary noise magnitude and under mismatched auxiliary noise levels, denoted as \(\tau_1\neq \tau_2\), indicating that it does not require precise tuning of the auxiliary corruption strength.

Overall, the experiments show that the proposed reference-based early-stopping criteria select stopping points much closer to the oracle than variance-based stopping across natural images, medical images, and additional inverse problems.
CSR-ES is simple and transferable because it is post-hoc and requires no change to DIP training.
MR-ES extends the reference-based principle to single-channel medical images where channel consistency is unavailable.
ACR-ES provides a more flexible auxiliary-reference signal, especially useful for challenging or signal-dependent corruptions.

\section{Conclusion}
DIP early stopping can be reduced to a validation problem. To address the
absence of clean validation data, we proposed self-referenced criteria that
construct pseudo-references from the noisy observation itself. Three concrete
methods instantiate this principle: CSR-ES, MR-ES
, and ACR-ES. Across experiments under multiple image datasets and several noise types for imaging inverse problems, these criteria consistently select stopping points closer to the oracle than existing baselines.

\section{Acknowledgments}
This work was supported in part by the Defense Advanced Research Projects Agency (DARPA) under Cooperative Agreement No. HR0011-25-2-0021.

\bibliography{refs}
%%%%%%%%%%%%%%%%%%%%%%%%%%%%%%%%%%%%%%%%%%%%%%%%%%%%%%%%%%%%
\newpage
\appendix
\onecolumn
\par\noindent\rule{\textwidth}{1pt}
\begin{center}
{\Large \bf Appendix}
\end{center}
\vspace{-0.1in}
\par\noindent\rule{\textwidth}{1pt}
\appendix

%%%%%%%%%%%%%%%%%%%%%%%%%%%%%%%%%%%%%%%%%%%%%%%%%%%%%%%%%%%%%%%%%%%%%%%%%%%%%%%
%%%%%%%%%%%%%%%%%%%%%%%%%%%%%%%%%%%%%%%%%%%%%%%%%%%%%%%%%%%%%%%%%%%%%%%%%%%%%%%

\section{Reference Validation Theory}
\label{app:reference_validation}

This appendix formalizes the validation principle used in
Section~\ref{sec:reference_principle}. Throughout this section, the DIP trajectory
$\{\hat x_t(y_1)\}_{t=1}^T$ is trained only on
\begin{equation}\nonumber
y_1=x+\eta_1,
\end{equation}
and the reference image is
\begin{equation}\nonumber
y_2=x+\eta_2,
\end{equation}
where $\eta_2$ is independent of $y_1$ and of the fitted trajectory conditional
on $y_1$.

\subsection{Setup and notation}

For each iteration $t$, define the clean risk
\begin{equation}\nonumber
R(t):=\frac1n\|\hat x_t(y_1)-x\|^2
\end{equation}
and the reference validation loss
\begin{equation}\nonumber
V_t:=\frac1n\|\hat x_t(y_1)-y_2\|^2.
\end{equation}
Let
\begin{equation}\nonumber
\hat t\in\arg\min_{1\le t\le T} V_t,
\qquad
t^\star\in\arg\min_{1\le t\le T} R(t).
\end{equation}
The goal is to compare $R(\hat t)$ to the oracle risk $R(t^\star)$.

\subsection{Single-reference concentration}

\begin{theorem}[Single-reference validation]
\label{thm:single_reference_validation}
Assume that the entries of $\eta_2$ are independent, zero-mean,
sub-Gaussian random variables with variance proxy $\sigma^2$ and
$\mathbb{E}\eta_{2,i}^2=\sigma^2$. Then there exists a universal constant
$C>0$ such that, with probability at least $1-\delta$, uniformly over
$1\le t\le T$,
\begin{equation}\nonumber
\left|
V_t - R(t)-\sigma^2
\right|
\le
C\left[
M_T\sqrt{\frac{\log(T/\delta)}{n}}
+
\sigma^2\frac{\log(T/\delta)}{n}
\right],
\end{equation}
where
\begin{equation}\nonumber
M_T
:=
\sigma\sup_{1\le t\le T}\sqrt{R(t)}+\sigma^2 .
\end{equation}
\end{theorem}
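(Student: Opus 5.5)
Expand the validation loss around the clean image. Write $e_t:=\hat x_t(y_1)-x$, so that
\begin{equation}\nonumber
V_t=\frac1n\|e_t-\eta_2\|^2=R(t)-\frac2n\langle e_t,\eta_2\rangle+\frac1n\|\eta_2\|^2 .
\end{equation}
Consequently,
\begin{equation}\nonumber
V_t-R(t)-\sigma^2=-\frac2n\langle e_t,\eta_2\rangle+\Bigl(\frac1n\|\eta_2\|^2-\sigma^2\Bigr).
\end{equation}
The plan is to bound the two terms separately and then take a union bound over $t\le T$. Throughout, condition on $y_1$. By the independence assumption, every $e_t$ is then a fixed (deterministic) vector, while $\eta_2$ retains its sub-Gaussian law.

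\emph{Cross term.} For fixed $t$, the quantity $\langle e_t,\eta_2\rangle=\sum_i e_{t,i}\eta_{2,i}$ is a weighted sum of independent zero-mean sub-Gaussians. Its variance proxy is at most $C\sigma^2\|e_t\|^2=C\sigma^2 nR(t)$. Hoeffding's inequality for sub-Gaussians gives, with probability at least $1-\delta/(2T)$,
\begin{equation}\nonumber
\frac2n|\langle e_t,\eta_2\rangle|\le C\sigma\sqrt{R(t)}\sqrt{\frac{\log(T/\delta)}{n}}\le C M_T\sqrt{\frac{\log(T/\delta)}{n}} .
\end{equation}
A union bound over the $T$ iterations makes this hold simultaneously for all $t$ with probability at least $1-\delta/2$. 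Because the bound holds conditionally on $y_1$ for every $y_1$, it also holds unconditionally.

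\emph{Quadratic term.} This term does not depend on $t$, so no union bound is needed. The variables $\eta_{2,i}^2-\sigma^2$ are independent, centered, and sub-exponential with $\psi_1$-norm $\lesssim\sigma^2$. Bernstein's inequality gives, with probability at least $1-\delta/2$,
\begin{equation}\nonumber
\Bigl|\frac1n\|\eta_2\|^2-\sigma^2\Bigr|\le C\Bigl[\sigma^2\sqrt{\frac{\log(1/\delta)}{n}}+\sigma^2\frac{\log(1/\delta)}{n}\Bigr].
\end{equation}
The $\sigma^2\sqrt{\log/n}$ part is absorbed into $M_T\sqrt{\log(T/\delta)/n}$, since $M_T\ge\sigma^2$. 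The linear part is already of the form $\sigma^2\log(T/\delta)/n$. Combining the two events completes the proof.

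\emph{Main subtlety.} The delicate step is justifying that $e_t$ can be treated as fixed when applying concentration to $\eta_2$. This requires the whole trajectory to be measurable with respect to $y_1$ and the network's initialization/randomness, all independent of $\eta_2$; this is exactly the stated assumption. A second point to check is that the assumption "variance proxy $\sigma^2$ with $\mathbb{E}\eta_{2,i}^2=\sigma^2$" makes the $\psi_1$-norm of $\eta_{2,i}^2$ of order $\sigma^2$. This follows from the standard fact that the square of a sub-Gaussian variable is sub-exponential.
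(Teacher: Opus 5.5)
Your proposal is correct and follows the paper's proof: you condition on $y_1$ so that $e_t$ is fixed, split $V_t-R(t)-\sigma^2$ into the cross term $-\frac2n\langle e_t,\eta_2\rangle$ and the quadratic term $\frac1n\|\eta_2\|^2-\sigma^2$, then apply sub-Gaussian (Hoeffding) concentration with a union bound over $t\le T$ to the first and Bernstein to the second. Your two refinements---not union-bounding the $t$-independent quadratic term, and explicitly absorbing its $\sigma^2\sqrt{\log(T/\delta)/n}$ part via $M_T\ge\sigma^2$---are exactly how the paper argues its operator-form version (Theorem~\ref{thm:operator_validation}).
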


\begin{proof}
Condition on $y_1$ and hence on the entire DIP trajectory
$\{\hat x_t(y_1)\}_{t=1}^T$. For each fixed $t$, write
\begin{equation}\nonumber
a_t:=\hat x_t(y_1)-x.
\end{equation}
Since $y_2=x+\eta_2$,
\begin{equation}\nonumber
V_t
=
\frac1n\|a_t-\eta_2\|^2
=
\frac1n\|a_t\|^2
-\frac{2}{n}\langle a_t,\eta_2\rangle
+\frac1n\|\eta_2\|^2.
\end{equation}
Therefore
\begin{equation}\nonumber
V_t-R(t)-\sigma^2
=
-\frac{2}{n}\langle a_t,\eta_2\rangle
+
\frac1n\left(\|\eta_2\|^2-n\sigma^2\right).
\end{equation}

The first term is a centered sub-Gaussian linear functional. Conditional on
$a_t$, standard sub-Gaussian concentration gives, with probability at least
$1-\delta/(2T)$,
\begin{equation}\nonumber
\left|
\frac{2}{n}\langle a_t,\eta_2\rangle
\right|
\le
C\sigma\sqrt{R(t)}
\sqrt{\frac{\log(T/\delta)}{n}}.
\end{equation}
The second term is a centered average of sub-exponential random variables
$\eta_{2,i}^2-\sigma^2$. Bernstein or Hanson--Wright concentration gives,
with probability at least $1-\delta/(2T)$,
\begin{equation}\nonumber
\left|
\frac1n\left(\|\eta_2\|^2-n\sigma^2\right)
\right|
\le
C\sigma^2
\left[
\sqrt{\frac{\log(T/\delta)}{n}}
+
\frac{\log(T/\delta)}{n}
\right].
\end{equation}
Combining the two inequalities and taking a union bound over
$t=1,\ldots,T$ yields the stated result.
\end{proof}

\subsection{Oracle inequality}

\begin{corollary}[Near-oracle stopping]
\label{cor:reference_oracle}
Under the event in Theorem~\ref{thm:single_reference_validation}, define
\begin{equation}\nonumber
\varepsilon_n
:=
C\left[
M_T\sqrt{\frac{\log(T/\delta)}{n}}
+
\sigma^2\frac{\log(T/\delta)}{n}
\right].
\end{equation}
Then the stopping time $\hat t\in\arg\min_t V_t$ satisfies
\begin{equation}\nonumber
R(\hat t)
\le
R(t^\star)+2\varepsilon_n.
\end{equation}
\end{corollary}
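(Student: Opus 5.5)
The plan is the standard argmin-transfer argument, working entirely on the high-probability event $\mathcal E$ supplied by Theorem~\ref{thm:single_reference_validation}. On $\mathcal E$ we have, simultaneously for every $1\le t\le T$,
\begin{equation}\nonumber
R(t)+\sigma^2-\varepsilon_n\;\le\; V_t\;\le\; R(t)+\sigma^2+\varepsilon_n .
\end{equation}
The key point is that this bound is uniform in $t$. That matters because $\hat t$ is a data-dependent index: it is chosen as a function of $\eta_2$, so a pointwise bound at a fixed $t$ could not legitimately be evaluated at $\hat t$. The union bound over $t=1,\dots,T$ inside the theorem is exactly what allows us to plug in both $\hat t$ and $t^\star$. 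Note also that $t^\star$ depends only on $y_1$ and $x$, not on $\eta_2$, although uniformity makes this irrelevant.

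The chain of inequalities is then:
\begin{equation}\nonumber
R(\hat t)\;\le\; V_{\hat t}-\sigma^2+\varepsilon_n\;\le\; V_{t^\star}-\sigma^2+\varepsilon_n\;\le\; R(t^\star)+2\varepsilon_n .
\end{equation}
The first step is the lower half of the two-sided bound at $t=\hat t$. The second uses $V_{\hat t}\le V_{t^\star}$, which holds because $\hat t$ minimizes $V$ over $\{1,\dots,T\}$ and $t^\star$ lies in that set. The third is the upper half of the bound at $t=t^\star$. The additive constant $\sigma^2$ cancels, which is the formal version of the observation in Section~\ref{sec:reference_principle} that the reference loss and the clean MSE share a minimizer up to a shift.

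The only real obstacle is a bookkeeping one. We must make sure $\varepsilon_n$ is a single deterministic quantity, given the trajectory, that does not depend on $t$. This is why $M_T$ is defined with $\sup_{t\le T}\sqrt{R(t)}$ rather than $\sqrt{R(t)}$. With the per-$t$ bound $C[\sigma\sqrt{R(t)}+\sigma^2]\sqrt{\log(T/\delta)/n}$, one would instead obtain $\varepsilon_n(\hat t)+\varepsilon_n(t^\star)$. That is still valid, but it is not the stated form. Bounding both by the supremum gives $2\varepsilon_n$.

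Since everything is conditional on $y_1$, and the event in the theorem has conditional probability at least $1-\delta$ for every $y_1$, the conclusion also holds unconditionally with probability at least $1-\delta$.
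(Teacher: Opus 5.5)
Your proof is correct and is essentially the paper's own proof. On the event of Theorem~\ref{thm:single_reference_validation}, you apply the two-sided bound at $\hat t$ and $t^\star$ together with $V_{\hat t}\le V_{t^\star}$ to obtain $R(\hat t)\le V_{\hat t}-\sigma^2+\varepsilon_n\le V_{t^\star}-\sigma^2+\varepsilon_n\le R(t^\star)+2\varepsilon_n$. Your remarks on uniformity in $t$ and on the supremum in $M_T$ making $\varepsilon_n$ independent of $t$ are accurate; they spell out what the paper leaves implicit.
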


\begin{proof}
On the event of Theorem~\ref{thm:single_reference_validation},
\begin{equation}\nonumber
R(t)\le V_t-\sigma^2+\varepsilon_n
\quad\text{and}\quad
V_t-\sigma^2\le R(t)+\varepsilon_n
\end{equation}
for all $t$. Hence
\begin{equation}\nonumber
R(\hat t)
\le
V_{\hat t}-\sigma^2+\varepsilon_n
\le
V_{t^\star}-\sigma^2+\varepsilon_n
\le
R(t^\star)+2\varepsilon_n.
\end{equation}
\end{proof}

\subsection{PSNR interpretation}

If image intensities are normalized so that the peak value is $I_{\max}$, then
\begin{equation}\nonumber
\operatorname{PSNR}(t)
=
10\log_{10}\left(\frac{I_{\max}^2}{R(t)}\right).
\end{equation}
If $R(\hat t)\le R(t^\star)+2\varepsilon_n$, then
\begin{equation}\nonumber
\operatorname{PSNR}(t^\star)-\operatorname{PSNR}(\hat t)
=
10\log_{10}\left(\frac{R(\hat t)}{R(t^\star)}\right)
\le
10\log_{10}\left(1+\frac{2\varepsilon_n}{R(t^\star)}\right).
\end{equation}
Thus the PSNR loss is small whenever the oracle MSE is bounded away from zero and
the validation error $\varepsilon_n$ is small.

\section{Pseudo-Reference Validation}
\label{app:pseudo_reference}

This appendix justifies the pseudo-reference discussion in
Section~\ref{sec:reference_principle}. The purpose is not to claim that a
pseudo-reference is fully independent, but to identify the bias introduced by
the component shared between fitting and validation.

\subsection{Risk estimated by pseudo-validation}

Assume that the observed image and pseudo-reference can be decomposed as
\begin{equation}\nonumber
y=x+s+\eta,
\qquad
\tilde y=x+s+\tilde\eta,
\end{equation}
where $s$ is shared between fitting and validation, while $\tilde\eta$ is
zero-mean and independent of the DIP trajectory conditional on $y$ and $s$.
Define
\begin{equation}\nonumber
\tilde V_t
:=
\frac1n\|\hat x_t(y)-\tilde y\|^2.
\end{equation}

\begin{proposition}[Pseudo-validation estimates effective-target risk]
\label{prop:pseudo_effective_target}
Conditioned on $y$ and $s$,
\begin{equation}\nonumber
\mathbb{E}_{\tilde\eta}\!\left[
\tilde V_t \mid y,s
\right]
=
\frac1n\|\hat x_t(y)-(x+s)\|^2
+
\frac1n\mathbb{E}\|\tilde\eta\|^2.
\end{equation}
In particular, if $\tilde\eta$ has per-pixel variance
$\sigma_{\tilde\eta}^2$, then
\begin{equation}\nonumber
\mathbb{E}_{\tilde\eta}\!\left[
\tilde V_t \mid y,s
\right]
=
\frac1n\|\hat x_t(y)-(x+s)\|^2+\sigma_{\tilde\eta}^2.
\end{equation}
\end{proposition}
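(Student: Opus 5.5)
The argument mirrors the first step of the proof of Theorem~\ref{thm:single_reference_validation}, with the clean image $x$ replaced by the effective target $x+s$. I will condition on $y$ and $s$ throughout. Under this conditioning the whole DIP trajectory $\{\hat x_t(y)\}_{t=1}^T$ is deterministic, since it is trained only on $y$. Define
\begin{equation}\nonumber
b_t:=\hat x_t(y)-(x+s),
\end{equation}
which is then a fixed vector. Because $\tilde y=x+s+\tilde\eta$, we have $\hat x_t(y)-\tilde y=b_t-\tilde\eta$. Expanding the square gives
\begin{equation}\nonumber
\tilde V_t=\frac1n\|b_t\|^2-\frac2n\langle b_t,\tilde\eta\rangle+\frac1n\|\tilde\eta\|^2 .
\end{equation}

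Next I take the conditional expectation term by term.

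The first term $\frac1n\|b_t\|^2$ is measurable with respect to $(y,s)$, so it passes through the expectation unchanged.

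For the cross term, $b_t$ is fixed given $(y,s)$, so
\begin{equation}\nonumber
\mathbb{E}[\langle b_t,\tilde\eta\rangle\mid y,s]=\langle b_t,\mathbb{E}[\tilde\eta\mid y,s]\rangle .
\end{equation}
This vanishes because $\tilde\eta$ is zero-mean and independent of the trajectory conditional on $y$ and $s$. The same independence means $\mathbb{E}[\|\tilde\eta\|^2\mid y,s]=\mathbb{E}\|\tilde\eta\|^2$, which gives the first displayed identity.

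For the second identity, note that per-pixel variance $\sigma_{\tilde\eta}^2$ together with zero mean gives $\mathbb{E}\tilde\eta_i^2=\sigma_{\tilde\eta}^2$ for each $i$. Summing over the $n$ pixels and dividing by $n$ gives $\sigma_{\tilde\eta}^2$.

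The only delicate point is reading the independence assumption correctly. I interpret it as saying that the conditional law of $\tilde\eta$ given $(y,s)$ has mean zero and the stated second moment. That is precisely what is needed both to kill the cross term and to replace the conditional second moment by the unconditional one. No concentration argument is required, since the statement concerns expectations only.
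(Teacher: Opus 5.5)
Your proof is correct and follows the paper's argument exactly. Like the paper, you set $b_t=\hat x_t(y)-(x+s)$, expand $\tilde V_t=\frac1n\|b_t-\tilde\eta\|^2$, and kill the cross term because $b_t$ is fixed given $(y,s)$ while $\tilde\eta$ has conditional mean zero; your remark that one really needs $\mathbb{E}[\tilde\eta\mid y,s]=0$ and $\mathbb{E}[\|\tilde\eta\|^2\mid y,s]=\mathbb{E}\|\tilde\eta\|^2$ makes explicit what the paper's one-line proof uses implicitly.
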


\begin{proof}
Let
\begin{equation}\nonumber
b_t:=\hat x_t(y)-(x+s),
\end{equation}
then
\begin{equation}\nonumber
\tilde V_t
=
\frac1n\|b_t-\tilde\eta\|^2
=
\frac1n\|b_t\|^2
-\frac{2}{n}\langle b_t,\tilde\eta\rangle
+
\frac1n\|\tilde\eta\|^2.
\end{equation}
Conditioned on $y$ and $s$, $b_t$ is fixed and
$\mathbb{E}\tilde\eta=0$, so the cross term has mean zero. This proves the
claim.
\end{proof}

Proposition~\ref{prop:pseudo_effective_target} gives a conservative
interpretation: pseudo-validation tracks risk to the effective target $x+s$,
not necessarily to the clean image $x$. Therefore, the construction of
$\tilde y$ should make the shared component $s$ small or ensure that $s$ is not
preferentially fitted before the remaining noise.

\subsection{Average-case behavior under DIP noise fitting}

The worst-case in Proposition~\ref{prop:pseudo_effective_target} is often overly pessimistic in DIP for the following reason. In practice, the shared component $s$ and the non-shared component $\eta$ may have the same statistical structure, e.g., both are Gaussian-like noise components. In this case, DIP has no mechanism to preferentially fit $s$ before fitting $\eta$. Instead, the typical optimization trajectory first captures the structured image component $x$, and only later gradually fits the entire noise component $s+\eta$.

Assume that, after the oracle stopping region, the DIP trajectory gradually fits
the noise in $y=x+s+\eta$ according to
\begin{equation}\nonumber
\hat x_t(y)\approx x+\alpha_t(s+\eta),
\qquad
\alpha_t\in[0,1].
\end{equation}
Assume also that $\eta$ and $\tilde\eta$ are independent, zero-mean, and have the
same per-pixel variance $\sigma_\eta^2$. Let
\begin{equation}\nonumber
S:=\frac1n\|s\|^2.
\end{equation}
Then
\begin{equation}\nonumber
\hat x_t(y)-\tilde y
\approx
(\alpha_t-1)s+\alpha_t\eta-\tilde\eta.
\end{equation}
Taking expectation over the non-shared noises conditional on $s$ gives
\begin{equation}\nonumber
\mathbb{E}[\tilde V_t\mid s]
\approx
(1-\alpha_t)^2S
+
\alpha_t^2\sigma_\eta^2
+
\sigma_\eta^2
=
(1-\alpha_t)^2S+(\alpha_t^2+1)\sigma_\eta^2.
\end{equation}

\begin{proposition}[Effect of shared noise in the stylized model]
\label{prop:shared_noise_model}
As a function of $\alpha\in[0,1]$, the expression
\begin{equation}\nonumber
(1-\alpha)^2S+(\alpha^2+1)\sigma_\eta^2
\end{equation}
is minimized at
\begin{equation}\nonumber
\alpha^\star
=
\frac{S}{S+\sigma_\eta^2}.
\end{equation}
\end{proposition}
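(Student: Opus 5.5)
The function $g(\alpha):=(1-\alpha)^2S+(\alpha^2+1)\sigma_\eta^2$ is a one-dimensional quadratic in $\alpha$, so I will find its minimizer over $\mathbb{R}$ by calculus and then check that this minimizer lies in $[0,1]$. Expanding gives
\begin{equation}\nonumber
g(\alpha)=(S+\sigma_\eta^2)\alpha^2-2S\alpha+S+\sigma_\eta^2 .
\end{equation}
Here $S=\frac1n\|s\|^2\ge 0$ and $\sigma_\eta^2\ge 0$.

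First I treat the nondegenerate case $S+\sigma_\eta^2>0$. The leading coefficient is then strictly positive, so $g$ is strictly convex. Its unique global minimizer on $\mathbb{R}$ is the stationary point:
\begin{equation}\nonumber
g'(\alpha)=2(S+\sigma_\eta^2)\alpha-2S=0
\quad\Longrightarrow\quad
\alpha^\star=\frac{S}{S+\sigma_\eta^2}.
\end{equation}
Since $0\le S\le S+\sigma_\eta^2$, we have $\alpha^\star\in[0,1]$. Because $g$ is convex, its minimizer over the constrained interval is the same point, and no boundary analysis is needed. As a check, completing the square gives
\begin{equation}\nonumber
g(\alpha)=(S+\sigma_\eta^2)(\alpha-\alpha^\star)^2+\frac{S\sigma_\eta^2}{S+\sigma_\eta^2}+\sigma_\eta^2 .
\end{equation}
This also shows that the excess over the minimum is quadratic in $\alpha-\alpha^\star$, which is what Section~\ref{sec:reference_principle} uses for the $O(1/r^2)$ excess-MSE remark.

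The only potential obstacle is the degenerate case $S=\sigma_\eta^2=0$. There $g\equiv 0$, every $\alpha$ is a minimizer, and the formula for $\alpha^\star$ is undefined ($0/0$). I will state that the proposition is understood under $S+\sigma_\eta^2>0$, which holds in any setting with nonzero noise. Otherwise the argument is routine.
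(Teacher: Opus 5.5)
Your proof is correct and follows essentially the same route as the paper: the paper sets $2(\alpha-1)S+2\alpha\sigma_\eta^2=0$ and checks the second derivative $2(S+\sigma_\eta^2)>0$. Your explicit check that $\alpha^\star\in[0,1]$ (so the constraint is inactive) and your flagging of the degenerate case $S=\sigma_\eta^2=0$ are small rigor additions the paper leaves implicit. One aside is misattributed: the $O(1/r^2)$ remark comes from evaluating the clean-image error $\alpha^2(S+\sigma_\eta^2)$ at $\alpha^\star=1/(r+1)$, not from the completed-square form of $g$.
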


\begin{proof}
Differentiate:
\begin{equation}\nonumber
\frac{d}{d\alpha}
\left[
(1-\alpha)^2S+(\alpha^2+1)\sigma_\eta^2
\right]
=
2(\alpha-1)S+2\alpha\sigma_\eta^2.
\end{equation}
Setting the derivative to zero gives
\begin{equation}\nonumber
\alpha(S+\sigma_\eta^2)=S,
\end{equation}
and therefore
\begin{equation}\nonumber
\alpha^\star=\frac{S}{S+\sigma_\eta^2}.
\end{equation}
The second derivative is $2S+2\sigma_\eta^2>0$, so this is the minimizer.
\end{proof}

\paragraph{Quadratic improvement over the worst case.}
Suppose the pseudo-reference is constructed so that the shared energy
is a $1/m$ fraction of the non-shared variance,
$S = \sigma_\eta^2/m$ with $m\ge 1$.
Proposition~\ref{prop:shared_noise_model} then gives
$\alpha^\star = 1/(m+1)$, and the expected MSE to the clean image
$x$ at the model-implied stopping time is
\[
    \mathbb{E}\!\left[
        \tfrac{1}{n}\|\hat x_{t^\star}(y)-x\|^2 \,\Big|\, s
    \right]
    \;\approx\;
    (\alpha^\star)^2\,(S+\sigma_\eta^2)
    \;=\;
    \frac{\sigma_\eta^2}{m(m+1)}
    \;=\;
    \mathcal{O}(1/m^2),
\]
which is quadratically tighter than the worst-case excess
$\|s\|^2/n = \sigma_\eta^2/m = \mathcal{O}(1/m)$ implied by
Proposition~\ref{prop:pseudo_effective_target}. Modest decorrelation
($m$ moderately large) is therefore sufficient for near-oracle MSE
under the noise-fitting model.

\subsection{Interpretation and limitations}

The formula for $\alpha^\star$ explains the role of pseudo-reference design.
If $S\ll\sigma_\eta^2$, then $\alpha^\star\approx 0$, so the validation curve is
minimized near the beginning of the noise-fitting phase. This corresponds to a
useful early-stopping signal. If $S\gg\sigma_\eta^2$, then
$\alpha^\star\approx 1$, so the validation minimum may occur late, after DIP has
already fitted much of the noise.

This model is only a stylized description of DIP dynamics. It assumes that the
shared and non-shared components have comparable statistical structure (e.g. both are Gaussian noise), and that
the network fits them at similar rates. It should therefore be read as a design
principle rather than a complete dynamical theory of DIP optimization.

\section{Algorithmic Details}
\label{app:algorithm_details}

This appendix records the pseudocode for the three pseudo-reference early stopping criteria introduced in Section~\ref{sec:algorithms}. The main text gives the defining validation curves; the algorithms below make explicit the inputs, selected stopping time, and returned reconstruction.

\begin{algorithm}[h]
\caption{Channel-Similarity Reference Early Stopping (CSR-ES)}
\label{alg:rgb-similarity}
\begin{algorithmic}[1]
\Require Noisy RGB image $y=(y_R,y_G,y_B)$; DIP trajectory $\{\hat x_t\}_{t=1}^T$
\State Compute pairwise channel distances
\begin{equation}\nonumber
    d_{ij}=\frac1n\|y_i-y_j\|^2,
    \qquad i,j\in\{R,G,B\},\ i\ne j.
\end{equation}
\State Select the closest channel pair
\begin{equation}\nonumber
    (i^\star,j^\star)=\arg\min_{i\ne j}d_{ij}.
\end{equation}
\State Evaluate the cross-channel validation curve
\begin{equation}\nonumber
    V_t^{\rm CSR}=\frac1n\|\hat x_{t,i^\star}-y_{j^\star}\|^2.
\end{equation}
\State Select
\begin{equation}\nonumber
    \hat t_{\rm CSR}=\arg\min_{1\le t\le T}V_t^{\rm CSR}.
\end{equation}
\State \Return $\hat x_{\hat t_{\rm CSR}}$.
\end{algorithmic}
\end{algorithm}

\begin{algorithm}[h]
\caption{Mask-Reference Early Stopping (MR-ES)}
\label{alg:mask-reference}
\begin{algorithmic}[1]
\Require Noisy image $y$; retention probability $p$; DIP iterations $T$
\State Sample or choose a mask $M\in\{0,1\}^n$ with high retention rate, e.g., $\mathbb P(M_i=1)=p=0.98$, and set $H=1-M$.
\State Train DIP using the masked loss
\begin{equation}\nonumber
    \min_\theta \|M\odot(f_\theta(z)-y)\|^2.
\end{equation}
\State Along the resulting trajectory $\{\hat x_t\}_{t=1}^T$, evaluate only on held-out pixels:
\begin{equation}\nonumber
    V_t^{\rm MR}
    =
    \frac1{\sum_iH_i}
    \|H\odot(\hat x_t-y)\|^2.
\end{equation}
\State Select
\begin{equation}\nonumber
    \hat t_{\rm MR}=\arg\min_{1\le t\le T}V_t^{\rm MR}.
\end{equation}
\State \Return $\hat x_{\hat t_{\rm MR}}$.
\end{algorithmic}
\end{algorithm}

\begin{algorithm}[h]
\caption{Augmented-Channel Reference Early Stopping (ACR-ES)}
\label{alg:aug-channel}
\begin{algorithmic}[1]
\Require Noisy image $y$ with $C$ channels; auxiliary noise scale $\tau$; burn-in $t_0$; DIP iterations $T$
\State 
% Generate two pseudo-copies
% \begin{equation}\nonumber
%     y^{(1)}=y+\zeta_1,
%     \qquad
%     y^{(2)}=y+\zeta_2,
%     \qquad
%     \zeta_1,\zeta_2\sim\mathcal N(0,\tau^2I).
% \end{equation}
Generate two independent corrupted measurements
\begin{equation}\nonumber
    y^{(1)} = \mathrm{Corrupt}(y;\zeta_1),
    \qquad
    y^{(2)} = \mathrm{Corrupt}(y;\zeta_2),
\end{equation}
where $\zeta_1$ and $\zeta_2$ with noise level $\tau$ are user-specified auxiliary independent perturbations following the same corruption model. 
\State Construct the augmented fitting target
\begin{equation}\nonumber
    Y=\operatorname{concat}(y,y^{(1)}).
\end{equation}
\State Train a DIP network with $2C$ output channels to fit $Y$, producing outputs $\hat X_t=f_{\theta_t}(z)$.
\State Let $\hat X_{t,{\rm aux}}$ denote the auxiliary output channels $C+1:2C$. Evaluate
\begin{equation}\nonumber
B_t^{\rm ACR}
=
\frac{
\sum_\omega \|\omega\|^2
\left|
\mathcal F(\hat X_{t,{\rm aux}}-y^{(2)})(\omega)
\right|^2
}{
\sum_\omega
\left|
\mathcal F(\hat X_{t,{\rm aux}}-y^{(2)})(\omega)
\right|^2
}.
\end{equation}
\State Select
\begin{equation}\nonumber
    \hat t_{\rm ACR}=\arg\max_{t\ge t_0}B_t^{\rm ACR}.
\end{equation}
\State \Return the primary reconstruction $\hat x_{\hat t_{\rm ACR}}=\hat X_{\hat t_{\rm ACR},1:C}$.
\end{algorithmic}
\end{algorithm}

Unlike MSE-based stopping rules that search for a minimum, the spectral-bias ACR-ES Algorithm \ref{alg:aug-channel} uses an $\arg\max$ criterion.
This is because, due to the spectral bias of DIP, the average-frequency score often follows an increase-then-decrease pattern during DIP training.
In early iterations, the network mainly reconstructs low-frequency structures; as training progresses, meaningful higher-frequency details are recovered and the score increases.
After the near-oracle region, further optimization tends to fit noise or unstable artifacts, reducing the reference-consistent spectral concentration.
Thus, the peak of the spectral-bias score marks the transition from useful detail recovery to overfitting and provides an effective stopping indicator.
\iffalse 
\input{sections/App_Theory_for_Channel-Similarity_Reference}
\fi
\section{Mask-Reference Validation}
\label{app:mask_reference}

This appendix gives a partial justification for the mask-reference early stopping
criterion used by MR-ES. The key point is that, if held-out pixels are excluded
from the training loss, then the noise at those held-out pixels is independent of
the fitted trajectory at those locations.

\subsection{Setup}

Let
\begin{equation}\nonumber
y=x+\xi,
\end{equation}
where $\xi\in\mathbb{R}^n$ has independent, zero-mean, sub-Gaussian entries with
variance proxy $\sigma^2$ and per-pixel variance $\sigma^2$. Let
$M\in\{0,1\}^n$ be a mask independent of $\xi$, and let
\begin{equation}\nonumber
H:=1-M
\end{equation}
be the held-out indicator. Define
\begin{equation}\nonumber
m:=\sum_{i=1}^n H_i.
\end{equation}
Assume that the DIP trajectory is trained using only the retained pixels:
\begin{equation}\nonumber
\theta_t
\approx
\arg\min_\theta
\left\|
M\odot(f_\theta(z)-y)
\right\|^2.
\end{equation}
Thus the held-out pixels do not participate in fitting. Define the held-out
validation loss
\begin{equation}\nonumber
V_t^{\rm mask}
:=
\frac1m
\left\|
H\odot(\hat x_t-y)
\right\|^2
\end{equation}
and the held-out clean risk
\begin{equation}\nonumber
R_H(t)
:=
\frac1m
\left\|
H\odot(\hat x_t-x)
\right\|^2.
\end{equation}

\subsection{Held-out independence}

\begin{lemma}[Independence at held-out positions]
\label{lem:heldout_independence}
For every pixel $i$ with $H_i=1$, the noise entry $\xi_i$ is independent of
$\hat x_t(i)$.
\end{lemma}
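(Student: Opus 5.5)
The plan is to show that $\hat x_t(i)$ is a measurable function of quantities that do not involve $\xi_i$, and then to invoke the independence of the entries of $\xi$ and of $M$.

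\textbf{Step 1: the trajectory depends only on retained data.} The DIP trajectory $\theta_0,\theta_1,\dots,\theta_t$ is generated by a deterministic update map (for example gradient descent or Adam) applied to the masked loss $L_M(\theta)=\|M\odot(f_\theta(z)-y)\|^2$. It starts from an initialization $\theta_0$ and input $z$, and both are drawn independently of $\xi$. The loss, and therefore every gradient, is a function of $\theta$, $z$, $M$, and the retained data $M\odot y=(M_jy_j)_j$ only. If $H_i=1$, then $M_i=0$, so $\partial L_M/\partial\theta$ contains no term involving $y_i=x_i+\xi_i$. By induction on $t$, each $\theta_t$ is a measurable function $G_t(\theta_0,z,M,x,(\xi_j)_{j:M_j=1})$. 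Consequently $\hat x_t(i)=f_{\theta_t}(z)_i$ is also such a function.

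\textbf{Step 2: independence.} We condition on $M$, which is legitimate because $M$ is independent of $\xi$. On the event $H_i=1$, the random vector $(\theta_0,z,(\xi_j)_{j\neq i})$ is independent of $\xi_i$. This uses the independence of the entries of $\xi$ and the assumption that $\theta_0$ and $z$ are drawn independently of the noise. Step 1 shows that $\hat x_t(i)$ is a measurable function of this vector, so it is independent of $\xi_i$ given $M$. Because $M$ is independent of $\xi$, the conditional law of $\xi_i$ given $M$ equals its marginal law. Integrating over $M$ then yields independence conditionally on $\{H_i=1\}$, which is the intended meaning of the statement. The stronger claim that the whole held-out noise vector $(\xi_i)_{H_i=1}$ is jointly independent of the full trajectory follows from the same argument.

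\textbf{Main obstacle.} The delicate step is making precise that the optimizer never touches $y_i$. We have to rule out implementation leaks: the loss must multiply by $M$ before any normalization or statistic that uses all of $y$, and any data-dependent randomness such as stochastic updates must avoid held-out pixels. We also need $\theta_0$, $z$, and any optimizer noise to be independent of $\xi$. We will state these points explicitly as the ``implementation requirement'' and formalize the update as $\theta_{k+1}=\Phi(\theta_k,\nabla L_M(\theta_k),\text{state}_k)$ so that the induction in Step 1 is rigorous. The rest is routine measure theory.
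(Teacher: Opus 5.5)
Your proposal is correct and follows essentially the same route as the paper. Because the masked loss involves only pixels with $M_j=1$, $\theta_t$ is a function of $z$, $M$, and the retained observations $\{x_j+\xi_j : M_j=1\}$, so independence of the noise entries, together with independence of $M$ from $\xi$, makes $\xi_i$ independent of $\hat x_t(i)=f_{\theta_t}(z)_i$ whenever $H_i=1$. Your version is more careful than the paper's, which simply conditions on $z$ and $M$: you make the induction over optimizer updates explicit, account for random initialization $\theta_0$ and optimizer state, spell out the integration over $M$, and note the joint independence of the whole held-out noise vector that the held-out validation theorem actually uses.
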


\begin{proof}
The training loss uses only pixels $j$ with $M_j=1$. Hence, conditional on the
fixed input $z$ and the mask $M$, the parameters $\theta_t$ are functions only of
the retained observations $\{y_j:M_j=1\}$, equivalently of
$\{x_j+\xi_j:M_j=1\}$. If $H_i=1$, then $M_i=0$, so $\xi_i$ is not used in the
training loss. Since the noise entries are independent and the mask is
independent of the noise, $\xi_i$ is independent of the retained noise variables
and hence independent of $\theta_t$. Therefore $\xi_i$ is independent of
$\hat x_t(i)=f_{\theta_t}(z)_i$.
\end{proof}

\subsection{Held-out validation theorem}

\begin{theorem}[Validation on the held-out region]
\label{thm:mask_validation}
Under the setup above, there exists a universal constant $C>0$ such that, with
probability at least $1-\delta$, uniformly over $1\le t\le T$,
\begin{equation}\nonumber
\left|
V_t^{\rm mask}-R_H(t)-\sigma^2
\right|
\le
C\left[
M_H\sqrt{\frac{\log(T/\delta)}{m}}
+
\sigma^2\frac{\log(T/\delta)}{m}
\right],
\end{equation}
where
\begin{equation}\nonumber
M_H
:=
\sigma\sup_{1\le t\le T}\sqrt{R_H(t)}+\sigma^2.
\end{equation}
Consequently, if
\begin{equation}\nonumber
\hat t\in\arg\min_t V_t^{\rm mask},
\qquad
t_H^\star\in\arg\min_t R_H(t),
\end{equation}
then
\begin{equation}\nonumber
R_H(\hat t)
\le
R_H(t_H^\star)
+
2C\left[
M_H\sqrt{\frac{\log(T/\delta)}{m}}
+
\sigma^2\frac{\log(T/\delta)}{m}
\right].
\end{equation}
\end{theorem}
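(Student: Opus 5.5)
The plan is to reduce Theorem~\ref{thm:mask_validation} to the argument of Theorem~\ref{thm:single_reference_validation}, restricted to the held-out coordinates. The first step is to condition on $M$ (hence on $H$ and $m$), on $z$, and on the retained observations $\{y_j : M_j=1\}$. By Lemma~\ref{lem:heldout_independence} and its proof, this conditioning fixes the whole trajectory $\{\hat x_t\}_{t=1}^T$. Because the noise entries are independent and the mask is independent of $\xi$, the held-out noise vector $\xi_H:=(\xi_i)_{i:H_i=1}$ keeps its law under this conditioning: independent, zero-mean, sub-Gaussian entries with variance proxy and variance $\sigma^2$. A little more than the pointwise statement of the lemma is needed here, namely joint independence of $\xi_H$ from the entire trajectory. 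The proof of the lemma gives this directly, since $\theta_t$ is a function only of the retained noise.

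Next, for each $t$ set $a_t:=H\odot(\hat x_t-x)$, which is fixed under the conditioning. On held-out pixels, $y=x+\xi$, so expanding the square gives
\begin{equation}\nonumber
V_t^{\rm mask}-R_H(t)-\sigma^2=-\frac{2}{m}\langle a_t,\xi_H\rangle+\frac1m\left(\|\xi_H\|^2-m\sigma^2\right),
\end{equation}
where $\|a_t\|^2=mR_H(t)$. This is exactly the decomposition used in the proof of Theorem~\ref{thm:single_reference_validation}, with $n$ replaced by $m$ and $R$ replaced by $R_H$.

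The two terms are then bounded as in that proof, each at level $\delta/(2T)$. The linear term $\frac{2}{m}\langle a_t,\xi_H\rangle$ is sub-Gaussian with proxy of order $\sigma^2\|a_t\|^2/m^2=\sigma^2R_H(t)/m$. It is therefore bounded by $C\sigma\sqrt{R_H(t)}\sqrt{\log(T/\delta)/m}$. The quadratic term is an average of $m$ centered sub-exponential variables, and Bernstein's inequality bounds it by $C\sigma^2[\sqrt{\log(T/\delta)/m}+\log(T/\delta)/m]$. Taking $\sup_t\sqrt{R_H(t)}$ and a union bound over $t\le T$ gives the stated bound with constant $M_H$, conditionally on the retained data and $M$. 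Because this conditional bound holds for every value of the conditioning variables, it also holds unconditionally.

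The oracle inequality then follows verbatim from the proof of Corollary~\ref{cor:reference_oracle}. Writing $\varepsilon_m$ for the bracketed error term, the chain $R_H(\hat t)\le V_{\hat t}^{\rm mask}-\sigma^2+\varepsilon_m\le V^{\rm mask}_{t_H^\star}-\sigma^2+\varepsilon_m\le R_H(t_H^\star)+2\varepsilon_m$ gives the claim.

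The main obstacle is not the concentration itself but justifying the conditioning carefully. One subtlety is that $m$ is random when the mask is sampled. This is handled by conditioning on $M$, so the bound holds for the realized $m$; it is only meaningful when $m\ge1$. The other is the need for joint independence of $\xi_H$ from the trajectory, which requires that the optimizer, with any randomness independent of $\xi$, never touch held-out pixels.
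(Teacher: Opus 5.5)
Your proposal is correct and follows essentially the same route as the paper. Both expand $V_t^{\rm mask}$ on the held-out pixels into a linear cross term and a centered quadratic term, use Lemma~\ref{lem:heldout_independence} to treat the trajectory as fixed, apply sub-Gaussian and Bernstein concentration with a union bound over $t$, and finish with the comparison argument of Corollary~\ref{cor:reference_oracle}; your added care about joint (not merely pointwise) independence of $\xi_H$ from the trajectory and about conditioning on the random $m$ spells out points the paper's proof leaves implicit.
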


\begin{proof}
Using $y=x+\xi$,
\begin{equation}\nonumber
V_t^{\rm mask}
=
\frac1m\|H\odot(\hat x_t-x-\xi)\|^2.
\end{equation}
Expanding gives
\begin{equation}\nonumber
V_t^{\rm mask}-R_H(t)-\sigma^2
=
-\frac{2}{m}
\langle H\odot(\hat x_t-x),\xi\rangle
+
\frac1m
\left(
\|H\odot\xi\|^2-m\sigma^2
\right).
\end{equation}
By Lemma~\ref{lem:heldout_independence}, the held-out noise entries are
independent of the fitted trajectory at the held-out pixels. Therefore the cross
term is a sub-Gaussian linear functional conditional on the trajectory and mask,
with deviation of order
\begin{equation}\nonumber
\sigma\sqrt{R_H(t)}
\sqrt{\frac{\log(T/\delta)}{m}}.
\end{equation}
The quadratic term is controlled by standard sub-exponential concentration for
squares of sub-Gaussian variables, with deviation of order
\begin{equation}\nonumber
\sigma^2
\left[
\sqrt{\frac{\log(T/\delta)}{m}}
+
\frac{\log(T/\delta)}{m}
\right].
\end{equation}
A union bound over $t=1,\ldots,T$ yields the uniform concentration inequality.
The oracle inequality follows by the same comparison argument as in
Corollary~\ref{cor:reference_oracle}.
\end{proof}
\vspace{0.2in}
\subsection{Transfer to full-image risk}

Theorem~\ref{thm:mask_validation} controls the held-out risk $R_H(t)$, whereas
the desired reconstruction metric is usually the full-image risk
\begin{equation}\nonumber
R(t):=\frac1n\|\hat x_t-x\|^2.
\end{equation}
A transfer condition is therefore needed.

\begin{corollary}[Transfer from held-out risk to full-image risk]
\label{cor:mask_full_risk}
Suppose that the held-out region is representative of the full image along the
trajectory in the sense that
\begin{equation}\nonumber
\sup_{1\le t\le T}|R_H(t)-R(t)|\le\kappa.
\end{equation}
Then the MR-ES stopping time satisfies
\begin{equation}\nonumber
R(\hat t)
\le
\min_{1\le t\le T}R(t)
+
2\kappa
+
2C\left[
M_H\sqrt{\frac{\log(T/\delta)}{m}}
+
\sigma^2\frac{\log(T/\delta)}{m}
\right].
\end{equation}
\end{corollary}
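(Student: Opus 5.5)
The plan is to chain three inequalities, working on the high-probability event of Theorem~\ref{thm:mask_validation}. Let $\varepsilon_m := C\bigl[M_H\sqrt{\log(T/\delta)/m}+\sigma^2\log(T/\delta)/m\bigr]$. Theorem~\ref{thm:mask_validation} then gives the held-out oracle inequality
\begin{equation}\nonumber
R_H(\hat t)\le \min_{t}R_H(t)+2\varepsilon_m .
\end{equation}
The transfer assumption $\sup_t|R_H(t)-R(t)|\le\kappa$ will be used twice. The first use moves from the full risk to the held-out risk at the selected time $\hat t$. The second use moves back from the held-out risk to the full risk at the full-image oracle time.

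Let $t^\star\in\arg\min_t R(t)$. The chain I will write is
\begin{equation}\nonumber
R(\hat t)\le R_H(\hat t)+\kappa\le R_H(t_H^\star)+2\varepsilon_m+\kappa\le R_H(t^\star)+2\varepsilon_m+\kappa\le R(t^\star)+2\varepsilon_m+2\kappa .
\end{equation}
Each step has a single justification:
\begin{itemize}
\item The first and last inequalities are the transfer bound $|R_H(t)-R(t)|\le\kappa$, applied at $\hat t$ and at $t^\star$.
\item The second inequality is the oracle inequality from Theorem~\ref{thm:mask_validation}.
\item The third inequality holds because $t_H^\star$ minimizes $R_H$, so $R_H(t_H^\star)\le R_H(t^\star)$.
\end{itemize}
Since $R(t^\star)=\min_t R(t)$, the chain gives exactly the stated bound, and it holds with probability at least $1-\delta$.

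I expect essentially no obstacle, since this is a purely deterministic comparison on the good event. The only point to check is that the event of Theorem~\ref{thm:mask_validation} is uniform over all $t\le T$, so that it covers the data-dependent time $\hat t$. This is already built into that theorem through its union bound.

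I will also note that the $\kappa$ condition is not probabilistic in itself. It is assumed deterministically, or else intersected with the good event. In the latter case, if $\kappa$ only holds with probability $1-\delta'$, the conclusion holds with probability at least $1-\delta-\delta'$.
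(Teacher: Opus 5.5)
Your proposal is correct and follows essentially the same route as the paper: the identical chain $R(\hat t)\le R_H(\hat t)+\kappa\le R_H(t_H^\star)+2\varepsilon_m+\kappa\le R_H(t^\star)+2\varepsilon_m+\kappa\le R(t^\star)+2\varepsilon_m+2\kappa$, on the uniform event of Theorem~\ref{thm:mask_validation}. Your remark about the probability level and the deterministic nature of the $\kappa$ assumption is a useful clarification that the paper leaves implicit.
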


\begin{proof}
Let $$t^\star\in\arg\min_t R(t),$$ by the representativeness assumption,
\begin{equation}\nonumber
R(\hat t)
\le
R_H(\hat t)+\kappa.
\end{equation}
By Theorem~\ref{thm:mask_validation},
\begin{equation}\nonumber
R_H(\hat t)
\le
R_H(t_H^\star)+2\varepsilon_m
\le
R_H(t^\star)+2\varepsilon_m,
\end{equation}
where
\begin{equation}\nonumber
\varepsilon_m
=
C\left[
M_H\sqrt{\frac{\log(T/\delta)}{m}}
+
\sigma^2\frac{\log(T/\delta)}{m}
\right].
\end{equation}
Again using representativeness,
\begin{equation}\nonumber
R_H(t^\star)\le R(t^\star)+\kappa.
\end{equation}
Combining the three inequalities gives the result.
\end{proof}

\subsection{When is the transfer term small?}

For random masks with independent Bernoulli entries and with the mask independent
of the noise, $R_H(t)$ is an unbiased estimate of $R(t)$ over the randomness of
the mask:
\begin{equation}\nonumber
\mathbb{E}_M[R_H(t)]=R(t),
\end{equation}
up to the usual normalization by the random held-out size. Concentration over the
mask then suggests that $R_H(t)$ is close to $R(t)$ when the held-out set is large
and representative. The mask ratio therefore creates a trade-off: retaining more
pixels preserves the DIP training signal, but holding out fewer pixels increases
the variance of the validation estimate.

\subsection{Implementation requirement}

The validation interpretation requires that held-out pixels do not enter the
training loss. Thus MR-ES should train with
\begin{equation}\nonumber
\|M\odot(f_\theta(z)-y)\|^2,
\end{equation}
or an equivalent loss that ignores held-out pixels. It should not train against
the zero-filled image through
\begin{equation}\nonumber
\|f_\theta(z)-M\odot y\|^2,
\end{equation}
because that objective explicitly supervises the network to output zero at
held-out positions. In that case the held-out pixels are no longer unused, and
Lemma~\ref{lem:heldout_independence} no longer supports the validation
interpretation.

\subsection{Failure modes}

The MR-ES justification has two main limitations. First, for signal-dependent
noise such as Poisson noise, the validation offset is no longer a constant
$\sigma^2$ across pixels. Instead, the noise variance depends on the underlying
intensity. If the held-out region has a different intensity distribution from the
full image, the held-out validation curve may be biased.

Second, deterministic or structured masks, such as masks that mostly select
background regions in medical images, require a domain-specific
representativeness assumption. The held-out validation theorem may still hold on
the selected region, but the transfer from $R_H(t)$ to the full-image risk $R(t)$
depends on the size of $\kappa$.

\section{Augmented-Channel Reference Remarks}
\label{app:acr_theory}

ACR-ES follows the pseudo-reference principle of Appendix~\ref{app:pseudo_reference}, but uses an auxiliary output channel rather than a spatial or color-channel split. Given
\begin{equation}\nonumber
y^{(1)}=y+\zeta_1,\qquad y^{(2)}=y+\zeta_2,
\end{equation}
the two pseudo-copies share the original observation $y$ but contain independent added perturbations. If \(y=x+\xi\), then
\[
y^{(1)} = x+\xi+\zeta_1,\qquad
y^{(2)} = x+\xi+\zeta_2.
\]
Thus ACR-ES can be viewed as an instance of the pseudo-reference decomposition
in Appendix~\ref{app:pseudo_reference} with
\[
s=\xi,\qquad \eta=\zeta_1,\qquad \tilde\eta=\zeta_2.
\]
The shared component is the original observation noise, while the non-shared
components are the added auxiliary perturbations. Increasing the auxiliary perturbation scale increases the non-shared component between fitting and validation, while an excessively large perturbation can interfere with the primary reconstruction task. Thus the auxiliary noise level controls the same shared-versus-independent trade-off described in Appendix~\ref{app:pseudo_reference}.

The frequency-bias score used by ACR-ES should be viewed as a trajectory diagnostic rather than an unbiased risk estimator. It is motivated by the spectral bias of DIP: as training moves from fitting image structure to fitting noise or artifacts, the spectral distribution of the auxiliary residual changes. The empirical ablations in Section~4 compare this score with direct MSE-based alternatives.

\iffalse
\input{sections/App: Beyond DIP: Regularization-Parameter Selection}
\fi 
\section{Additional Experimental Results}
\label{app:add_exp}

This section provides additional experimental evidence supporting the main results in Section~\ref{sec:experiments}. 
The main paper focuses on four representative quantitative studies and three visualizations. 
Here, we provide complementary results that clarify the behavior of the proposed early-stopping criteria from five aspects: 
(i) analysis of the RGB channel-reference selection used by CSR-ES, 
(ii) additional trajectory and qualitative visualizations, 
(iii) per-image method-specific oracle comparisons on natural and medical images, 
(iv) ablations of MSE-based and bias-based reference scores, and 
(v) sensitivity of ACR-ES to auxiliary noise levels.

\subsection{Analysis of RGB Channel References in CSR-ES}
\label{app:csr_grid}

CSR-ES relies on the observation that different RGB channels of a natural image often share correlated structures. 
The method automatically selects a pseudo-reference channel pair from the noisy observation and uses the corresponding cross-channel consistency curve as a validation signal. 
In this section, we examine all possible RGB channel-reference pairs to verify whether the automatically selected pair is reliable.

Table~\ref{tab:mres_rgb_grid} reports the CSR-ES results over all RGB channel-reference pairs. 
Rows correspond to the pseudo-reference channel $y_j$, while columns correspond to the reconstructed channel $\hat{x}_{t,i}$. 
Each entry shows the stopping iteration selected by the validation curve 
\[
\|\hat{x}_{t,i}-y_j\|^2/n
\]
and the corresponding reconstruction PSNR. 
The highlighted entry denotes the pair automatically selected by CSR-ES using the noisy observation only. 
The oracle result is reported only for evaluation and is not used by CSR-ES.

For Gaussian noise, CSR-ES selects the pair $(\hat{x}_R,y_G)$ and stops at iteration $2373$, achieving $21.2249$ dB, exactly matching the oracle-best PSNR.
For Poisson noise, CSR-ES selects $(\hat{x}_G,y_R)$ and stops at iteration $4980$, again exactly matching the oracle-best PSNR of $22.8460$ dB.
For impulse noise, CSR-ES selects $(\hat{x}_G,y_B)$ and stops at iteration $2348$, achieving $23.5807$ dB, only $0.0427$ dB below the oracle maximum of $23.6234$ dB.
These results demonstrate that the channel similarity reference provides a reliable pseudo-validation signal for early stopping without access to the clean image.

% Required packages:
% \usepackage{booktabs}
% \usepackage{makecell}
% \usepackage[table]{xcolor}
% \usepackage{array}

\definecolor{mresblue}{RGB}{230,240,255}
\definecolor{mresline}{gray}{0.78}

% Inner table cells: vertically centered
\newcolumntype{C}[1]{>{\centering\arraybackslash}m{#1}}

% Outer three panels: top aligned
\newcolumntype{P}[1]{>{\centering\arraybackslash}p{#1}}

\newcommand{\EScell}[2]{%
\makecell[c]{%
\textbf{#2}\\[-1pt]
{\scriptsize iter. #1}%
}}

\newcommand{\ESbest}[2]{%
\cellcolor{mresblue}\makecell[c]{%
\textbf{#2}\\[-1pt]
{\scriptsize \textbf{iter. #1}}%
}}

\newcommand{\ESsummary}[4]{%
\vspace{3pt}
{\tiny
\begin{tabular}{@{}ll@{}}
Closest: &  {#1} \\
Selected: &  {#2} \\
Oracle: & #3 \\
  & \textbf{#4}
\end{tabular}
}
}

\begin{table}[t]
\centering
\caption{
CSR-ES selection over RGB channel pairs.
Rows correspond to the RGB channels of $y$, and columns correspond to the RGB channels of $\hat{x}$.
Each cell reports \textbf{PSNR} on the first line and the corresponding iteration (iter.) on the second line.
The selected channel pair is highlighted in blue.
} % urban 38,83,95 denoising
\label{tab:mres_rgb_grid}

\small
\setlength{\tabcolsep}{2pt}
\renewcommand{\arraystretch}{1.18}
\arrayrulecolor{mresline}
\centering
\begin{tabular}{@{}P{0.32\textwidth}P{0.32\textwidth}P{0.32\textwidth}@{}}

% ======================= Gaussian =======================
\begin{minipage}[t]{0.32\textwidth}
\vspace{-0.5em}
\centering
\textbf{Gaussian noise}

\vspace{2pt}
\begin{tabular}{C{0.19\linewidth}|C{0.25\linewidth}C{0.25\linewidth}C{0.25\linewidth}}
\toprule
% $y \backslash \hat{x}$
& $\hat{x}_R$ & $\hat{x}_G$ & $\hat{x}_B$ \\
\midrule
$y_R$
& --%\EScell{4957}{20.60}
& \EScell{2666}{20.70}
& \EScell{3139}{20.84} \\

$y_G$
& \ESbest{2373}{21.22}
& --%\EScell{4948}{20.35}
& \EScell{3139}{20.84} \\

$y_B$
& \EScell{32}{15.12}
& \EScell{3482}{20.89}
& --%\EScell{4959}{20.56} 
\\
\bottomrule
\end{tabular}

\ESsummary{(R, G)}
{$(y_G,\hat{x}_R)$ / $21.22$ / iter. $2373$}
{$21.22$ / iter. $2373$}
{ }
\end{minipage}
&
% ======================= Poisson =======================
\begin{minipage}[t]{0.32\textwidth}
\vspace{-0.5em}
\centering
\textbf{Poisson noise}

\vspace{2pt}
\begin{tabular}{%C{0.19\linewidth}
|C{0.25\linewidth}C{0.25\linewidth}C{0.25\linewidth}}
\toprule
% $y \backslash \hat{x}$ &
 $\hat{x}_R$ & $\hat{x}_G$ & $\hat{x}_B$ \\
\midrule
% $y_R$&
 --%\EScell{4988}{22.78}
& \ESbest{4980}{22.85}
& \EScell{4988}{22.78} \\

% $y_G$&
 \EScell{4925}{22.75}
& --%\EScell{4925}{22.75}
& \EScell{4925}{22.75} \\

% $y_B$&
 \EScell{4937}{22.72}
& \EScell{4974}{22.73}
& --%\EScell{4980}{22.85} 
\\
\bottomrule
\end{tabular}

\ESsummary{(R, G)}
{$(y_R,\hat{x}_G)$ / $22.85$ / iter. $4980$}
{$22.85$ / iter. $4980$}
{ }
\end{minipage}
&
\hspace{-0.5in}
% ======================= Impulse =======================
\begin{minipage}[t]{0.32\textwidth}
\centering
\textbf{Impulse noise}

\vspace{2pt}
\begin{tabular}{%C{0.19\linewidth}
|C{0.25\linewidth}C{0.25\linewidth}C{0.25\linewidth}}
\toprule
% $y \backslash \hat{x}$&
 $\hat{x}_R$ & $\hat{x}_G$ & $\hat{x}_B$ \\
\midrule
% $y_R$&
 -- %\EScell{4986}{22.40}
& \EScell{20}{15.48}
& \EScell{20}{15.48} \\

% $y_G$&
 \EScell{3036}{23.47}
& --%\EScell{4991}{22.41}
& \EScell{2588}{23.37} \\

% $y_B$&
 \EScell{3036}{23.47}
& \ESbest{2348}{23.58}
& --%\EScell{4986}{22.40} 
\\
\bottomrule
\end{tabular}

\ESsummary{(G, B)}
{$(y_B,\hat{x}_G)$ / $23.58$ / iter. $2348$}
{$23.62$ / iter. $2414$}
{ }
\end{minipage}

\end{tabular}

\arrayrulecolor{black}

\vspace{-0.2in}
\end{table}

\begin{figure}
\begin{minipage}{0.45\linewidth}
\centerline{\includegraphics[width=2.8in]{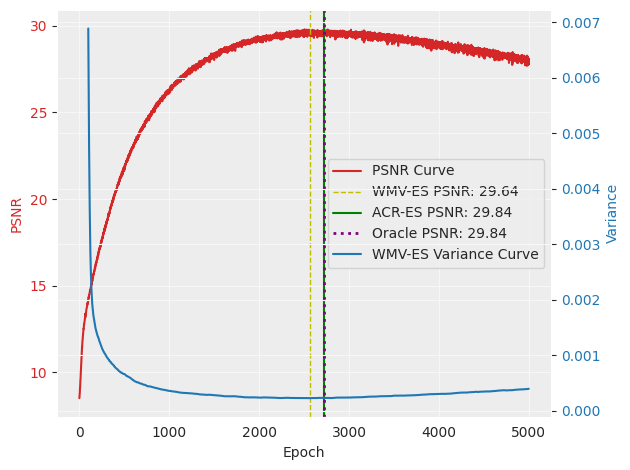}}
\centerline{\scriptsize (a) WMV-ES  curve with ACR-ES training
}
\end{minipage}\hspace{0.35in}
\begin{minipage}{0.45\linewidth}
\centerline{\includegraphics[width=2.8in]{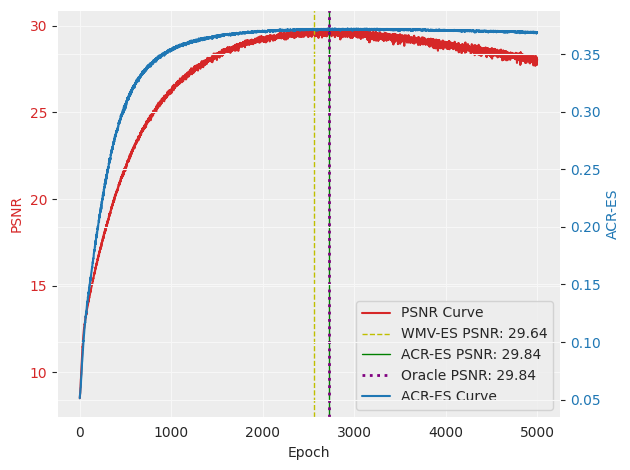}}
\centerline{\scriptsize (b) ACR-ES  curve with ACR-ES training
}
\end{minipage}

\begin{minipage}{0.45\linewidth}
\centerline{\includegraphics[width=2.8in]{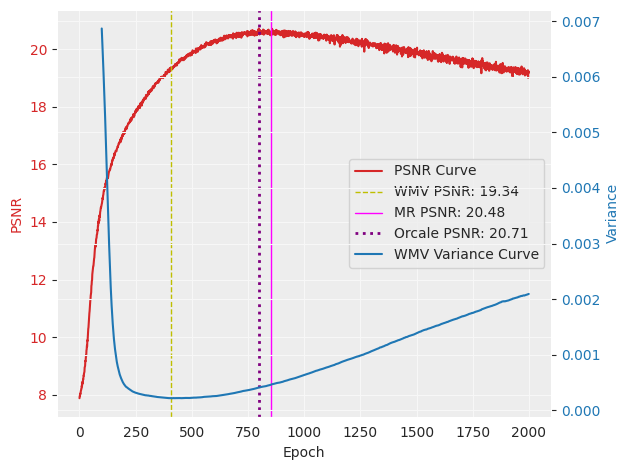}}
\centerline{\scriptsize (c) WMV-ES  curve with MR-ES training
}
\end{minipage}\hspace{0.35in}
\begin{minipage}{0.45\linewidth}
\centerline{\includegraphics[width=2.8in]{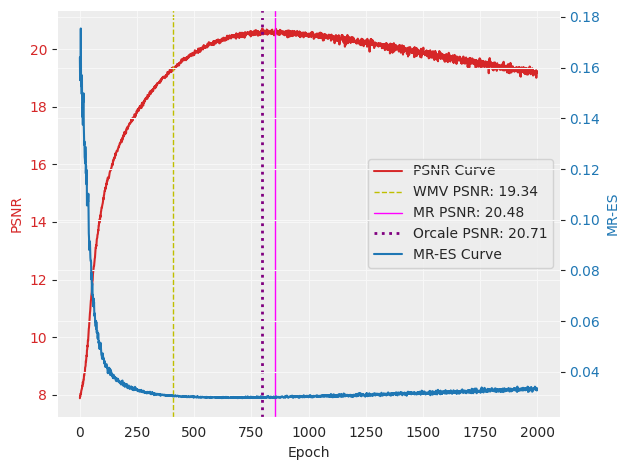}}
\centerline{\scriptsize (d) MR-ES  curve with MR-ES training
}
\end{minipage}
\caption{Early stop for image denoising. First row is the nature image from Set18 corrupted by Poisson noise, the second row is the CT image from CT10 corrupted by Gaussian noise.}\label{fig:acr_mr_curve_extra}
\end{figure}

\subsection{Additional Trajectory and Qualitative Results}
\label{app:additional_curves}

The main paper includes representative trajectory plots showing that the proposed criteria select iterations close to the oracle region. 
Here, we provide additional curves and qualitative examples to show that this behavior is consistent across different image types and corruption models.

Figure~\ref{fig:acr_mr_curve_extra} compares WMV-ES with ACR-ES and MR-ES on representative natural and medical images. 
The WMV curve may reach its minimum before the PSNR peak, leading to premature stopping. 
In contrast, the proposed reference-based scores better track the region where the reconstruction quality is close to the oracle.

Figures~\ref{fig:acr_poisson_visual} and~\ref{fig:acr_gaussian_visual} provide qualitative denoising examples under Poisson and Gaussian noise. 
Each figure shows the noisy observation, the auxiliary corrupted references, the reconstruction selected by WMV-ES, the reconstruction selected by ACR-ES, and the oracle reconstruction. 
The selected iteration of ACR-ES is closer to the oracle iteration, and the corresponding reconstruction is visually closer to the oracle-quality result.

\begin{figure}
\centering
\begin{minipage}{0.3\linewidth}
\centerline{\includegraphics[width=1.8in]{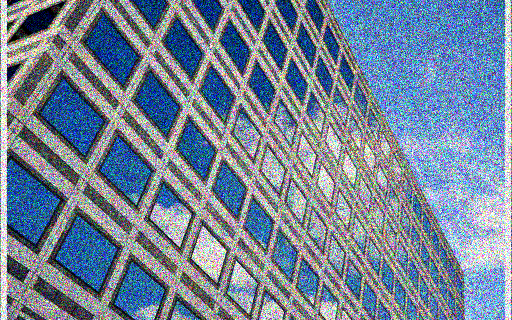}}\vspace{-0.05in}
\centerline{\scriptsize (a) Poisson noise $y$
}
\end{minipage} 
\begin{minipage}{0.3\linewidth}
\centerline{\includegraphics[width=1.8in]{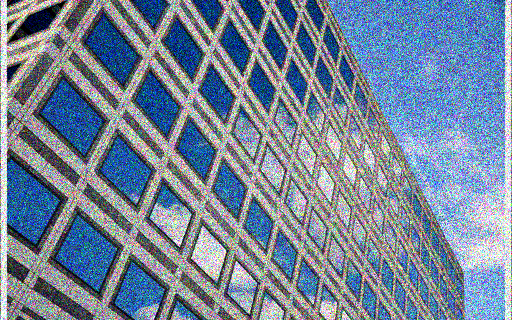}}\vspace{-0.05in}
\centerline{\scriptsize (b) Poisson noise $y_1=y+\zeta_1$}
\end{minipage}
\begin{minipage}{0.3\linewidth}
\centerline{\includegraphics[width=1.8in]{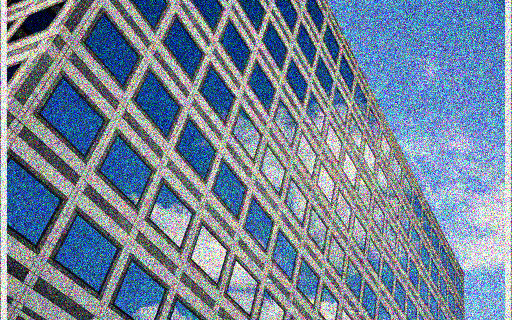}}\vspace{-0.05in}
\centerline{\scriptsize (c) Poisson noise $y_2=y+\zeta_2$}
\end{minipage}

\begin{minipage}{0.3\linewidth}
\centerline{\includegraphics[width=1.8in]{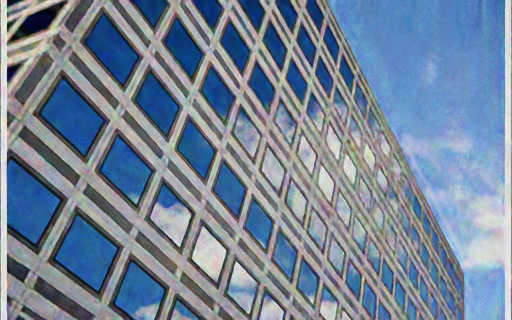}}\vspace{-0.05in}
\centerline{\scriptsize (d) WMV-ES at Iter. 3069
}
\end{minipage} 
\begin{minipage}{0.3\linewidth}
\centerline{\includegraphics[width=1.8in]{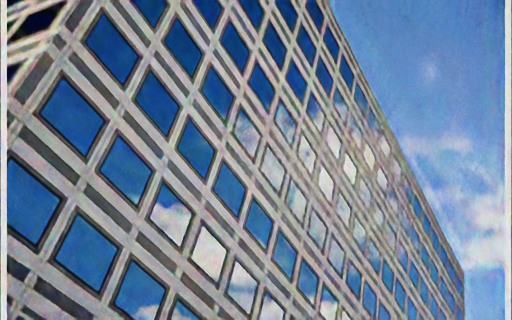}}\vspace{-0.05in}
\centerline{\scriptsize (e) ACR-ES at Iter. 2216}
\end{minipage}
\begin{minipage}{0.3\linewidth}
\centerline{\includegraphics[width=1.8in]{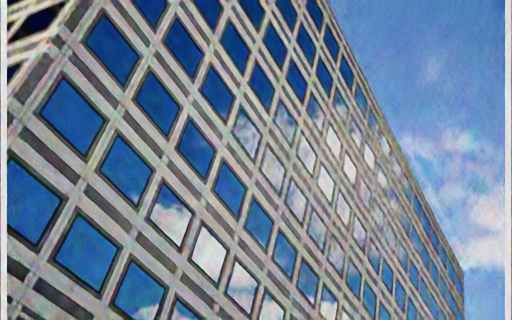}}\vspace{-0.05in}
\centerline{\scriptsize (f) Oracle at Iter. 2307}
\end{minipage}

\begin{minipage}{0.45\linewidth}
\centerline{\includegraphics[width=2.8in]{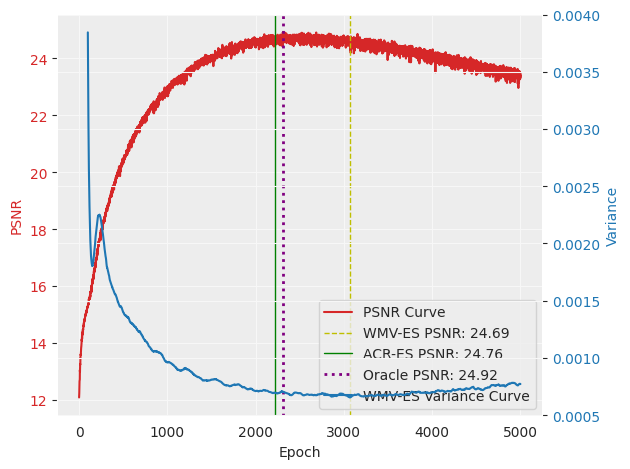}}\vspace{-0.05in}
\centerline{\scriptsize (g) WMV-ES curve with ACR-ES training}
\end{minipage}\hspace{0.35in}
\begin{minipage}{0.45\linewidth}
\centerline{\includegraphics[width=2.8in]{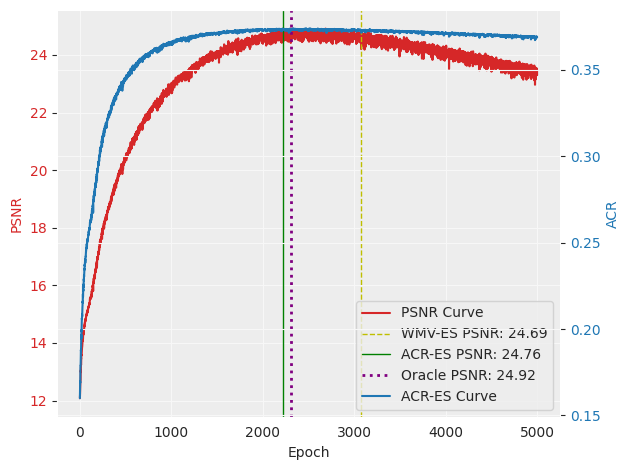}}\vspace{-0.05in}
\centerline{\scriptsize (h) ACR-ES curve with ACR-ES training}
\end{minipage}
\caption{Image denoising results on Poisson noise. Comparison of WMV-ES and our ACR-ES. }\label{fig:acr_poisson_visual}
\end{figure}
\begin{figure}
\centering
\begin{minipage}{0.3\linewidth}
\centerline{\includegraphics[width=1.6in]{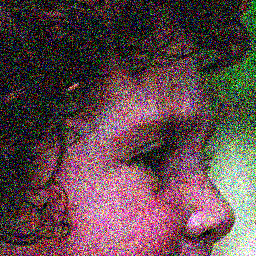}}\vspace{-0.05in}
\centerline{\scriptsize (a) Gaussian noise $y$
}
\end{minipage} 
\begin{minipage}{0.3\linewidth}
\centerline{\includegraphics[width=1.6in]{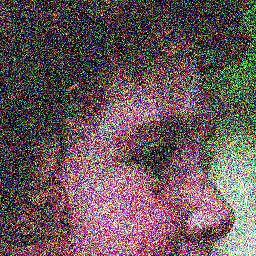}}\vspace{-0.05in}
\centerline{\scriptsize (b) Gaussian noise $y_1=y+\zeta_1$}
\end{minipage}
\begin{minipage}{0.3\linewidth}
\centerline{\includegraphics[width=1.6in]{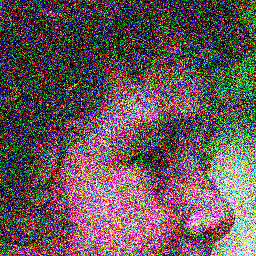}}\vspace{-0.05in}
\centerline{\scriptsize (c) Gaussian noise $y_2=y+\zeta_2$}
\end{minipage}

\begin{minipage}{0.3\linewidth}
\centerline{\includegraphics[width=1.6in]{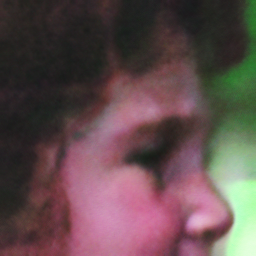}}\vspace{-0.05in}
\centerline{\scriptsize (d) WMV-ES at Iter. 534
}
\end{minipage} 
\begin{minipage}{0.3\linewidth}
\centerline{\includegraphics[width=1.6in]{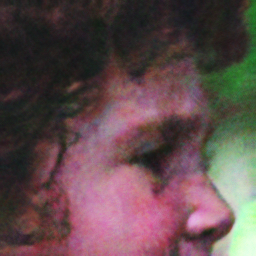}}\vspace{-0.05in}
\centerline{\scriptsize (e) ACR-ES at Iter. 1004}
\end{minipage}
\begin{minipage}{0.3\linewidth}
\centerline{\includegraphics[width=1.6in]{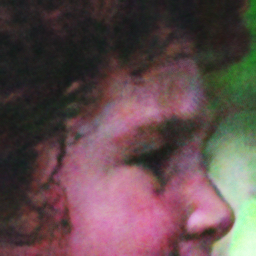}}\vspace{-0.05in}
\centerline{\scriptsize (f) Oracle at Iter. 988}
\end{minipage}

\begin{minipage}{0.45\linewidth}
\centerline{\includegraphics[width=2.8in]{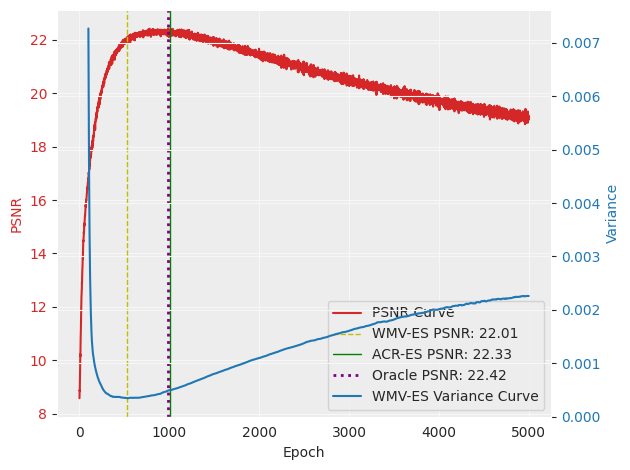}}\vspace{-0.05in}
\centerline{\scriptsize (g) WMV-ES curve with ACR-ES training}
\end{minipage}\hspace{0.35in}
\begin{minipage}{0.45\linewidth}
\centerline{\includegraphics[width=2.8in]{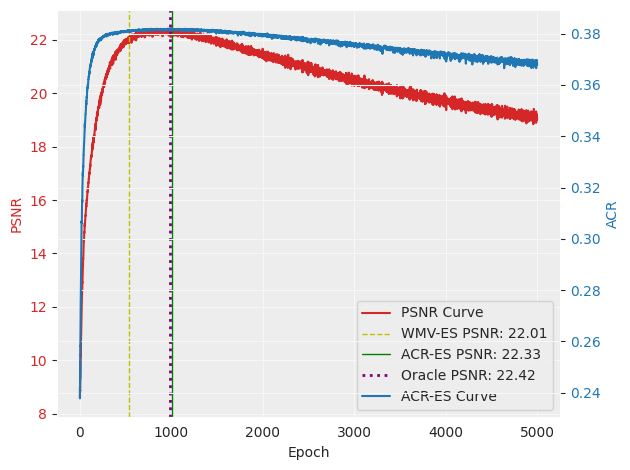}}\vspace{-0.05in}
\centerline{\scriptsize (h) ACR-ES curve with ACR-ES training}
\end{minipage}
\caption{Image denoising results on Gaussian noise. Comparison of WMV-ES and our ACR-ES. }\label{fig:acr_gaussian_visual}
\end{figure}

Figure \ref{fig:ct_acr_gaussian_visual} shows the denoising results of CT image corrupted by the Gaussian noise. We provide the noisy observation, two auxiliary corrupted references, the reconstruction selected by WMV-ES, the reconstruction selected by ACR-ES, and the oracle reconstruction. The corresponding WMV-ES and ACR-ES curves are also given. The selected iteration of ACR-ES is closer to the oracle iteration. Besides, we also present the denoising results of T2 image corrupted by the Impulse noise in Figure \ref{fig:t2_acr_gaussian_visual}. We show the noisy image, the reconstruction selected by WMV-ES, the reconstruction selected by MR-ES, and the oracle reconstruction. The corresponding WMV-ES and MR-ES curves are also given. The selected iteration of MR-ES is closer to the oracle iteration.
\begin{figure}\centering
\begin{minipage}{0.3\linewidth}
\centerline{\includegraphics[width=1.6in]{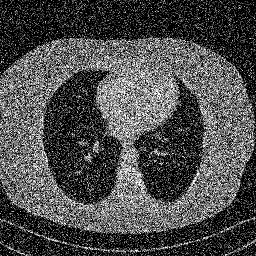}}\vspace{-0.05in}
\centerline{\scriptsize (a) Gaussian noise $y$
}
\end{minipage} 
\begin{minipage}{0.3\linewidth}
\centerline{\includegraphics[width=1.6in]{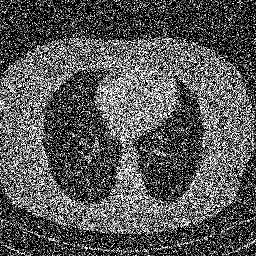}}\vspace{-0.05in}
\centerline{\scriptsize (b) Gaussian noise $y_1=y+\zeta_1$}
\end{minipage}
\begin{minipage}{0.3\linewidth}
\centerline{\includegraphics[width=1.6in]{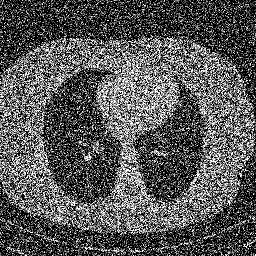}}\vspace{-0.05in}
\centerline{\scriptsize (c) Gaussian noise $y_2=y+\zeta_2$}
\end{minipage}

\begin{minipage}{0.3\linewidth}
\centerline{\includegraphics[width=1.6in]{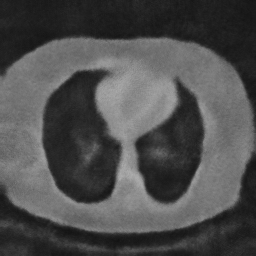}}\vspace{-0.05in}
\centerline{\scriptsize (d) WMV-ES at Iter. 486
}
\end{minipage} 
\begin{minipage}{0.3\linewidth}
\centerline{\includegraphics[width=1.6in]{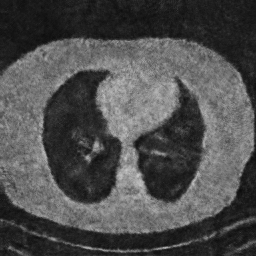}}\vspace{-0.05in}
\centerline{\scriptsize (e) ACR-ES at Iter. 954}
\end{minipage}
\begin{minipage}{0.3\linewidth}
\centerline{\includegraphics[width=1.6in]{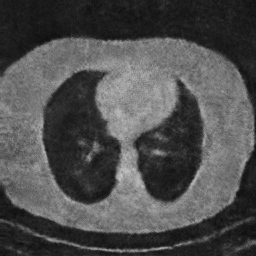}}\vspace{-0.05in}
\centerline{\scriptsize (f) Oracle at Iter. 812}
\end{minipage}

\begin{minipage}{0.45\linewidth}
\centerline{\includegraphics[width=2.8in]{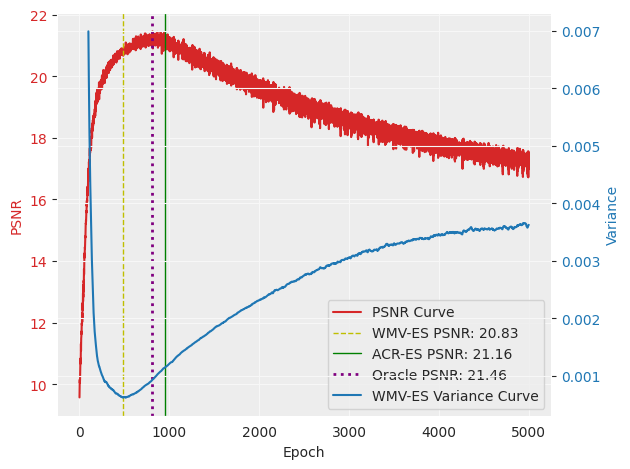}}\vspace{-0.05in}
\centerline{\scriptsize (g) WMV-ES curve with ACR-ES training}
\end{minipage}\hspace{0.35in}
\begin{minipage}{0.45\linewidth}
\centerline{\includegraphics[width=2.8in]{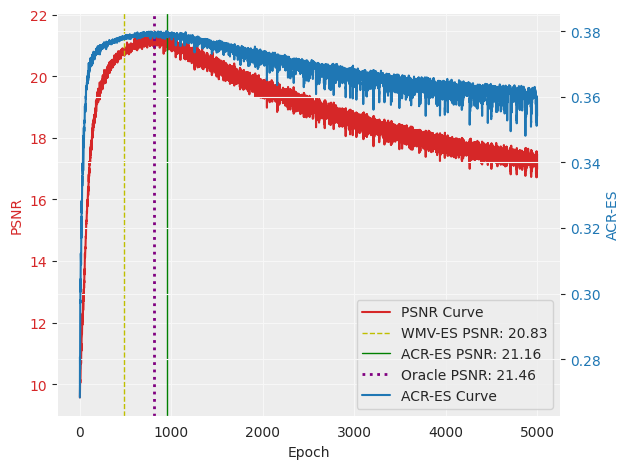}}\vspace{-0.05in}
\centerline{\scriptsize (h) ACR-ES curve with ACR-ES training}
\end{minipage}
\caption{Image denoising results on Gaussian noise. Comparison of WMV-ES and our ACR-ES. }\label{fig:ct_acr_gaussian_visual}
\end{figure}
\begin{figure}\centering
\begin{minipage}{0.22\linewidth}
\centerline{\includegraphics[width=1.4in]{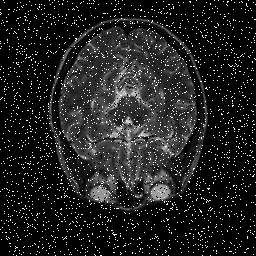}}\vspace{-0.05in}
\centerline{\scriptsize (a) Impulse noise $y$
}
\end{minipage} \hspace{0.15in}
\begin{minipage}{0.22\linewidth}
\centerline{\includegraphics[width=1.4in]{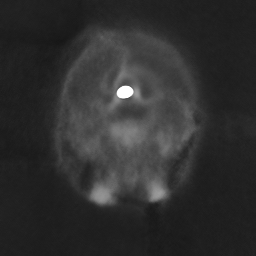}}\vspace{-0.05in}
\centerline{\scriptsize (b) WMV-ES at Iter. 382
}
\end{minipage} \hspace{0.15in}
\begin{minipage}{0.22\linewidth}
\centerline{\includegraphics[width=1.4in]{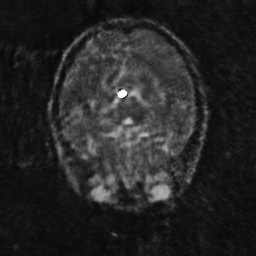}}\vspace{-0.05in}
\centerline{\scriptsize (c) MR-ES at Iter. 1090}
\end{minipage}\hspace{0.18in}
\begin{minipage}{0.22\linewidth}
\centerline{\includegraphics[width=1.4in]{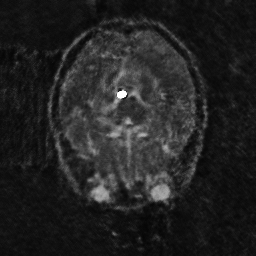}}\vspace{-0.05in}
\centerline{\scriptsize (d) Oracle at Iter. 1243}
\end{minipage}

\begin{minipage}{0.45\linewidth}
\centerline{\includegraphics[width=2.8in]{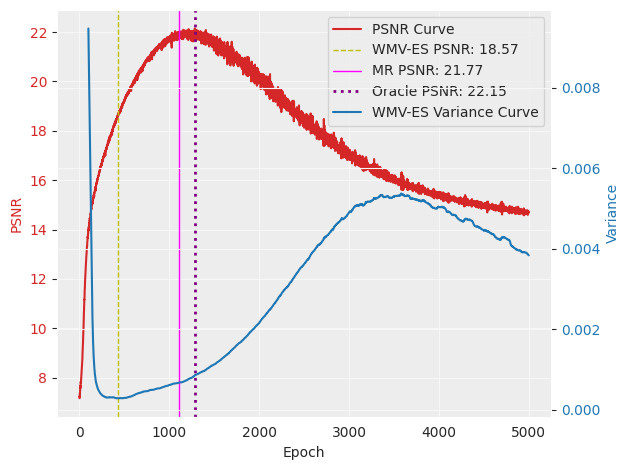}}\vspace{-0.05in}
\centerline{\scriptsize (e) WMV-ES curve with MR-ES training}
\end{minipage}\hspace{0.35in}
\begin{minipage}{0.45\linewidth}
\centerline{\includegraphics[width=2.8in]{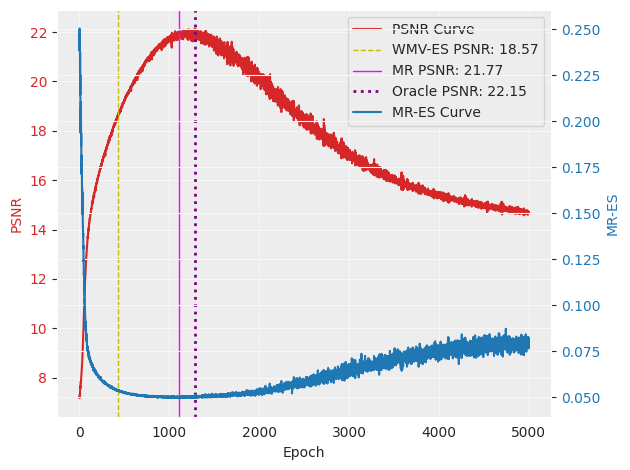}}\vspace{-0.05in}
\centerline{\scriptsize (f) MR-ES curve with MR-ES training}
\end{minipage}
\caption{Image denoising results on Impulse noise. Comparison of WMV-ES and our MR-ES. }\label{fig:t2_acr_gaussian_visual}
\end{figure}

Moreover, we also present the comparison of different training methods among the original DIP, the proposed ACR-ES, and the MR-ES training in Figure \ref{fig:3train}. We test a single image from the CT10 dataset under three different noise types: Gaussian, Poisson, and Impulse. 
% From the results, we know that our ACR-ES and MR-ES do not change the original DIP training.  
Although the reference construction and input perturbations slightly modify the optimization trajectory, the overall PSNR evolution and attainable oracle performance remain comparable to those of the original DIP. 
Thus, the proposed methods improve early stopping mainly through more reliable stopping signals, rather than by substantially changing the DIP training process.

\begin{figure}
\begin{minipage}{0.31\linewidth}
\centerline{\includegraphics[width=1.85in]{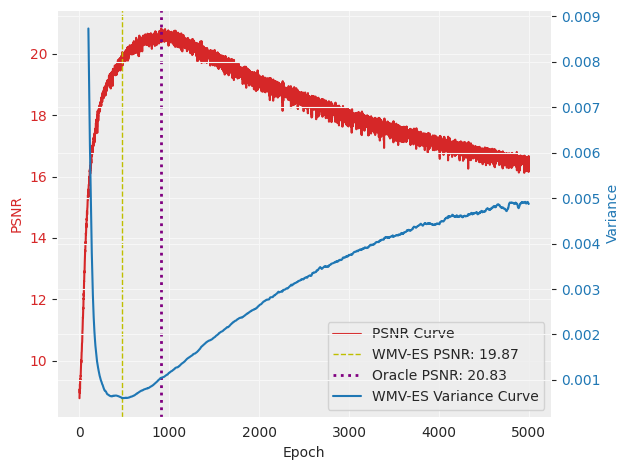}}\vspace{-0.05in}
\centerline{\scriptsize (a) WMV-ES curve with original DIP training
}
\end{minipage}\hspace{0.1in}
\begin{minipage}{0.31\linewidth}
\centerline{\includegraphics[width=1.85in]{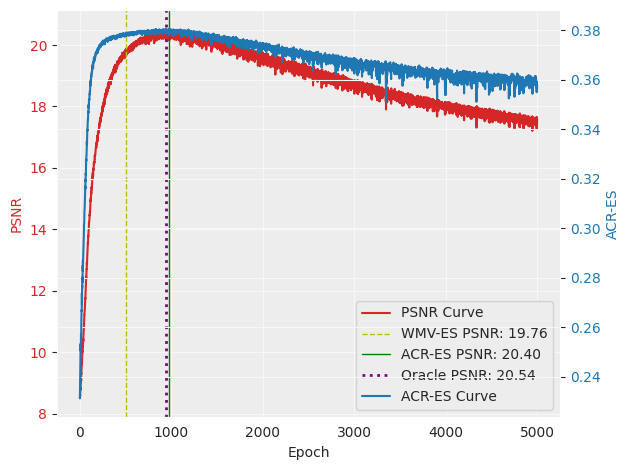}}\vspace{-0.05in}
\centerline{\scriptsize (b) ACR-ES curve with ACR-ES training
}
\end{minipage}\hspace{0.1in}
\begin{minipage}{0.31\linewidth}
\centerline{\includegraphics[width=1.85in]{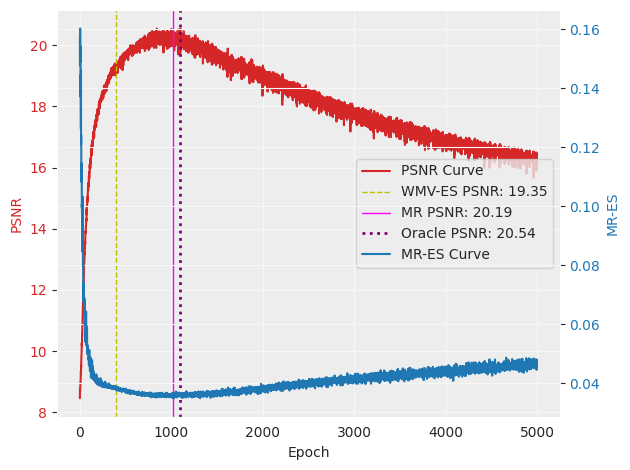}}\vspace{-0.05in}
\centerline{\scriptsize (c) MR-ES curve with MR-ES training
}
\end{minipage}

\begin{minipage}{0.31\linewidth}
\centerline{\includegraphics[width=1.85in]{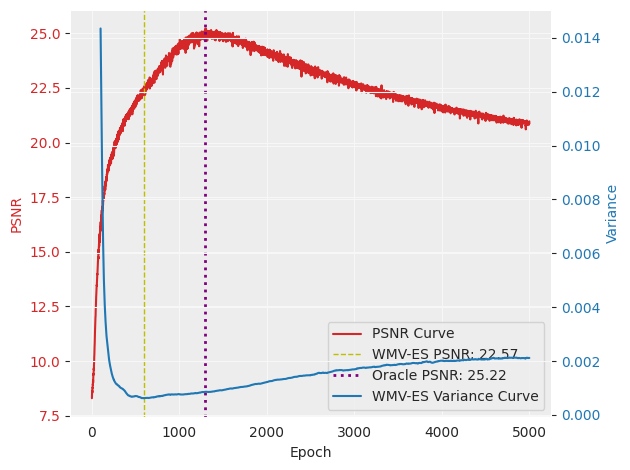}}\vspace{-0.05in}
\centerline{\scriptsize (d) WMV-ES curve with original DIP training
}
\end{minipage}\hspace{0.1in}
\begin{minipage}{0.31\linewidth}
\centerline{\includegraphics[width=1.85in]{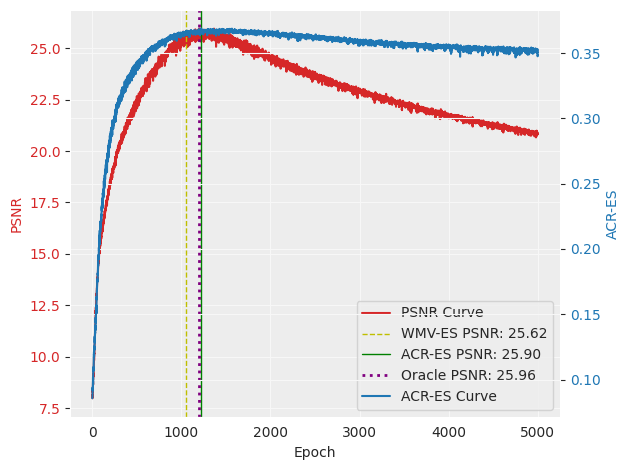}}\vspace{-0.05in}
\centerline{\scriptsize (e) ACR-ES curve with ACR-ES training
}
\end{minipage}\hspace{0.1in}
\begin{minipage}{0.31\linewidth}
\centerline{\includegraphics[width=1.85in]{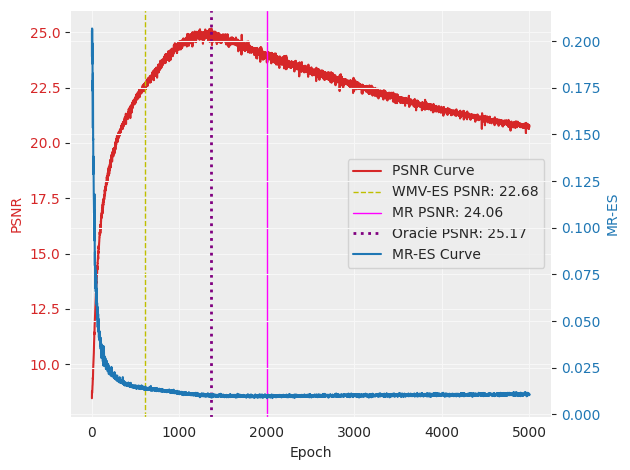}}\vspace{-0.05in}
\centerline{\scriptsize (f) MR-ES curve with MR-ES training
}
\end{minipage}

\begin{minipage}{0.31\linewidth}
\centerline{\includegraphics[width=1.85in]{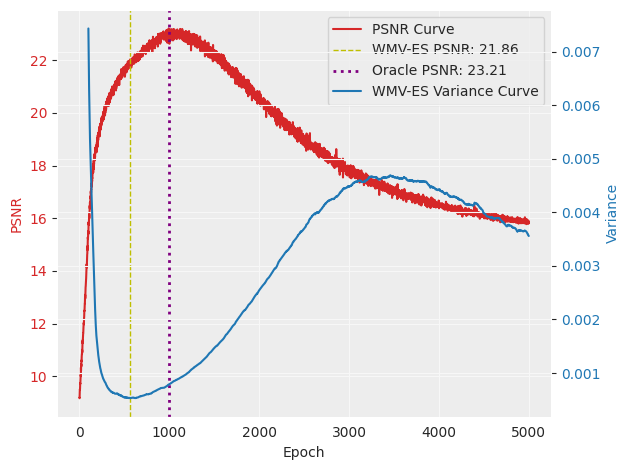}}\vspace{-0.05in}
\centerline{\scriptsize (g) WMV-ES curve with original DIP training
}
\end{minipage}\hspace{0.1in}
\begin{minipage}{0.31\linewidth}
\centerline{\includegraphics[width=1.85in]{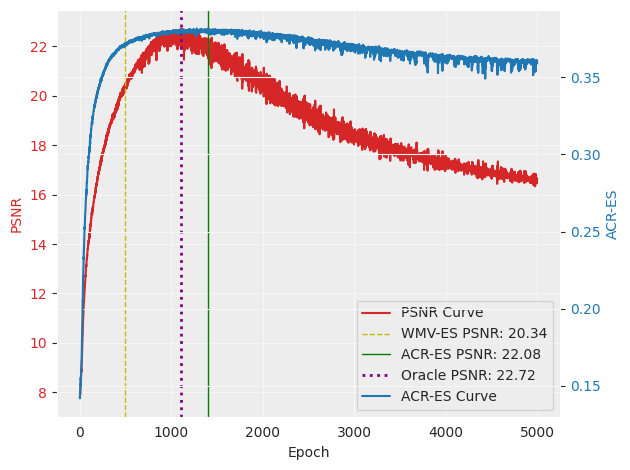}}\vspace{-0.05in}
\centerline{\scriptsize (h) ACR-ES curve with ACR-ES training
}
\end{minipage}\hspace{0.1in}
\begin{minipage}{0.31\linewidth}
\centerline{\includegraphics[width=1.85in]{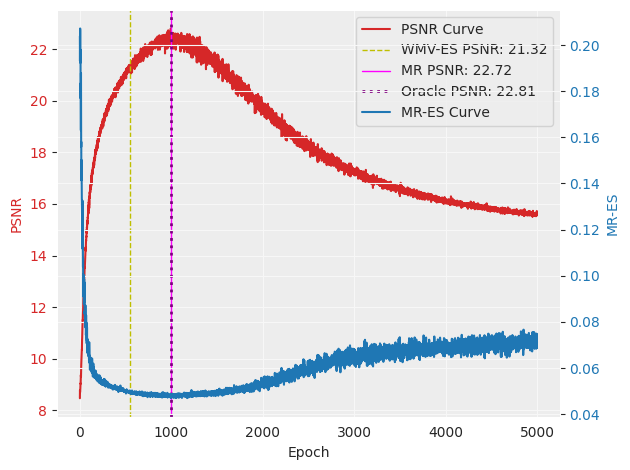}}\vspace{-0.05in}
\centerline{\scriptsize (i) MR-ES curve with MR-ES training
}
\end{minipage}
\caption{Comparison of different training curves of WMV-ES and the proposed reference-based stopping criteria. Test on CT01 image corrupted by Gaussian noise in the first row; Poisson noise in the second row; and Impulse noise in the third row.}\label{fig:3train}
\end{figure}

We further compare the proposed reference-based stopping criteria with SURE-ES and WMV-ES through representative stopping-score trajectories.
Although SURE provides an unbiased estimate of the reconstruction risk under specific Gaussian-noise assumptions, its empirical minimum does not always coincide with the oracle stopping region of DIP optimization.
Figure~\ref{fig:sure_curve_extra} shows examples on a natural image and a medical image corrupted by Gaussian noise.
For the natural image, we compare the SURE curve on the original DIP trajectory with WMV-ES and MR-ES curves on the corresponding reference-based training trajectories.
For the medical image, we compare SURE-ES with the stopping curve obtained under ACR-ES training.
These examples show that the proposed reference-based stopping signals are better aligned with the near-oracle region than SURE-ES or WMV-ES in these representative cases.
This complements the quantitative comparisons in the main paper and further illustrates the robustness of the proposed stopping principle across image domains and training trajectories.

\begin{figure}
\begin{minipage}{0.45\linewidth}
\centerline{\includegraphics[width=2.8in]{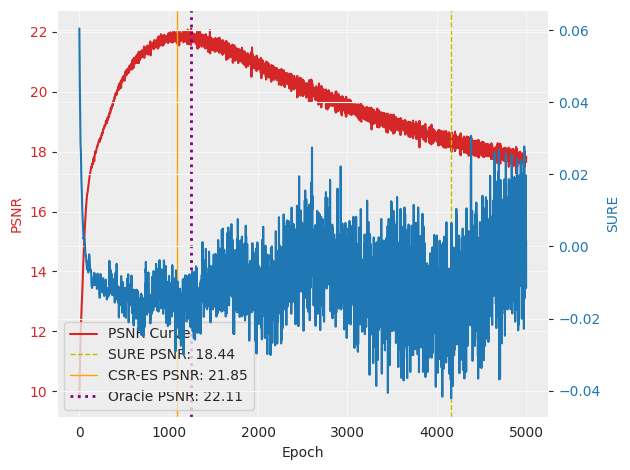}}\vspace{-0.05in}
\centerline{\scriptsize (a) SURE curve with original DIP training
}
\end{minipage}\hspace{0.35in}
\begin{minipage}{0.45\linewidth}
\centerline{\includegraphics[width=2.8in]{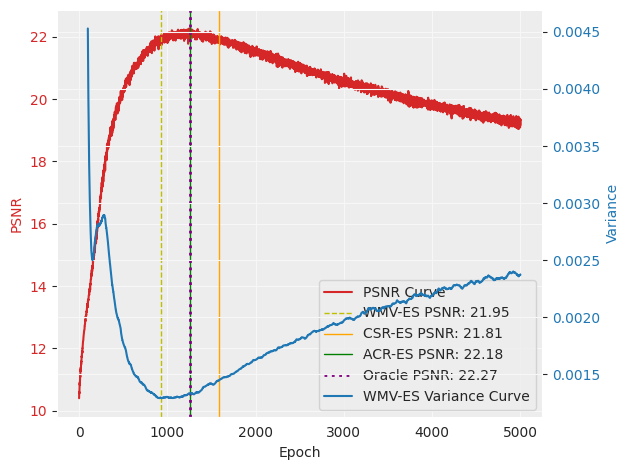}}\vspace{-0.05in}
\centerline{\scriptsize (b) WMV-ES curve with ACR-ES training
}
\end{minipage}

\begin{minipage}{0.45\linewidth}
\centerline{\includegraphics[width=2.8in]{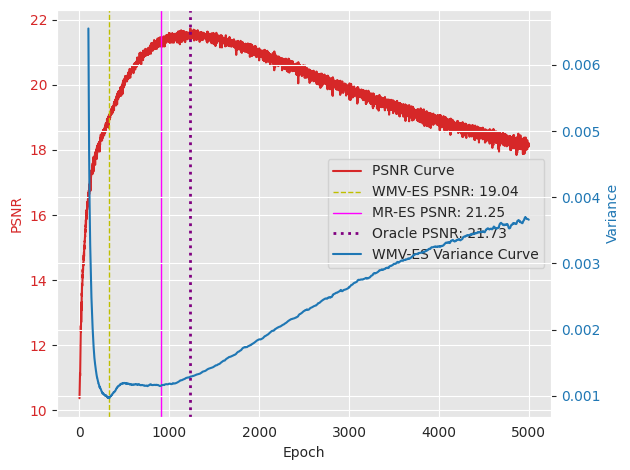}}\vspace{-0.05in}
\centerline{\scriptsize (c) WMV-ES curve with MR-ES training
}
\end{minipage}\hspace{0.35in}
\begin{minipage}{0.45\linewidth}
\centerline{\includegraphics[width=2.8in]{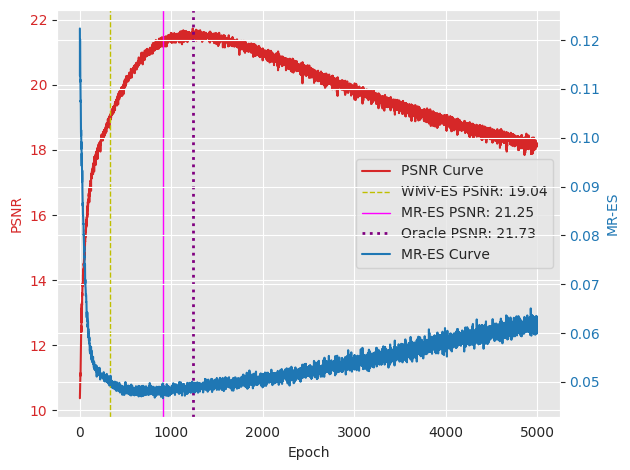}}\vspace{-0.05in}
\centerline{\scriptsize (d) MR-ES curve with MR-ES training
}
\end{minipage}

\begin{minipage}{0.45\linewidth}
\centerline{\includegraphics[width=2.8in]{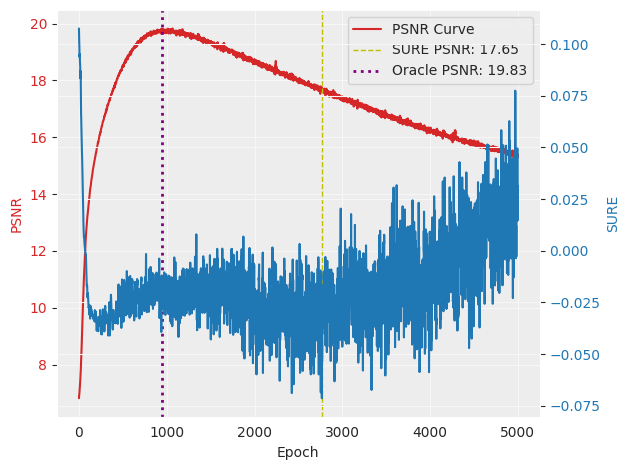}}\vspace{-0.05in}
\centerline{\scriptsize (e) SURE curve with original DIP training
}
\end{minipage}\hspace{0.35in}
\begin{minipage}{0.45\linewidth}
\centerline{\includegraphics[width=2.8in]{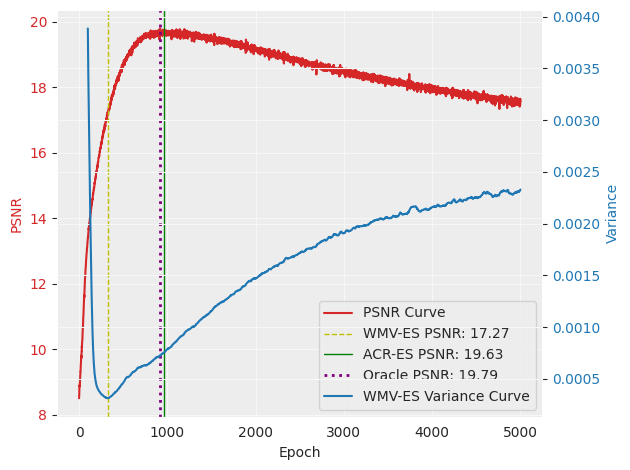}}\vspace{-0.05in}
\centerline{\scriptsize (f) WMS-ES curve with ACR training
}
\end{minipage}
\caption{Comparison of different training curves of SURE, WMV-ES, and the proposed reference-based stopping criteria. First and second row are the nature image from Set3 corrupted by Gaussian noise, the third row is the CT image from T2-20 corrupted by Gaussian noise.}\label{fig:sure_curve_extra}
\end{figure}

\begin{table}[t]
\centering
\caption{
Representative early-stopping comparison on selected natural images. 
Each PSNR entry is reported as ES PSNR (Oracle PSNR), where the oracle PSNR is computed from the corresponding training trajectory of each method. 
Gap denotes Oracle PSNR minus ES PSNR; a smaller Gap indicates a more accurate stopping point.
}
\label{tab:representative_es_gap}
\small
\setlength{\tabcolsep}{4.2pt}
\renewcommand{\arraystretch}{1.18}
\resizebox{\linewidth}{!}{
\begin{tabular}{llcc cc cc cc cc}
\toprule
\multirow{3}{*}{Noise}
& \multirow{3}{*}{Image}
& \multicolumn{2}{c}{SURE-ES}
& \multicolumn{2}{c}{WMV-ES}
& \multicolumn{2}{c}{MR-ES}
& \multicolumn{2}{c}{CSR-ES}
& \multicolumn{2}{c}{ACR-ES} \\
\cmidrule(lr){3-4}
\cmidrule(lr){5-6}
\cmidrule(lr){7-8}
\cmidrule(lr){9-10}
\cmidrule(lr){11-12}
&
& \makecell{PSNR\\(Oracle)} & Gap $\downarrow$
& \makecell{PSNR\\(Oracle)} & Gap $\downarrow$
& \makecell{PSNR\\(Oracle)} & Gap $\downarrow$
& \makecell{PSNR\\(Oracle)} & Gap $\downarrow$
& \makecell{PSNR\\(Oracle)} & Gap $\downarrow$ \\
\midrule

\multirow{10}{*}{Gaussian}
& Leaves
& 20.06 (21.43) & 1.37
& 20.64 (20.89) & 0.25
& 20.62 (20.84) & 0.22
& 20.59 (20.70) & 0.11
& \textbf{20.62 (20.70)} & \textbf{0.08} \\
& Starfish
& 21.31 (22.04) & 0.73
& 18.96 (21.73) & 2.77
& 21.23 (21.72) & 0.49
& 21.39 (21.69) & 0.30
& \textbf{21.52 (21.69)} & \textbf{0.17} \\
& Baboon
& 19.50 (19.92) & 0.42
& 18.30 (19.63) & 1.33
& 19.37 (19.62) & 0.25
& 19.22 (19.50) & 0.28
& \textbf{19.42 (19.50)} & \textbf{0.08} \\
& Barbara
& 21.81 (23.01) & 1.20
& 21.61 (22.66) & 1.05
& 22.30 (22.58) & 0.29
& \textbf{22.46 (22.55)} & \textbf{0.09}
& 22.21 (22.55) & 0.34 \\
& Coastguard
& 22.82 (23.30) & 0.48
& 21.87 (23.10) & 1.22
& 21.95 (22.75) & 0.80
& \textbf{22.77 (22.86)} & \textbf{0.09}
& 22.59 (22.86) & 0.27 \\
& Comic
& 18.46 (20.58) & 2.11
& 17.94 (20.07) & 2.12
& 19.76 (19.86) & 0.11
& 19.81 (19.94) & 0.13
& \textbf{19.91 (19.94)} & \textbf{0.03} \\
& Face
& 22.03 (22.72) & 0.69
& 22.39 (22.66) & 0.27
& 22.26 (22.49) & 0.23
& 22.27 (22.51) & 0.24
& \textbf{22.42 (22.51)} & \textbf{0.09} \\
& Flowers
& 21.93 (22.35) & 0.42
& 21.07 (21.90) & 0.83
& 21.39 (21.72) & 0.34
& \textbf{21.62 (21.82)} & \textbf{0.21}
& 21.61 (21.82) & 0.21 \\
& Foreman
& 20.36 (24.06) & 3.71
& 23.50 (23.88) & 0.38
& 23.47 (23.79) & 0.31
& \textbf{23.71 (23.82)} & \textbf{0.12}
& 23.63 (23.82) & 0.19 \\
& Zebra
& 22.59 (23.04) & 0.45
& 22.09 (22.61) & 0.52
& 22.28 (22.49) & 0.22
& \textbf{22.33 (22.45)} & \textbf{0.12}
& 22.13 (22.45) & 0.32 \\
\midrule

\multirow{10}{*}{Poisson}
& Leaves
& 20.39 (20.83) & 0.44 
& 20.33 (20.47) & 0.14
& 20.32 (20.43) & 0.11
& \textbf{20.92 (21.03)} & \textbf{0.11}
& 20.89 (21.03) & 0.13 \\
& Starfish
& 22.92 (23.16) & 0.24 
& 22.15 (22.69) & 0.54
& 22.69 (22.86) & 0.18
& 23.40 (23.51) & 0.11
& \textbf{23.42 (23.51)} & \textbf{0.09} \\
& Baboon
& 18.90 (20.97) & 2.07
& 18.66 (20.56) & 1.90
& 20.37 (20.54) & 0.17
& \textbf{20.46 (20.54)} & \textbf{0.08}
& 20.41 (20.54) & 0.13 \\
& Barbara
& 23.92 (24.02) & 0.10
& 23.29 (23.56) & 0.27
& 23.21 (23.48) & 0.27
& 24.75 (24.85) & 0.11
& 24.78 (24.85) & 0.07 \\
& Coastguard
& 24.04 (24.47) & 0.43
& 22.93 (23.83) & 0.91
& 23.58 (23.75) & 0.17
& \textbf{24.62 (24.75)} & \textbf{0.13}
& 24.61 (24.75) & 0.14 \\
& Comic
& 19.04 (20.90) & 1.87
& 17.75 (20.35) & 2.59
& 19.84 (20.39) & 0.55
& 21.38 (21.52) & 0.14
& \textbf{21.40 (21.52)} & \textbf{0.12} \\
& Face
& 26.30 (26.49) & 0.19
& 25.50 (26.17) & 0.67
& 25.89 (26.27) & 0.38
& 26.59 (26.79) & 0.20
& \textbf{26.73 (26.79)} & \textbf{0.06} \\
& Flowers
& 21.71 (23.91) & 2.20
& 22.89 (23.55) & 0.65
& 22.99 (23.33) & 0.34
& \textbf{24.47 (24.54)} & \textbf{0.07}
& 24.39 (24.54) & 0.14 \\
& Foreman
& 23.27 (23.62) & 0.34
& 23.15 (23.31) & 0.15
& 23.07 (23.25) & 0.18
& 23.69 (23.84) & 0.15
& \textbf{23.70 (23.84)} & \textbf{0.14} \\
& Zebra
& 22.53 (24.68) & 2.14
& 23.95 (24.11) & 0.17
& \textbf{24.25 (24.31)} & \textbf{0.06}
& 24.90 (25.24) & 0.33
& 25.01 (25.24) & 0.22 \\
\midrule

\multirow{10}{*}{Impulse}
& Leaves
& 17.79 (23.19) & 5.40
& 22.18 (22.34) & 0.16
& 21.59 (22.25) & 0.66
& \textbf{22.12 (22.17)} & \textbf{0.04}
& 21.88 (22.17) & 0.28 \\
& Starfish
& 19.10 (23.92) & 4.82
& 22.68 (23.32) & 0.64
& 22.99 (23.11) & 0.13
& 23.04 (23.38) & 0.34
& \textbf{23.12 (23.38)} & \textbf{0.26} \\
& Baboon
& 20.74 (21.00) & 0.27
& 18.53 (20.50) & 1.97
& 20.03 (20.31) & 0.28
& \textbf{20.34 (20.36)} & \textbf{0.02}
& 20.30 (20.36) & 0.06 \\
& Barbara
& 23.69 (24.17) & 0.49
& 23.32 (23.90) & 0.58
& 23.34 (23.69) & 0.31
& \textbf{23.49 (23.65)} & \textbf{0.16}
& 23.32 (23.65) & 0.33 \\
& Coastguard
& 20.56 (24.65) & 4.09
& 23.05 (24.26) & 1.22
& 23.95 (24.18) & 0.23
& \textbf{24.12 (24.28)} & \textbf{0.15}
& 23.68 (24.28) & 0.60 \\
& Comic
& 19.86 (22.15) & 2.30
& 18.14 (21.45) & 3.31
& 21.00 (21.17) & 0.17
& \textbf{20.98 (21.13)} & \textbf{0.16}
& 20.93 (21.13) & 0.20 \\
& Face
& 16.85 (24.86) & 8.01
& 24.27 (24.81) & 0.54
& 24.18 (24.75) & 0.57
& \textbf{24.59 (24.75)} & \textbf{0.17}
& 23.73 (24.75) & 1.02 \\
& Flowers
& 21.87 (23.78) & 1.91
& 22.63 (23.23) & 0.60
& 22.90 (23.10) & 0.20
& \textbf{22.98 (23.11)} & \textbf{0.13}
& 22.75 (23.11) & 0.36 \\
& Foreman
& 20.17 (26.36) & 6.19
& 25.68 (26.03) & 0.35
& 25.52 (25.90) & 0.38
& \textbf{25.96 (26.07)} & \textbf{0.11}
& 25.41 (26.07) & 0.66 \\
& Zebra
& 22.86 (24.99) & 2.13
& 23.98 (24.34) & 0.36
& 23.90 (24.14) & 0.24
& \textbf{24.00 (24.24)} & \textbf{0.24}
& 23.79 (24.24) & 0.44 \\
\bottomrule
\end{tabular}}
\end{table}

\begin{table}[t]
\centering
\caption{
Representative early-stopping comparison on selected medical images. 
Each PSNR entry is reported as ES PSNR (Oracle PSNR), where the oracle PSNR is computed from the corresponding training trajectory of each method. 
Gap denotes Oracle PSNR minus ES PSNR; a smaller Gap indicates a more accurate stopping point.
}
\label{tab:representative_medical_es_gap}
\small
\setlength{\tabcolsep}{4.5pt}
\renewcommand{\arraystretch}{1.18}
\resizebox{\linewidth}{!}{
\begin{tabular}{llcc cc cc cc}
\toprule
\multirow{3}{*}{Noise}
& \multirow{3}{*}{Image}
& \multicolumn{2}{c}{SURE-ES}
& \multicolumn{2}{c}{WMV-ES}
& \multicolumn{2}{c}{MR-ES}
& \multicolumn{2}{c}{ACR-ES} \\
\cmidrule(lr){3-4}
\cmidrule(lr){5-6}
\cmidrule(lr){7-8}
\cmidrule(lr){9-10}
&
& \makecell{PSNR\\(Oracle)} & Gap $\downarrow$
& \makecell{PSNR\\(Oracle)} & Gap $\downarrow$
& \makecell{PSNR\\(Oracle)} & Gap $\downarrow$
& \makecell{PSNR\\(Oracle)} & Gap $\downarrow$ \\
\midrule

\multirow{10}{*}{Gaussian}
& CT01
& 19.34 (20.92) & 1.59
& 19.46 (20.49) & 1.03
& \textbf{20.35 (20.47)} & \textbf{0.12}
& 20.52 (20.68) & 0.16 \\
% & CT04
% & 15.38 (21.61) & 6.24
% & 21.11 (21.59) & 0.48
% & 21.06 (21.33) & 0.27
% & \textbf{21.29 (21.44)} & \textbf{0.15} \\
& CT05
& 20.42 (20.79) & 0.37
& 19.93 (20.95) & 1.02
& 20.47 (20.67) & 0.20
& \textbf{20.49 (20.63)} & \textbf{0.14} \\
& CT06
& 18.04 (20.50) & 2.46
& 19.95 (20.64) & 0.69
& 19.78 (20.25) & 0.47
& \textbf{20.22 (20.45)} & \textbf{0.23} \\
& CT08
& 15.26 (21.64) & 6.38
& 21.12 (21.41) & 0.28
& 20.12 (21.17) & 1.05
& \textbf{21.11 (21.30)} & \textbf{0.20} \\
& CT09
& 16.39 (20.55) & 4.17
& 20.16 (20.57) & 0.40
& 19.85 (20.21) & 0.36
& \textbf{19.96 (20.20)} & \textbf{0.24} \\
& T2-23
& 19.58 (19.76) & 0.19
& 16.77 (19.57) & 2.80
& 19.23 (19.64) & 0.41
& \textbf{19.37 (19.50)} & \textbf{0.13} \\
& T2-633
& 15.31 (19.93) & 4.62
& 15.87 (19.94) & 4.07
& \textbf{19.72 (19.76)} & \textbf{0.04}
& 19.58 (19.75) & 0.17 \\
& T2-1173
& 15.27 (19.88) & 4.61
& 17.17 (19.87) & 2.70
& 19.68 (19.85) & 0.17
& \textbf{19.63 (19.79)} & \textbf{0.17} \\
& T2-1783
& 15.60 (19.86) & 4.26
& 16.72 (19.72) & 3.00
& 19.23 (19.53) & 0.30
& \textbf{19.40 (19.58)} & \textbf{0.18} \\
& T2-2273
& 18.52 (20.06) & 1.54
& 16.72 (19.95) & 3.23
& 19.71 (19.79) & 0.08
& \textbf{19.86 (19.94)} & \textbf{0.08} \\
\midrule

\multirow{10}{*}{Poisson}
& CT01
& 19.79 (25.66) & 5.87
& 23.84 (26.02) & 2.18
& 24.05 (25.05) & 1.01
& \textbf{26.07 (26.46)} & \textbf{0.40} \\
% & CT04
% & 20.41 (27.33) & 6.91
% & 24.91 (26.71) & 1.80
% & 24.74 (26.87) & 2.12
% & \textbf{27.36 (27.98)} & \textbf{0.62} \\
& CT05
& 22.71 (27.15) & 4.44
& 24.17 (28.03) & 3.86
& 24.63 (27.12) & 2.48
& \textbf{29.26 (29.60)} & \textbf{0.35} \\
& CT06
& 20.88 (26.51) & 5.63
& 22.36 (26.11) & 3.75
& \textbf{25.30 (26.00)} & \textbf{0.70}
& 25.94 (27.18) & 1.24 \\
& CT08
& 22.26 (26.25) & 3.99
& 24.27 (25.94) & 1.67
& 23.99 (25.58) & 1.59
& \textbf{26.28 (26.39)} & \textbf{0.10} \\
& CT09
& 18.82 (26.31) & 7.49
& 22.76 (26.34) & 3.57
& 23.86 (26.11) & 2.25
& \textbf{26.86 (27.63)} & \textbf{0.77} \\
& T2-23
& 24.44 (25.97) & 1.53
& 18.55 (25.96) & 7.41
& 23.35 (25.87) & 2.52
& \textbf{25.60 (26.92)} & \textbf{1.32} \\
& T2-633
& 17.41 (25.60) & 8.18
& 17.95 (26.55) & 8.60
& 23.14 (26.48) & 3.34
& \textbf{27.11 (28.14)} & \textbf{1.04} \\
& T2-1173
& 24.59 (25.99) & 1.39
& 20.30 (27.37) & 7.07
& 23.48 (26.66) & 3.17
& \textbf{27.29 (28.25)} & \textbf{0.96} \\
& T2-1783
& 23.37 (24.80) & 1.43
& 17.91 (25.62) & 7.71
& 23.00 (25.45) & 2.45
& \textbf{25.12 (26.36)} & \textbf{1.24} \\
& T2-2273
& 22.56 (25.98) & 3.42
& 17.94 (25.91) & 7.97
& 22.86 (25.74) & 2.88
& \textbf{25.94 (26.95)} & \textbf{1.01} \\
\midrule

\multirow{10}{*}{Impulse}
& CT01
& 14.95 (22.98) & 8.03
& 21.23 (22.90) & 1.66
& 22.38 (22.74) & 0.36
& \textbf{22.14 (22.38)} & \textbf{0.24} \\
% & CT04
% & 15.25 (24.64) & 9.39
% & 23.18 (24.64) & 1.46
% & \textbf{23.98 (24.38)} & \textbf{0.40}
% & 23.86 (24.38) & 0.52 \\
& CT05
& 14.50 (22.92) & 8.42
& 23.04 (23.95) & 0.92
& \textbf{22.81 (23.06)} & \textbf{0.26}
& 22.45 (23.04) & 0.59 \\
& CT06
& 14.39 (23.26) & 8.87
& 21.15 (23.05) & 1.90
& 22.09 (22.83) & 0.74
& \textbf{22.02 (22.35)} & \textbf{0.32} \\
& CT08
& 15.71 (24.26) & 8.55
& 23.32 (23.85) & 0.53
& 23.84 (24.10) & 0.25
& \textbf{23.43 (23.67)} & \textbf{0.24} \\
& CT09
& 14.63 (23.24) & 8.62
& 21.00 (22.85) & 1.86
& 22.56 (22.86) & 0.30
& \textbf{22.49 (22.75)} & \textbf{0.26} \\
& T2-23
& 14.64 (21.99) & 7.35
& 18.15 (22.01) & 3.86
& 21.40 (21.88) & 0.48
& \textbf{21.55 (21.67)} & \textbf{0.12} \\
& T2-633
& 14.51 (22.73) & 8.22
& 21.90 (22.53) & 0.63
& \textbf{22.19 (22.31)} & \textbf{0.12}
& 22.14 (22.26) & 0.13 \\
& T2-3
& 14.18 (22.60) & 8.42
& 19.81 (22.97) & 3.16
& \textbf{22.10 (22.34)} & \textbf{0.24}
& 21.30 (21.96) & 0.67 \\
& T2-1783
& 14.15 (21.98) & 7.83
& 18.02 (22.02) & 4.00
& 21.51 (21.80) & 0.29
& \textbf{21.21 (21.45)} & \textbf{0.24} \\
& T2-2273
& 14.20 (22.30) & 8.10
& 19.18 (22.21) & 3.03
& \textbf{21.86 (22.07)} & \textbf{0.21}
& 20.48 (21.92) & 1.43 \\
\bottomrule
\end{tabular}}
\end{table}

\subsection{Full Method-specific Oracle Comparisons}
\label{app:oracle_full}

Section~\ref{subsec:oracle_analysis} summarizes the method-specific oracle analysis using group-level averages.
Here, we provide the full per-image comparisons on representative natural and medical images.
The purpose of this analysis is to verify that the proposed criteria reduce the PSNR Gap by selecting stopping points closer to the oracle region, rather than by lowering the attainable oracle PSNR of the corresponding trajectory.

Tables~\ref{tab:representative_es_gap} and~\ref{tab:representative_medical_es_gap} report per-image results on natural and medical images, respectively.
Each entry is written as ES PSNR (Oracle PSNR), where the oracle PSNR is computed from the corresponding training trajectory of each method.
Thus, the selected reconstruction quality and the best attainable quality along the same method-specific trajectory can be compared simultaneously.

Table~\ref{tab:representative_es_gap} reports results on selected natural images under Gaussian, Poisson, and impulse noise.
Across different images and noise types, CSR-ES and ACR-ES generally achieve smaller oracle gaps than SURE-ES and WMV-ES while maintaining comparable oracle PSNR values.
This indicates that the improvement is not caused by degrading the reconstruction trajectory.
Instead, the proposed color-image criteria provide stopping signals that more accurately identify the near-oracle region of the corresponding trajectory.

Table~\ref{tab:representative_medical_es_gap} reports the corresponding results on selected CT and T2 images.
Since CSR-ES relies on RGB channel consistency, it is not applicable to these grayscale medical images.
MR-ES and ACR-ES instead provide reference-based stopping signals through mask-based and auxiliary-corruption-based references.
For Gaussian and impulse noise, MR-ES and ACR-ES generally select iterations close to their own oracle trajectories.
For Poisson noise, where the corruption is signal-dependent and the mask-reference signal can be less stable, ACR-ES often achieves substantially smaller gaps than SURE-ES, WMV-ES, and MR-ES.
These per-image results are consistent with the averaged trends in the main paper and further support the conclusion that the proposed criteria improve stopping accuracy by better identifying the best attainable region along each trajectory.

\subsection{Ablation Study: MSE-based and Bias-based Reference Scores}
\label{app:score_ablation}

We next study the effect of different reference scores. 
The proposed framework uses different reference constructions for natural color images and grayscale medical images. 
For natural color images, ACR-ES uses a bias-based cross-reference score. 
For grayscale medical images, MR-ES uses an MSE-based mask-reference score. 
This section justifies these choices.

For natural color images, we compare bias-based and MSE-based variants of ACR-ES. 
Table~\ref{tab:acr_bias_vs_mse_natural} shows that the bias-based ACR-ES consistently reduces the PSNR Gap compared with the MSE-based variant, especially under Gaussian and impulse noise. 
This suggests that, for RGB images, cross-reference bias information captures useful stopping behavior beyond direct MSE consistency.

\begin{table}[t]
\centering
\caption{Comparison between bias-based and MSE-based ACR-ES on natural image datasets. 
PSNR and PSNR Gap are reported as mean $\pm$ standard deviation across images. 
Positive $\Delta$ Gap indicates that ACR-ES (Bias) is closer to the oracle stopping point.}
\label{tab:acr_bias_vs_mse_natural}
\small
\setlength{\tabcolsep}{5.2pt}
\renewcommand{\arraystretch}{1.15}
\resizebox{\linewidth}{!}{
\begin{tabular}{llcccccc}
\toprule
\multirow{2}{*}{Noise}
& \multirow{2}{*}{Dataset}
& \multicolumn{2}{c}{ACR-ES (Bias)}
& \multicolumn{2}{c}{ACR-ES (MSE)}
& \multirow{2}{*}{$\Delta$ Gap $\uparrow$} \\
\cmidrule(lr){3-4}
\cmidrule(lr){5-6}
&
& PSNR $\uparrow$ & Gap $\downarrow$
& PSNR $\uparrow$ & Gap $\downarrow$
& \\
\midrule

\multirow{4}{*}{Gaussian}
& Set14
& \textbf{22.26($\pm$1.71)} & \textbf{0.25($\pm$0.13)}
& 21.44($\pm$1.67) & 1.06($\pm$0.79)
& \cellcolor{green!8}0.81 \\
& Set18
& \textbf{22.33($\pm$1.52)} & \textbf{0.21($\pm$0.12)}
& 21.65($\pm$0.84) & 0.89($\pm$1.24)
& \cellcolor{green!8}0.68 \\
& CBSD68
& \textbf{22.33($\pm$1.99)} & \textbf{0.30($\pm$0.28)}
& 21.15($\pm$1.58) & 1.48($\pm$1.44)
& \cellcolor{green!8}1.18 \\
& Urban100
& \textbf{21.36($\pm$1.31)} & \textbf{0.20($\pm$0.10)}
& 20.60($\pm$1.01) & 0.97($\pm$0.65)
& \cellcolor{green!8}0.76 \\
\addlinespace[2pt]
\midrule

\multirow{4}{*}{Poisson}
& Set14
& \textbf{24.52($\pm$2.35)} & \textbf{0.11($\pm$0.05)}
& 24.52($\pm$2.35) & 0.11($\pm$0.07)
& \cellcolor{green!8}0.00 \\
& Set18
& \textbf{26.55($\pm$2.33)} & \textbf{0.15($\pm$0.10)}
& 26.53($\pm$2.32) & 0.17($\pm$0.11)
& \cellcolor{green!8}0.02 \\
& CBSD68
& \textbf{24.66($\pm$2.33)} & \textbf{0.14($\pm$0.09)}
& 24.66($\pm$2.32) & 0.14($\pm$0.08)
& \cellcolor{green!8}0.00 \\
& Urban100
& 24.08($\pm$1.66) & 0.18($\pm$0.12)
& \textbf{24.09($\pm$1.64)} & \textbf{0.17($\pm$0.10)}
& \cellcolor{red!6}-0.01 \\
\addlinespace[2pt]
\midrule

\multirow{4}{*}{Impulse}
& Set14
& \textbf{23.65($\pm$1.74)} & \textbf{0.49($\pm$0.23)}
& 22.50($\pm$1.77) & 1.64($\pm$1.56)
& \cellcolor{green!8}1.15 \\
& Set18
& \textbf{24.09($\pm$1.52)} & \textbf{0.53($\pm$0.21)}
& 23.11($\pm$0.97) & 1.50($\pm$0.76)
& \cellcolor{green!8}0.97 \\
& CBSD68
& \textbf{23.63($\pm$2.02)} & \textbf{0.44($\pm$0.25)}
& 22.29($\pm$1.36) & 1.78($\pm$1.08)
& \cellcolor{green!8}1.34 \\
& Urban100
& \textbf{22.86($\pm$1.42)} & \textbf{0.34($\pm$0.17)}
& 22.23($\pm$1.08) & 0.97($\pm$0.66)
& \cellcolor{green!8}0.63 \\

\bottomrule
\end{tabular}}
\end{table}

\begin{table}[t]
\centering
\caption{Comparison between the MSE-based and bias-based mask-reference early stopping criteria on medical datasets. 
PSNR and PSNR Gap are reported as mean $\pm$ standard deviation across images. 
Positive $\Delta$ Gap=Gap(Bias)-Gap(MSE) indicates that the MSE-based criterion is closer to the oracle stopping point.}
\label{tab:mse_vs_bias_medical}
\small
\setlength{\tabcolsep}{5.5pt}
\renewcommand{\arraystretch}{1.15}
% \resizebox{\linewidth}{!}
{
\begin{tabular}{llcccccc}
\toprule
\multirow{2}{*}{Noise}
& \multirow{2}{*}{Dataset}
& \multicolumn{2}{c}{MR-ES (MSE)}
& \multicolumn{2}{c}{MR-ES (Bias)}
& \multirow{2}{*}{$\Delta$ Gap $\uparrow$} \\
\cmidrule(lr){3-4}
\cmidrule(lr){5-6}
&
& PSNR $\uparrow$ & Gap $\downarrow$
& PSNR $\uparrow$ & Gap $\downarrow$
& \\
\midrule

\multirow{4}{*}{Gaussian}
& MRI15
& \textbf{21.60($\pm$0.69)} & \textbf{0.41($\pm$0.29)}
& 19.69($\pm$1.69) & 2.33($\pm$1.74)
& \cellcolor{green!8}1.92 \\
& CT10
& \textbf{20.22($\pm$0.47)} & \textbf{0.35($\pm$0.28)}
& 19.66($\pm$1.36) & 0.92($\pm$1.26)
& \cellcolor{green!8}0.57 \\
& PD20
& \textbf{19.94($\pm$0.36)} & \textbf{0.33($\pm$0.24)}
& 18.03($\pm$0.92) & 2.24($\pm$0.93)
& \cellcolor{green!8}1.91 \\
& T2-20
& \textbf{19.62($\pm$0.19)} & \textbf{0.16($\pm$0.09)}
& 18.33($\pm$1.01) & 1.45($\pm$0.98)
& \cellcolor{green!8}1.29 \\
\addlinespace[2pt]
\midrule

\multirow{4}{*}{Poisson}
& MRI15
& \textbf{24.78($\pm$1.27)} & \textbf{1.35($\pm$1.04)}
& 23.33($\pm$1.83) & 2.80($\pm$1.65)
& \cellcolor{green!8}1.45 \\
& CT10
& \textbf{24.56($\pm$1.30)} & \textbf{2.03($\pm$0.85)}
& 23.36($\pm$1.71) & 3.24($\pm$1.60)
& \cellcolor{green!8}1.21 \\
& PD20
& \textbf{21.15($\pm$0.37)} & 4.49($\pm$0.37)
& 21.29($\pm$0.47) & \textbf{4.34($\pm$0.36)}
& \cellcolor{red!6}-0.15 \\
& T2-20
& 23.26($\pm$0.47) & 3.05($\pm$0.36)
& \textbf{23.53($\pm$0.78)} & \textbf{2.78($\pm$0.72)}
& \cellcolor{red!6}-0.27 \\
\addlinespace[2pt]
\midrule

\multirow{4}{*}{Impulse}
& MRI15
& \textbf{23.49($\pm$0.52)} & \textbf{0.46($\pm$0.47)}
& 18.98($\pm$2.91) & 4.96($\pm$2.96)
& \cellcolor{green!8}4.50 \\
& CT10
& \textbf{22.82($\pm$0.70)} & \textbf{0.31($\pm$0.17)}
& 16.77($\pm$1.81) & 6.36($\pm$1.50)
& \cellcolor{green!8}6.05 \\
& PD20
& \textbf{23.12($\pm$0.28)} & \textbf{0.26($\pm$0.19)}
& 16.51($\pm$1.28) & 6.87($\pm$1.34)
& \cellcolor{green!8}6.61 \\
& T2-20
& \textbf{21.89($\pm$0.26)} & \textbf{0.31($\pm$0.16)}
& 16.72($\pm$1.95) & 5.48($\pm$1.93)
& \cellcolor{green!8}5.17 \\
\bottomrule
\end{tabular}}
\end{table}

For grayscale medical images, we compare MSE-based and bias-based variants of MR-ES. 
Table~\ref{tab:mse_vs_bias_medical} shows that the MSE-based mask-reference criterion is generally more reliable than the bias-based variant. 
This is expected because single-channel medical images do not provide the same multi-channel structure as RGB images, making direct mask-reference consistency a more stable validation signal.

Together, these ablations justify the design used in the main paper: bias-based ACR-ES for natural color images and MSE-based MR-ES for grayscale medical images.

\begin{table}[t]
\centering
\caption{Sensitivity of ACR-ES on Set3 to the additional noise level $c$. 
Results are reported as mean $\pm$ standard deviation across images. 
PSNR Gap denotes the difference between Oracle PSNR and ACR-ES PSNR.
For Gaussian noise, the level of noise in the observation $y$ is $\sigma=0.26$ and the level of noise added to the augmented channels is $\tau=1.25\sigma+0.1c$.
For Poisson noise, the noise  in $y$ follows $\mathcal{P}(\lambda)$  with $\lambda=10$ and the noise  added to augmented channel follows $\mathcal{P}(\tau)$ with $\tau=1.25\lambda+c$.
For impulse noise, the probability of measurement corruption is $p=0.1$ in the observation $y$ and $\tau=1.25p+0.1c$ in the augmented channel.
}
\label{tab:set3c_acr_s_ablation}
\small
\setlength{\tabcolsep}{7pt}
\renewcommand{\arraystretch}{1.15}
\begin{tabular}{llccc}
\toprule
Noise 
& $c$ 
& \makecell{ACR-ES\\PSNR $\uparrow$}
& \cellcolor{gray!10}\makecell{Oracle\\PSNR $\uparrow$}
& \makecell{Gap $\downarrow$} \\
\midrule

\multirow{4}{*}{Gaussian}
& $0$    & 21.44($\pm$0.78) & \cellcolor{gray!10}21.55($\pm$0.80) & \textbf{0.11($\pm$0.05)} \\
& $0.10$ & 21.42($\pm$0.84) & \cellcolor{gray!10}21.56($\pm$0.77) & 0.14($\pm$0.08) \\
& $0.15$ & 21.34($\pm$0.85) & \cellcolor{gray!10}21.54($\pm$0.80) & 0.20($\pm$0.06) \\
& $0.20$ & \textbf{21.44($\pm$0.87)} & \cellcolor{gray!10}21.55($\pm$0.86) & \textbf{0.11($\pm$0.08)} \\
\midrule

\multirow{4}{*}{Poisson}
& $0$    & 22.91($\pm$1.82) & \cellcolor{gray!10}23.05($\pm$1.84) & 0.14($\pm$0.06) \\
& $0.10$ & \textbf{23.00($\pm$1.80)} & \cellcolor{gray!10}23.06($\pm$1.74) & \textbf{0.06($\pm$0.06)} \\
& $0.15$ & 22.81($\pm$1.73) & \cellcolor{gray!10}22.97($\pm$1.74) & 0.17($\pm$0.09) \\
& $0.20$ & 22.89($\pm$1.66) & \cellcolor{gray!10}23.07($\pm$1.69) & 0.18($\pm$0.18) \\
\midrule

\multirow{4}{*}{Impulse}
& $0$    & 22.72($\pm$0.73) & \cellcolor{gray!10}23.05($\pm$0.77) & 0.33($\pm$0.10) \\
& $0.10$ & \textbf{22.84($\pm$0.62)} & \cellcolor{gray!10}22.99($\pm$0.63) & \textbf{0.14($\pm$0.08)} \\
& $0.15$ & 22.81($\pm$0.65) & \cellcolor{gray!10}22.97($\pm$0.60) & 0.16($\pm$0.06) \\
& $0.20$ & 22.73($\pm$0.62) & \cellcolor{gray!10}22.95($\pm$0.63) & 0.22($\pm$0.01) \\

\bottomrule
\end{tabular}
\end{table}
\begin{table}[t]
\centering
\caption{
Sensitivity of ACR-ES on Set3 to auxiliary-noise mismatch under Gaussian noise.
Each entry is reported as ES PSNR (Oracle PSNR). 
The default setting used throughout the paper is $\tau_1=\tau_2=1.25\sigma$, highlighted in purple.
Off-diagonal entries correspond to mismatched auxiliary noise levels $\tau_1\neq \tau_2$. 
{
ACR-ES uses the bias score $\mathrm{bias}(y_2,\hat{x}_1)$.
Gray cells denote matched but non-default settings $\tau_1=\tau_2$.
}
}
\label{tab:n1_n2_mismatch_acr_psnr}
\small
\setlength{\tabcolsep}{8pt}
\renewcommand{\arraystretch}{1.20}
\resizebox{0.72\linewidth}{!}{
\begin{tabular}{c|ccc}
\toprule
\diagbox[width=5.8em,height=2.4em]{$\tau_2$}{$\tau_1$}
& $1.0\sigma$
& $1.25\sigma$
& $1.5\sigma$ \\
\midrule

$1.0\sigma$
& \cellcolor{gray!10}20.73 (21.83)
& 21.46 (21.60)
& \textbf{21.65 (21.65)} \\

$1.25\sigma$
& 21.19 (21.69)
& \cellcolor{blue!8}\textbf{21.52 (21.69)}
& \textbf{21.52 (21.69)} \\

$1.5\sigma$
& 20.73 (21.68)
& \textbf{21.34 (21.68)}
& \cellcolor{gray!10}21.32 (21.65) \\

\bottomrule
\end{tabular}}

\end{table}

\subsection{Sensitivity to Auxiliary Noise Levels}
\label{app:aux_sensitivity}

ACR-ES constructs auxiliary corrupted observations and therefore introduces auxiliary noise levels. 
A natural question is whether the method requires precise tuning of these levels. 
In the main experiments, we use the default noise level setting
\[
\tau_1=\tau_2=1.25\sigma,
\]
where $\sigma$ denotes the original noise level. 
Here, we study the sensitivity of ACR-ES to both auxiliary-noise magnitude and mismatch.

Table~\ref{tab:set3c_acr_s_ablation} studies the effect of varying the auxiliary noise magnitude by setting $\tau_1=\tau_2=1.25\sigma+c$ under Gaussian, Poisson, and impulse noise. 
The PSNR and PSNR Gap remain stable across different values of $c$, showing that ACR-ES is not overly sensitive to the precise auxiliary-noise strength.

We further evaluate mismatched auxiliary noise levels in Table~\ref{tab:n1_n2_mismatch_acr_psnr}, where $\tau_1$ and $\tau_2$ are varied independently. 
The diagonal entries correspond to matched auxiliary noise levels, and the highlighted entry corresponds to the default setting $\tau_1=\tau_2=1.25\sigma$. 
The off-diagonal entries show that ACR-ES remains effective even when $\tau_1\neq \tau_2$.

Moreover, we test whether ACR-ES requires the auxiliary corrupted references to have the same noise type as the observed image.
In this study, the observed image $y$ is fixed as the same Gaussian-corrupted input, while the auxiliary references $y_1$ and $y_2$ are generated using Gaussian, Poisson, or impulse noise.
Since the auxiliary references participate in the ACR-ES construction, changing their noise type may induce a different optimization trajectory; therefore, the oracle is computed separately for each row.
As shown in Table~\ref{tab:acr_noise_type_set3}, Poisson auxiliary references remain effective and achieve a gap comparable to the matched Gaussian setting on average.
Impulse auxiliary references are less stable, mainly due to failures on Starfish and Butterfly, but still work well on Leaves.
These results suggest that ACR-ES is reasonably robust to moderate auxiliary noise-type mismatch, although impulse-type auxiliary corruptions can be less reliable.

Table~\ref{tab:acr_fixed_gaussian_aux_set3} studies a reversed noise-type mismatch setting, where the auxiliary references $y_1$ and $y_2$ are fixed to Gaussian-corrupted observations, while the observed image $y$ is corrupted by Gaussian, Poisson, or impulse noise.
The Gaussian case is the matched setting, and the Poisson and impulse cases test whether ACR-ES remains effective when the observation and auxiliary references have different noise types.
ACR-ES remains reasonably robust under this mismatch: the average gap increases from $0.33$ dB in the matched Gaussian case to $0.49$ dB for Poisson and $0.61$ dB for impulse.
This indicates that ACR-ES does not require an exact match between the noise type of $y$ and that of the auxiliary references, although matched Gaussian references yield the most stable stopping behavior.

These results indicate that ACR-ES is robust to moderate auxiliary-noise scaling and mismatch, and does not require precise tuning of the auxiliary corruption levels.

\begin{table}[t]
\centering
\caption{
Auxiliary noise-type mismatch study of ACR-ES on Set3.
The observed image $y$ is fixed as the same Gaussian-corrupted input with noise level $\sigma=0.26$.
Only the auxiliary corrupted references $y_1$ and $y_2$ vary across rows: the matched Gaussian setting uses noise level $1.25\times0.26$, the Poisson setting uses noise level $1.25\times10$, and the impulse setting uses corruption probability $1.25\times0.1$.
Since different auxiliary references induce different ACR-ES trajectories, the oracle iteration and oracle PSNR are computed separately for each row.
PSNR Gap denotes the difference between the row-specific oracle PSNR and the selected PSNR.
}
\label{tab:acr_noise_type_set3}
\small
\setlength{\tabcolsep}{4.8pt}
\renewcommand{\arraystretch}{1.15}
\resizebox{\linewidth}{!}{
\begin{tabular}{llcccccc}
\toprule
\multirow{2}{*}{Noise type}
& \multirow{2}{*}{Image}
& \multicolumn{3}{c}{ACR-ES}
& \multicolumn{2}{c}{\cellcolor{gray!10}Oracle}
& \multirow{2}{*}{$\Delta$ Gap} \\
\cmidrule(lr){3-5}
\cmidrule(lr){6-7}
&
& Iter.
& PSNR $\uparrow$
& Gap $\downarrow$
& \cellcolor{gray!10}Iter.
& \cellcolor{gray!10}PSNR $\uparrow$
& \\
\midrule

\multirow{4}{*}{Gaussian}
& Starfish
& 1026
& 21.50
& 0.18
& \cellcolor{gray!10}1437
& \cellcolor{gray!10}21.68
& -- \\
& Butterfly
& 872
& 22.00
& 0.25
& \cellcolor{gray!10}1081
& \cellcolor{gray!10}22.25
& -- \\
& Leaves
& 1101
& 20.39
& 0.31
& \cellcolor{gray!10}1580
& \cellcolor{gray!10}20.70
& -- \\
& \textit{Average}
& --
& 21.29($\pm$0.82)
& 0.25($\pm$0.07)
& --
& \cellcolor{gray!10}21.54($\pm$0.78)
& -- \\
\midrule

\multirow{4}{*}{Poisson}
& Starfish
& 2193
& 22.25
& 0.32
& \cellcolor{gray!10}1674
& \cellcolor{gray!10}22.56
& 0.13 \\
& Butterfly
& 1712
& 23.60
& 0.12
& \cellcolor{gray!10}1451
& \cellcolor{gray!10}23.72
& -0.13 \\
& Leaves
& 2300
& 21.63
& 0.22
& \cellcolor{gray!10}1949
& \cellcolor{gray!10}21.85
& -0.09 \\
& \textit{Average}
& --
& 22.49($\pm$1.01)
& 0.22($\pm$0.10)
& --
& \cellcolor{gray!10}22.71($\pm$0.94)
& -0.03 \\
\midrule

\multirow{4}{*}{Impulse}
& Starfish
& 2720
& 20.33
& 1.40
& \cellcolor{gray!10}1328
& \cellcolor{gray!10}21.73
& 1.21 \\
& Butterfly
& 2201
& 21.22
& 1.18
& \cellcolor{gray!10}1170
& \cellcolor{gray!10}22.40
& 0.93 \\
& Leaves
& 1863
& 20.53
& 0.22
& \cellcolor{gray!10}1422
& \cellcolor{gray!10}20.75
& -0.09 \\
& \textit{Average}
& --
& 20.69($\pm$0.47)
& 0.93($\pm$0.63)
& --
& \cellcolor{gray!10}21.63($\pm$0.83)
& 0.69 \\

\bottomrule
\end{tabular}}
\end{table}

\begin{table}[t]
\centering
\caption{
Auxiliary noise-type mismatch study of ACR-ES on Set3 with fixed Gaussian auxiliary references.
The observed image $y$ is corrupted by different noise types: Gaussian noise with level $\sigma=0.26$, Poisson noise with level $10$, and impulse noise with corruption probability $0.1$.
The auxiliary corrupted references $y_1$ and $y_2$ are always generated using Gaussian noise with level $1.25\times0.26$.
Since different observed noise types induce different DIP trajectories, the oracle iteration and oracle PSNR are computed separately for each row.
PSNR Gap denotes the difference between the row-specific oracle PSNR and the selected PSNR.
}
\label{tab:acr_fixed_gaussian_aux_set3}
\small
\setlength{\tabcolsep}{4.8pt}
\renewcommand{\arraystretch}{1.15}
\resizebox{\linewidth}{!}{
\begin{tabular}{llcccccc}
\toprule
\multirow{2}{*}{Noise of $y$}
& \multirow{2}{*}{Image}
& \multicolumn{3}{c}{ACR-ES}
& \multicolumn{2}{c}{\cellcolor{gray!10}Oracle}
& \multirow{2}{*}{$\Delta$ Gap} \\
\cmidrule(lr){3-5}
\cmidrule(lr){6-7}
&
& Iter.
& PSNR $\uparrow$
& Gap $\downarrow$
& \cellcolor{gray!10}Iter.
& \cellcolor{gray!10}PSNR $\uparrow$
& \\
\midrule

\multirow{4}{*}{\makecell{Gaussian\\(matched)}}
& Starfish
& 1034
& 21.38
& 0.16
& \cellcolor{gray!10}1291
& \cellcolor{gray!10}21.54
& -- \\
& Butterfly
& 1013
& 22.05
& 0.09
& \cellcolor{gray!10}1288
& \cellcolor{gray!10}22.15
& -- \\
& Leaves
& 843
& 19.90
& 0.73
& \cellcolor{gray!10}1579
& \cellcolor{gray!10}20.63
& -- \\
& \textit{Average}
& --
& 21.11($\pm$1.10)
& 0.33($\pm$0.35)
& --
& \cellcolor{gray!10}21.44($\pm$0.76)
& -- \\
\midrule

\multirow{4}{*}{\makecell{Poisson\\(mismatched)}}
& Starfish
& 962
& 21.86
& 0.53
& \cellcolor{gray!10}1471
& \cellcolor{gray!10}22.39
& 0.37 \\
& Butterfly
& 894
& 22.53
& 0.55
& \cellcolor{gray!10}1412
& \cellcolor{gray!10}23.08
& 0.45 \\
& Leaves
& 1056
& 19.72
& 0.40
& \cellcolor{gray!10}1856
& \cellcolor{gray!10}20.12
& -0.33 \\
& \textit{Average}
& --
& 21.37($\pm$1.47)
& 0.49($\pm$0.08)
& --
& \cellcolor{gray!10}21.87($\pm$1.55)
& 0.16 \\
\midrule

\multirow{4}{*}{\makecell{Impulse\\(mismatched)}}
& Starfish
& 965
& 22.45
& 0.64
& \cellcolor{gray!10}1649
& \cellcolor{gray!10}23.09
& 0.47 \\
& Butterfly
& 886
& 22.76
& 0.63
& \cellcolor{gray!10}1646
& \cellcolor{gray!10}23.39
& 0.54 \\
& Leaves
& 1073
& 21.44
& 0.56
& \cellcolor{gray!10}1855
& \cellcolor{gray!10}22.00
& -0.17 \\
& \textit{Average}
& --
& 22.22($\pm$0.69)
& 0.61($\pm$0.04)
& --
& \cellcolor{gray!10}22.82($\pm$0.73)
& 0.28 \\

\bottomrule
\end{tabular}}
\end{table}

\section{Generalization to Forward Operators}
\label{app:operator_validation}

The theory in Appendix~\ref{app:reference_validation} is stated for
the denoising case in which the reference and the reconstruction live in the
same space. It is possible to extend the theory to a general inverse problem
$y = A(x) + \xi$, with the validation curve evaluated in the measurement
domain. This section records the generalization, identifies what it does and
does not control, and shows that the mask-reference theorem
(Theorem~\ref{thm:mask_validation}) and the denoising theorem
(Theorem~\ref{thm:single_reference_validation}) are both special cases.

\subsection{Setup}
\label{app:operator_setup}

Let $D$ denote the data used to fit the trajectory
$\{\hat x_t(D)\}_{t=1}^T$. The data $D$ may be a noisy measurement
$y_1 = A(x)+\eta_1$, but more generally it can be any training input from which a DIP trajectory is produced. If the reconstruction procedure uses
additional randomness (e.g.\ random network initialization, dropout,stochastic optimization), we fold this randomness into $D$ or condition on it, so that $\hat x_t$ is a measurable function of $D$. Let
$A:\mathbb{R}^n\to\mathbb{R}^m$ be a linear deterministic validation operator, and suppose an independent reference measurement \begin{equation}\nonumber
    y_2^A \;=\; A(x)+\eta_2
\end{equation}
is available, where $\eta_2 \in \mathbb{R}^m$ has independent, zero-mean,
sub-Gaussian entries with variance proxy $\sigma^2$, satisfies
$\mathbb{E}\eta_{2,i}^2 = \sigma^2$, and is \emph{independent of $D$ and
hence of the fitted trajectory $\{\hat x_t(D)\}$}.
Define the measurement-domain reconstruction error
\begin{equation}\nonumber
    R_A(t) \;:=\; \frac{1}{m}\,\bigl\|A(\hat x_t(D)) - A(x)\bigr\|^2,
\end{equation}
and the operator reference loss
\begin{equation}\nonumber
    V_t^A \;:=\; \frac{1}{m}\,\bigl\|A(\hat x_t(D)) - y_2^A\bigr\|^2.
\end{equation}
The image-domain risk $R(t) = \tfrac{1}{n}\|\hat x_t(D) - x\|^2$ is, in
general, \emph{not} the quantity that the reference loss tracks. We return
to this point in Section~\ref{app:operator_transfer}.

\subsection{Operator-form validation theorem}

\begin{theorem}[Operator reference validation]
\label{thm:operator_validation}
Under the setup above, there exists a universal constant $C>0$ such that,
with probability at least $1-\delta$, uniformly over $1\le t\le T$,
\begin{equation}\nonumber
\bigl|\,V_t^A - R_A(t) - \sigma^2\,\bigr|
\;\le\;
C\!\left[
M_T^A\sqrt{\frac{\log(T/\delta)}{m}}
\;+\;
\sigma^2\,\frac{\log(T/\delta)}{m}
\right],
\end{equation}
where
\begin{equation}\nonumber
M_T^A
\;:=\;
\sigma\sup_{1\le t\le T}\sqrt{R_A(t)} \;+\; \sigma^2.
\end{equation}
\end{theorem}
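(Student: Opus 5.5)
The plan is to repeat the argument of Theorem~\ref{thm:single_reference_validation} in the measurement domain, with $\mathbb{R}^m$ in place of $\mathbb{R}^n$. First condition on $D$ and on any auxiliary randomness folded into it. This fixes the whole trajectory $\{\hat x_t(D)\}_{t=1}^T$, and hence the finite set of vectors
\begin{equation}\nonumber
a_t := A(\hat x_t(D)) - A(x)\in\mathbb{R}^m,\qquad 1\le t\le T .
\end{equation}
Because $\eta_2$ is independent of $D$, its conditional law given $D$ is unchanged: it still has independent, zero-mean, sub-Gaussian entries with variance $\sigma^2$. Linearity and determinism of $A$ are used only to make $a_t$ a fixed, $D$-measurable vector. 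Since $y_2^A=A(x)+\eta_2$, expanding the square gives
\begin{equation}\nonumber
V_t^A-R_A(t)-\sigma^2
=
-\frac{2}{m}\langle a_t,\eta_2\rangle+\frac1m\left(\|\eta_2\|^2-m\sigma^2\right).
\end{equation}

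Next, bound the two terms separately for each fixed $t$, at confidence level $\delta/(2T)$ each.

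\textbf{Cross term.} Conditional on $D$, $\langle a_t,\eta_2\rangle$ is sub-Gaussian with variance proxy of order $\sigma^2\|a_t\|^2=\sigma^2 m R_A(t)$. Hoeffding's inequality therefore gives
\begin{equation}\nonumber
\frac{2}{m}\left|\langle a_t,\eta_2\rangle\right|\le C\sigma\sqrt{R_A(t)}\sqrt{\frac{\log(T/\delta)}{m}} .
\end{equation}

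\textbf{Quadratic term.} The variables $\eta_{2,i}^2-\sigma^2$ are centered, independent, and sub-exponential with $\psi_1$-norm of order $\sigma^2$. Bernstein's inequality gives a deviation of at most
\begin{equation}\nonumber
C\sigma^2\left[\sqrt{\frac{\log(T/\delta)}{m}}+\frac{\log(T/\delta)}{m}\right].
\end{equation}
This term does not depend on $t$, so it needs no union bound; it is harmless to include one anyway.

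Finally, bound $\sigma\sqrt{R_A(t)}\le\sigma\sup_s\sqrt{R_A(s)}$ and combine with the $\sigma^2\sqrt{\cdot}$ part of the quadratic bound to obtain the prefactor $M_T^A$. Take a union bound over $t=1,\dots,T$. Since the conditional probability bound holds for every realization of $D$, the same bound holds unconditionally, which is the claim.

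The main obstacle is the conditioning step. Unlike the mask case (Lemma~\ref{lem:heldout_independence}), here independence is an assumption, so the only care needed is that $\hat x_t$ is a measurable function of $D$ alone. This is exactly why the setup folds optimization randomness into $D$. A second subtlety is that $\log(T/\delta)\ge\log 2$ should be assumed (or absorbed into $C$), so that $\log(2T/\delta)\lesssim\log(T/\delta)$.

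Once this is done, the special cases follow directly. Taking $A=I$ recovers Theorem~\ref{thm:single_reference_validation}. Taking $A$ to be the restriction to held-out coordinates recovers Theorem~\ref{thm:mask_validation}, with the independence supplied by Lemma~\ref{lem:heldout_independence}.
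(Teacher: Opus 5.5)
Your proposal is correct and follows essentially the same route as the paper: condition on $D$, expand $V_t^A - R_A(t) - \sigma^2$ into the linear cross term $-\frac{2}{m}\langle a_t,\eta_2\rangle$ and the centered quadratic term, and bound the cross term by sub-Gaussian concentration at level $\delta/(2T)$ and the quadratic term by Bernstein. Then union-bound over $t$ and absorb $\sigma\sqrt{R_A(t)}+\sigma^2$ into $M_T^A$. The only difference is cosmetic: the paper spends $\delta/2$ once on the $t$-independent quadratic term instead of union-bounding it, which, as you observe, changes nothing.
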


\begin{proof}
Condition on $D$ and hence on the entire DIP trajectory
$\{\hat x_t(D)\}_{t=1}^T$. For each fixed $t$, write
\begin{equation}\nonumber
a_t \;:=\; A(\hat x_t(D)) - A(x),
\end{equation}
which is deterministic given $D$. Since $y_2^A = A(x) + \eta_2$,
\begin{equation}\nonumber
V_t^A
\;=\;
\frac{1}{m}\|a_t - \eta_2\|^2
\;=\;
\frac{1}{m}\|a_t\|^2
- \frac{2}{m}\langle a_t, \eta_2\rangle
+ \frac{1}{m}\|\eta_2\|^2,
\end{equation}
so that
\begin{equation}\nonumber
V_t^A - R_A(t) - \sigma^2
\;=\;
- \frac{2}{m}\langle a_t, \eta_2\rangle
\;+\;
\frac{1}{m}\bigl(\|\eta_2\|^2 - m\sigma^2\bigr).
\end{equation}

By hypothesis $\eta_2$ is independent of $D$, hence of $a_t$. The first
term is therefore a centered sub-Gaussian linear functional in $\eta_2$
with fixed weights $a_t$. Thus with probability at least $1-\delta/(2T)$,
\begin{equation}\nonumber
\left|\frac{2}{m}\langle a_t, \eta_2\rangle\right|
\;\le\;
C\sigma\sqrt{R_A(t)}\sqrt{\frac{\log(T/\delta)}{m}}.
\end{equation}
The second term is a centered average of sub-exponential random variables
$\eta_{2,i}^2 - \sigma^2$ and is $t$-independent. Bernstein concentration
gives, with probability at least $1-\delta/2$,
\begin{equation}\nonumber
\left|\frac{1}{m}\bigl(\|\eta_2\|^2 - m\sigma^2\bigr)\right|
\;\le\;
C\sigma^2\!\left[\sqrt{\frac{\log(T/\delta)}{m}} + \frac{\log(T/\delta)}{m}\right].
\end{equation}
Union-bounding the first inequality over $t=1,\dots,T$ contributes total
failure probability at most $\delta/2$; combining with the second event,
both hold simultaneously with probability at least $1-\delta$. On this event,
\[
\bigl|V_t^A - R_A(t) - \sigma^2\bigr|
\le
C\sigma\!\left(\!\sqrt{R_A(t)} + \sigma\!\right)\!\sqrt{\frac{\log(T/\delta)}{m}}
+
C\sigma^2\frac{\log(T/\delta)}{m},
\]
uniformly in $t$. Since
$\sigma(\sqrt{R_A(t)}+\sigma) \le \sigma\sup_t\sqrt{R_A(t)} + \sigma^2 = M_T^A$,
the stated bound follows.
\end{proof}

\subsection{Oracle inequality}
\label{app:operator_oracle}

\begin{corollary}[Near-oracle stopping in the measurement domain]
\label{cor:operator_oracle}
Under the event of Theorem~\ref{thm:operator_validation}, define
\begin{equation}\nonumber
\varepsilon_m^A
\;:=\;
C\!\left[M_T^A\sqrt{\frac{\log(T/\delta)}{m}}
+ \sigma^2\frac{\log(T/\delta)}{m}\right].
\end{equation}
Let $\hat t \in \arg\min_t V_t^A$ and $t_A^\star \in \arg\min_t R_A(t)$. Then
\begin{equation}\nonumber
R_A(\hat t) \;\le\; R_A(t_A^\star) + 2\varepsilon_m^A.
\end{equation}
\end{corollary}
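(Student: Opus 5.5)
The plan is to reuse the comparison argument from Corollary~\ref{cor:reference_oracle}, now applied to the measurement-domain quantities. We work on the event of Theorem~\ref{thm:operator_validation}, which holds with probability at least $1-\delta$. On that event the uniform two-sided bound
\[
\bigl|V_t^A - R_A(t) - \sigma^2\bigr| \le \varepsilon_m^A
\]
holds simultaneously for every $1\le t\le T$. The crucial feature is that $\varepsilon_m^A$ does not depend on $t$: the quantity $M_T^A$ is already a supremum over the trajectory. Because of this, the same slack can be used at both $\hat t$ and $t_A^\star$, even though $\hat t$ is data-dependent.

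The key steps, in order, are as follows.

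First, unpack the absolute-value bound into two one-sided inequalities valid for all $t$:
\[
R_A(t)\le V_t^A-\sigma^2+\varepsilon_m^A
\qquad\text{and}\qquad
V_t^A-\sigma^2\le R_A(t)+\varepsilon_m^A .
\]

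Second, apply the first inequality at $t=\hat t$. This is legitimate because the bound is uniform in $t$, so it holds in particular at the random index $\hat t$.

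Third, use the defining minimality of $\hat t$ to get $V_{\hat t}^A\le V_{t_A^\star}^A$. Here $t_A^\star$ is a fixed candidate in $\{1,\dots,T\}$.

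Fourth, apply the second inequality at $t=t_A^\star$.

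Chaining these steps gives
\[
R_A(\hat t)\le V_{\hat t}^A-\sigma^2+\varepsilon_m^A\le V_{t_A^\star}^A-\sigma^2+\varepsilon_m^A\le R_A(t_A^\star)+2\varepsilon_m^A .
\]
The additive constant $\sigma^2$ cancels, so knowledge of $\sigma$ is never needed to run the argument.

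There is essentially no analytic obstacle here; the substantive work, namely the concentration and the union bound, was done in Theorem~\ref{thm:operator_validation}. The only point requiring care is that $\hat t$ depends on $\eta_2$, so one cannot bound $V_{\hat t}^A$ by a fixed-$t$ concentration statement. This is resolved exactly by the uniformity over $t$ in the theorem, and I would state explicitly that this is why the union bound was needed. I would also remark that the conclusion concerns $R_A$ rather than the image-domain risk $R$: converting it to a statement about $R(\hat t)$ requires an additional transfer condition analogous to Corollary~\ref{cor:mask_full_risk}.
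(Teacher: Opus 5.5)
Your proposal is correct and follows the paper's proof essentially verbatim: unpack the uniform two-sided bound into $R_A(t)\le V_t^A-\sigma^2+\varepsilon_m^A$ and $V_t^A-\sigma^2\le R_A(t)+\varepsilon_m^A$ for all $t$. Then chain through $V_{\hat t}^A\le V_{t_A^\star}^A$ to obtain $R_A(\hat t)\le R_A(t_A^\star)+2\varepsilon_m^A$. Your remarks on why uniformity in $t$ is needed for the data-dependent $\hat t$, and on the need for a separate transfer condition to reach the image-domain risk, are also accurate.
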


\begin{proof}
On the event of Theorem~\ref{thm:operator_validation},
$R_A(t)\le V_t^A - \sigma^2 + \varepsilon_m^A$ and
$V_t^A - \sigma^2 \le R_A(t)+\varepsilon_m^A$
hold simultaneously for all $t$. Hence
\[
R_A(\hat t)\le V_{\hat t}^A - \sigma^2 + \varepsilon_m^A
\le V_{t_A^\star}^A - \sigma^2 + \varepsilon_m^A
\le R_A(t_A^\star) + 2\varepsilon_m^A.\qedhere
\]
\end{proof}

\subsection{Transfer from measurement-domain risk to image-domain risk}
\label{app:operator_transfer}

Corollary~\ref{cor:operator_oracle} controls $R_A$, not the image-domain risk
$R(t)=\tfrac{1}{n}\|\hat x_t - x\|^2$. The two coincide when $A$ is the
identity and the dimensions match, but in general the reference loss cannot
detect overfitting in directions to which $A$ is insensitive. A transfer
condition is therefore needed in the same spirit as
Corollary~\ref{cor:mask_full_risk}. We give a two-sided version that yields
the cleanest comparison to the image-domain oracle, followed by a one-sided
relaxation that is more honest about what is achievable for poorly
conditioned $A$.

\begin{corollary}[Image-domain near-oracle stopping; two-sided transfer]
\label{cor:operator_transfer}
Suppose that, along the trajectory,
\begin{equation}\nonumber
\sup_{1\le t\le T}\bigl|R(t)-c\,R_A(t)\bigr|\le\kappa
\end{equation}
for some operator-dependent constants $c>0$ and $\kappa\ge 0$. Then the
operator reference stopping time $\hat t\in\arg\min_t V_t^A$ satisfies
\begin{equation}\nonumber
R(\hat t)
\;\le\;
\min_{1\le t\le T}R(t)
\;+\;
2\kappa
\;+\;
2c\,\varepsilon_m^A.
\end{equation}
\end{corollary}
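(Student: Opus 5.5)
The plan mirrors the proof of Corollary~\ref{cor:mask_full_risk}: chain the two-sided transfer assumption with the measurement-domain oracle inequality of Corollary~\ref{cor:operator_oracle}, working throughout on the high-probability event of Theorem~\ref{thm:operator_validation}. Let $t^\star\in\arg\min_t R(t)$ denote the image-domain oracle time. On that event we have $R_A(\hat t)\le R_A(t_A^\star)+2\varepsilon_m^A$. Since $t_A^\star$ minimizes $R_A$, it follows that $R_A(t_A^\star)\le R_A(t^\star)$, and therefore
\begin{equation}\nonumber
R_A(\hat t)\le R_A(t^\star)+2\varepsilon_m^A .
\end{equation}

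Next, I will use the transfer hypothesis in both directions. Applied at $\hat t$, it gives $R(\hat t)\le c\,R_A(\hat t)+\kappa$. Applied at $t^\star$, it gives $c\,R_A(t^\star)\le R(t^\star)+\kappa$. Multiplying the displayed inequality by $c>0$ and chaining these bounds yields
\begin{equation}\nonumber
R(\hat t)\le c\,R_A(t^\star)+2c\,\varepsilon_m^A+\kappa\le R(t^\star)+2\kappa+2c\,\varepsilon_m^A .
\end{equation}
Because $R(t^\star)=\min_t R(t)$, this is exactly the claimed bound.

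The argument is purely deterministic once we are on the event of Theorem~\ref{thm:operator_validation}, so the probability statement is inherited, at least $1-\delta$. There is no real obstacle here. The only points needing care are that $c>0$, so that multiplying preserves the inequality, and that $\hat t$ is computed from $V^A$ alone, so no further randomness enters. Note also that the $\varepsilon_m^A$ term picks up the factor $c$ because it is measured in the $R_A$ scale, while $\kappa$ is already in the $R$ scale and so appears without rescaling.
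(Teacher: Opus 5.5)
Your proposal is correct and follows essentially the same argument as the paper: apply the transfer hypothesis at $\hat t$, invoke Corollary~\ref{cor:operator_oracle} together with $R_A(t_A^\star)\le R_A(t^\star)$, and apply the transfer hypothesis again at $t^\star$, then chain the three inequalities. Your remark that the bound holds on the event of Theorem~\ref{thm:operator_validation} makes explicit what the paper leaves implicit.
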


\begin{proof}
Let $t^\star\in\arg\min_t R(t)$. By the comparison assumption,
$R(\hat t)\le c\,R_A(\hat t)+\kappa$.
Corollary~\ref{cor:operator_oracle} gives
$R_A(\hat t)\le R_A(t_A^\star)+2\varepsilon_m^A \le R_A(t^\star)+2\varepsilon_m^A$,
and the comparison assumption again gives
$R_A(t^\star)\le \tfrac{1}{c}(R(t^\star)+\kappa)$.
Combining the three inequalities yields the stated bound.
\end{proof}

\begin{remark}[One-sided relaxation]
\label{rem:one_sided_transfer}
The two-sided comparison in Corollary~\ref{cor:operator_transfer} is needed
only for the comparison to the image-domain oracle $R(t^\star)$. If one is
satisfied with an absolute bound on $R(\hat t)$ that uses the
measurement-domain oracle as the benchmark, only the one-sided condition
\begin{equation}\nonumber
\sup_{1\le t\le T}\bigl(R(t)-c\,R_A(t)\bigr)\le\kappa
\end{equation}
is required, in which case
\begin{equation}\nonumber
R(\hat t) \;\le\; c\,R_A(t_A^\star) + 2c\,\varepsilon_m^A + \kappa.
\end{equation}
This one-sided form is the one that follows from a lower singular-value
bound on $A$ alone (see Section~\ref{app:operator_interpretation} below).
\end{remark}

\subsection{Linear-operator interpretation}
\label{app:operator_interpretation}

For a linear $A$ with smallest non-zero singular value $\sigma_{\min}(A)$,
decompose the reconstruction error as
\(\hat x_t - x = u_t^{R} + u_t^{N}\) with
\(u_t^{R}\in\ker(A)^\perp\) and \(u_t^{N}\in\ker(A)\). Then
\(R_A(t) = \tfrac{1}{m}\|A u_t^{R}\|^2\) and
\(R(t) = \tfrac{1}{n}(\|u_t^{R}\|^2 + \|u_t^{N}\|^2)\). Using
\(\sigma_{\min}(A)^2\|u_t^{R}\|^2 \le \|Au_t^{R}\|^2\) gives the
\emph{one-sided} bound
\begin{equation}\nonumber
R(t) \;\le\; \frac{m}{n\,\sigma_{\min}(A)^2}\,R_A(t)
\;+\; \frac{\|u_t^{N}\|^2}{n}.
\end{equation}
This corresponds to the one-sided comparison in
Remark~\ref{rem:one_sided_transfer}, with
\(c = m/(n\sigma_{\min}(A)^2)\) and \(\kappa = \sup_t \|u_t^N\|^2/n\) being
controlled by whatever mechanism limits null-space artifacts along the
trajectory (e.g.\ DIP's spectral bias, or explicit regularization).

The reverse direction of the two-sided comparison required by
Corollary~\ref{cor:operator_transfer} is \emph{not} implied by
$\sigma_{\min}(A)$ alone. It depends on the
ratio $\sigma_{\max}(A)/\sigma_{\min}(A)$ on the active subspace, together
with a bound on $\|u_t^R\|^2$ along the trajectory. In particular, the
two-sided comparison holds when $A$ is approximately isometric on the error
directions traced out by the trajectory. For severely ill-conditioned $A$,
only the weaker one-sided guarantee in Remark~\ref{rem:one_sided_transfer}
is available without additional structural assumptions.

\subsection{Recovery of the denoising and mask-reference theorems}

Theorem~\ref{thm:operator_validation} reduces to the denoising and
mask-reference theorems for two natural choices of $(D, A)$.

\begin{enumerate}
    \item[(i)] \textbf{Denoising.} Take $D = y_1 = x + \eta_1$, $A=I$,
    $m = n$, and $y_2^A = x + \eta_2$. Then $R_A(t) = R(t)$,
    $V_t^A = V_t$, and Theorem~\ref{thm:operator_validation} reduces to
    Theorem~\ref{thm:single_reference_validation}. The transfer condition
    in Corollary~\ref{cor:operator_transfer} is satisfied with $c=1$,
    $\kappa = 0$.

    \item[(ii)] \textbf{Mask reference.} Condition on the mask $M$. Let
    $P_H : \mathbb{R}^n \to \mathbb{R}^m$, with $m = \sum_i H_i$, denote
    the restriction operator that extracts the held-out coordinates. Take the training data to be $D = (M, M\odot y)$, take $A = P_H$ as the
    validation operator, and take
    \begin{equation}\nonumber
        y_2^A = P_H y = P_H x + P_H \xi.
    \end{equation}
    The validation noise $\eta_2 = P_H \xi$ is independent of the
    retained training data $D$ because the pixel noise is independent
    across coordinates and the mask is conditioned on
    (Lemma~\ref{lem:heldout_independence}). With this identification,
    \[
        V_t^A
        = \frac{1}{m}\|P_H \hat x_t - P_H y\|^2
        = \frac{1}{m}\|H \odot (\hat x_t - y)\|^2
        = V_t^{\rm mask},
    \]
    so Theorem~\ref{thm:operator_validation} reduces to
    Theorem~\ref{thm:mask_validation}, and
    Corollary~\ref{cor:operator_transfer} reduces to the representativeness
    transfer in Corollary~\ref{cor:mask_full_risk}.
\end{enumerate}

The recovery of MR-ES uses the fact that the proof of
Theorem~\ref{thm:operator_validation} does not require $D$ to be of the form
$A(x) + \eta_1$; it only requires $\eta_2 \perp D$ and the
sub-Gaussian assumption on $\eta_2$. The training data (which generates
$D$) and the validation operator $A$ may therefore differ, as they do in
MR-ES.

The unifying object is the triple (training data $D$, validation operator
$A$, independent reference $y_2^A$ with conditional mean $A(x)$ and
sub-Gaussian noise $\eta_2$ independent of $D$).

\end{document}